\newtheorem{theorem}{Theorem}
\newtheorem{proposition}{Proposition}
\newtheorem{definition}{Definition}
\newtheorem{example}{Example}
\newtheorem{lemma}{Lemma}
\newtheorem{fact}{Fact}
\algnewcommand\CommentLine[1]{
     \hspace{2pt}$\triangleright$ \hspace{2pt}\text{#1}
     }
\newcommand{\poisson}{\text{Poisson}}
\newcommand{\Poisson}{\text{Poisson}}
\newcommand{\nmu}{\norm{\hat{\mu}}}
\newcommand{\R}{\mathbb{R}}
\newcommand{\E}{\mathbb{E}}
\newcommand{\Var}{\text{Var}}
\newcommand{\eps}{\epsilon}
\newcommand{\norm}[1]{\| #1 \|}
\newcommand{\ip}[2]{\langle {#1} , {#2} \rangle}
\newcommand{\new}{\text{new}}
\newcommand{\edit}[1]{{\color{red}#1}}
\DeclarePairedDelimiterX{\infdivx}[2]{(}{)}{%
  #1\;\delimsize\|\;#2%
}
\newcommand{\DKL}{D_{KL}\infdivx}
\title{Sample Amplification: \\Increasing Dataset Size even when Learning is Impossible}
\author{Brian Axelrod \hspace{20pt} Shivam Garg \hspace{20pt} Vatsal Sharan \hspace{20pt} Gregory Valiant \vspace{5pt}\\   \texttt{\{baxelrod, shivamgarg, vsharan, valiant\}@stanford.edu}\vspace{5pt} \\ Stanford University \thanks{This work was supported by NSF awards AF-1813049 and CCF-1704417, an ONR Young Investigator Award N00014-18-1-2295, and an NSF Graduate Fellowship.}}
\date{}
\newcommand{\algdiscrete}{
\begin{algorithm}[H]
  \caption{Sample Amplification for Discrete Distributions} 
  \label{alg:discrete} 
  \textbf{Input}: $X_{4n} = (x_1, x_2, \dots, x_{4n})$, where $x_i \leftarrow D$, for any discrete distribution $D$ over $[k]$.\\
   \textbf{Output}: $Z_m = (x_1', x_2', \dots, x_m')$, such that $D_{TV}(D^m, Z_m) \leq \frac{1}{3}$.
   \begin{algorithmic}[1]
   
   
 \Procedure{AmplifyDiscrete}{$X_{4n}$}
\State $N_1, N_2 \leftarrow \poisson(n)$
\Comment{Draw two i.i.d samples $N_1$ and $N_2$ from $\poisson({n})$}
\State $N := N_1 + N_2$

\If {$N \leq {4n}$ }
\State $X_{N_1} := (x_1, x_2, \dots, x_{N_1})$ \State $X_{N_2} := (x_{N_1+1}, x_{N_1+2}, \dots, x_{N_1+N_2})$
\vspace{5pt}
\Else
\Comment{Uninteresting case: happens with low probability}
\State $X_{N_1} := \underbrace{(x_1, x_1, \dots, x_{1})}_{N_1 \text{ times}}$ 
\State $X_{N_2} := \underbrace{(x_1, x_1, \dots, x_{1})}_{N_2 \text{ times}}$\EndIf
\vspace{-5pt}

\State $r := 8(m - n)$
\State $(x_1', x_2', \dots, x_{N+R}')$ = \Call{AmplifyDiscretePoissonized}{$X_{N_1}, X_{N_2}, r, n$}

\State \CommentLine{Amplify first $N_1 + N_2$ samples to $N_1+N_2+R$ samples, for $R$ roughly distributed as $\poisson(r)$}
\vspace{5pt}

\State $R_1 := max(R, r/8)$

\If{$R < r/8$}
\Comment{Uninteresting case: happens with low probability}
\vspace{2 pt}
\State $(x_1', x_2', \dots, x_{N+R_1}') := (x_1', x_2', \dots, x_{N+R}', \underbrace{x_1, x_1, \dots, x_1}_{\frac{r}{8} - R \text{ times}})$
\EndIf

\State $(x_{N+R_1+1}', x_{N+R_1+2}', \dots, x_{m}') := (x_{N+1}, x_{N+2}, \dots, x_{4n - (R_1-\frac{r}{8})})$
\State \CommentLine{Add the remaining samples to get $4n+r/8$ samples in total}
\vspace{5pt}
\State $Z_m := (x_1', x_2', \dots, x_m')$

\State \textbf{return} $Z_M$
 \EndProcedure
 
 \Statex

  \Procedure{AmplifyDiscretePoissonized}{$X_{N_1}, X_{N_2}, r, n$}
\State \CommentLine{Generates approximately $\poisson(r)$ more samples given $N_1+N_2$ input samples}
 \State \CommentLine{$X_{N_1} = (x_1, x_2, \dots, x_{N_1})$, $X_{N_2} = (x_{N_1+1}, x_{N_1+2}, \dots, x_{N_1+N_2})$, and $r = 8(m - n) $}
 \vspace{5pt}
  \State $\text{count}_j := \sum_{i=1}^{N_1} \mathbbm{1}(x_i = j)$, for $j \in [k]$
\Comment{Find the count of each element in first $N_1$ samples}
\vspace{5pt}
\State $\hat{p}_j := \frac{count_j}{n}$, for $j \in [k]$ 
\State $\hat{z}_j \leftarrow \poisson(\hat{p}_j r)$, for $j \in [k]$ 

\State $R:= \sum_{j=1}^k \hat{z}_j$
  
  \State $(x_1', x_2', \dots, x_{N_1}') := (x_1, x_2, \dots, x_{N_1})$
\State $(x_{N_1+1}', \dots, x_{N_1+N_2+R}') := \text{RandomPermute}((x_{N_1+1}, x_{N_1+2}, \dots, x_{N_1+N_2}, \underbrace{1, 1, \dots, 1}_{\hat{z}_1 \text{ times}}, \dots, \underbrace{ k, k, \dots, k}_{\hat{z}_k \text{ times}} ))$
 \State \textbf{return} $(x_1', x_2', \dots, x_{N_1+N_2+R}')$
  \EndProcedure
 
  \end{algorithmic}
\end{algorithm}
}
\newcommand{\alggaussiannonmod}{
\begin{algorithm*}
  \caption{Sample Amplification for Gaussian with Unknown Mean and Fixed Covariance Without Modifying Input Samples} 
  \label{alg:gaussian2}
  \textbf{Input}: $X_n = (x_1, x_2, \dots, x_n)$, where $x_i \leftarrow N(\mu, \Sigma_{d \times d})$.\\
   \textbf{Output}: $Z_m = (x_1', x_2', \dots, x_m')$, such that $D_{TV}(D^m, Z_m) \leq \frac{1}{3}$, where $D$ is $N(\mu, \Sigma_{d \times d})$
  \begin{algorithmic}[1]
 \Procedure{AmplifyGaussian2}{$X_n$}
 \State $r := m - n$
 \State  $\hat{\mu} := \sum_{i=1}^{\frac{n}{2}} \frac{x_i}{n/2}$
 \State $x_i' := x_i  $, for $i \in \{1, 2, \dots, \frac{n}{2}\}$
 \State $X_{\text{remaining}} := (x_{\frac{n}{2}+1}, x_{\frac{n}{2}+2}, \dots, x_n)$

 \For{$i=\frac{n}{2}+1$ \text{\textbf{ to }} m}
 \State $T \leftarrow \text{Bernoulli}(\frac{2r}{r+n/2}) $
 \Comment{Set $T = 1$ with probability $\frac{2r}{r+n/2}$, and $0$ otherwise}
 \If {$T \text{ equals } 1$}
 \State $x_i' \leftarrow N(\hat{\mu}, \Sigma_{d \times d})$
 \Else 
 \If{$X_{\text{remaining}}$ is not empty}
 \State $x_i' :=$ Random Element Drawn without Replacement from $X_{\text{remaining}}$
 \Else
 \State $x_i' := x_1$
 \Comment Happens with small probability
 \EndIf
 \EndIf
 \EndFor
 \State $Z_m := (x_1', x_2', \dots, x_m')$
 \State \textbf{return} $Z_m$
 \EndProcedure
  \end{algorithmic}
\end{algorithm*}
}
\newcommand{\alggaussian}{
\begin{algorithm*}
  \caption{Sample Amplification for Gaussian with Unknown Mean and Fixed Covariance} 
    \label{alg:gaussian1}
  \textbf{Input}: $X_n = (x_1, x_2, \dots, x_n)$, where $x_i \leftarrow N(\mu, \Sigma_{d \times d})$.\\
   \textbf{Output}: $Z_m = (x_1', x_2', \dots, x_m')$, such that $D_{TV}(D^m, Z_m) \leq \frac{1}{3}$, where $D$ is $N(\mu, \Sigma_{d \times d})$
  \begin{algorithmic}[1]
 \Procedure{AmplifyGaussian}{$X_n$}
 
 \State  $\hat{\mu} := \sum_{i=1}^{n} \frac{x_i}{n}$
 \State $\epsilon_i \leftarrow N(0, \Sigma_{d \times d})$, for  $i \in \{n+1, n+2, \dots, m\}$
 \Comment{Draw $m-n$ i.i.d samples from $N(0, \Sigma_{d \times d})$}
 \State $x_i' := \hat{\mu} + \epsilon_i$, for $i \in \{n+1, n+2, \dots, m\}$
 \State $x_i' := x_i - \sum_{j=n+1}^m{\frac{\epsilon_j}{n}} $, for $i \in \{1, 2, \dots, n\}$
 \Comment{Remove correlations between old and new samples}
 \State \textbf{return} $Z_m:= (x_1', x_2', \dots, x_m')$
 \EndProcedure
  \end{algorithmic}
\end{algorithm*}
}
\begin{document}

\maketitle
\thispagestyle{empty}
\begin{abstract}
Given data drawn from an unknown distribution, $D$, to what extent is it possible to ``amplify'' this dataset and faithfully output an even larger set of samples that appear to have been drawn from $D$?  We formalize this question as follows:  an $(n,m)$ \emph{amplification procedure} takes as input $n$ independent draws from an unknown distribution $D$, and outputs a set of $m > n$ ``samples''.  An amplification procedure is \emph{valid} if no algorithm can distinguish the set of $m$ samples produced by the amplifier from a set of $m$ independent draws from $D$, with probability greater than $2/3$.  Perhaps surprisingly, in many settings, a valid amplification procedure exists, even in the regime where the size of the input dataset, $n$, is significantly less than what would be necessary to \emph{learn} distribution $D$ to non-trivial accuracy.  Specifically we consider two fundamental settings: the case where $D$ is an arbitrary discrete distribution supported on $\le k$ elements, and the case where $D$ is a $d$-dimensional Gaussian with unknown mean, and fixed covariance matrix.  In the first case, we show that an $\left(n, n + \Theta(\frac{n}{\sqrt{k}})\right)$ amplifier exists.   In particular, given $n=O(\sqrt{k})$ samples from $D$, one can output a set of $m=n+1$ datapoints, whose total variation distance from the distribution of $m$ i.i.d. draws from $D$ is a small constant, despite the fact that one would need quadratically more data, $n=\Theta(k)$, to \emph{learn} $D$ up to small constant total variation distance. In the Gaussian case, we show that an $\left(n,n+\Theta(\frac{n}{\sqrt{d}} )\right)$ amplifier exists, even though learning the distribution to small constant total variation distance requires $\Theta(d)$ samples.  In both the discrete  and Gaussian settings, we show that these results are tight, to constant factors.  Beyond these results, we describe potential applications of such data amplification, and formalize a number of curious directions for future research along this vein. 
\end{abstract}

\newpage
\setcounter{page}{1}
\section{Learning, Testing, and Sample Amplification}

How much do you need to know about a distribution, $D$, in order to produce a dataset of size $m$ that is indistinguishable from a set of independent draws from $D$?  Do you need to \emph{learn} $D$, to nontrivial accuracy in some natural metric, or does it suffice to have access to a smaller dataset of size $n<m$ drawn from $D$, and then ``amplify'' this dataset to create one of size $m$?  In this work we formalize this question, and show that for two natural classes of distribution, discrete distributions with bounded support, and $d$-dimensional Gaussians, non-trivial data ``amplification'' is possible even in the regime in which you are given too few samples to learn.

From a theoretical perspective, this question is related to the meta-question underlying work on distributional property testing and estimation: \emph{To answer basic hypothesis testing or property estimation questions regarding a distribution $D$, to what extent must one first learn $D$, and can such questions be reliably answered given a relatively modest amount of data drawn from $D$?}  Much of the excitement surrounding distributional property testing and estimation stems from the fact that, for many such testing and estimation questions, a surprisingly small set of samples from $D$ suffices---significantly fewer samples than would be required to learn $D$.  These surprising answers have been revealed over the past two decades.  The question posed in our work fits with this body of work, though instead of asking how much data is required to perform a hypothesis test, we are asking how much data is required to \emph{fool} an optimal hypothesis test---in this case an ``identity tester'' which knows $D$ and is trying to distinguish a set of $m$ independent samples drawn from $D$, versus $m$ datapoints constructed in some other fashion.

From a more practical perspective, the question we consider also seems timely.  Deep neural network based systems, trained on a set of samples, can be designed to perform many tasks, including testing whether a given input was drawn from a distribution in question (i.e. ``discrimination''), as well as sampling (often via the popular Generative Adversarial Network (GAN) approach).  There are many relevant questions regarding the extent to which current systems are successful  in accomplishing these tasks, and the question of how to quantify the performance of these systems is still largely open.   In this work, however, we ask a different question: Suppose a system \emph{can} accomplish such a task---what would that actually mean?  If a system can produce a dataset that is indistinguishable from a set of $m$ independent draws from a distribution, $D$, does that mean the system knows $D$, or are there other ways of accomplishing this task?

\subsection{Formal Problem Definition}

We begin by formally stating two essentially equivalent definitions of sample amplification and then provide an illustrative example.  Our first definition states that a function $f$ mapping a set of $n$ datapoints to a set of $m$ datapoints is a valid amplification procedure for a class of distributions $\mathcal{C}$, if for all $D \in \mathcal{C}$, letting $X_n$ denote the random variable corresponding to $n$ independent draws from $D$, the distribution of $f(X_n)$ has small total variation distance\footnote{We overload the notation  $D_{TV}(\cdot, \cdot)$ for total variation distance, and also use it when the argument is a random variable instead of the distribution of the random variable, whenever convenient.} to the distribution defined by $m$ independent draws from $D$.  


\begin{definition}\label{def1}
A class $\mathcal{C}$ of distributions over domain $S$ admits an $(n,m)$ \emph{amplification procedure} if there exists a (possibly randomized) function $f_{\mathcal{C},n,m} : S^n \rightarrow S^m$, mapping a dataset of size $n$ to a dataset of size $m$, such that for every distribution $D \in \mathcal{C}$, $$D_{TV}\left(f_{\mathcal{C},n,m}(X_n), D^{m} \right) \le 1/3,$$ where $X_n$ is the random variable denoting $n$ independent draws from $D$, and $D^m$ denotes the distribution of $m$ independent draws from $D$.   If no such function $f_{\mathcal{C},n,m}$ exists, we say that $\mathcal{C}$ \emph{does not admit} an $(n,m)$ amplification scheme.

\end{definition}

Crucially, in the above definition we are considering the random variable $f(X_n)$ whose randomness comes from the randomness of $X_n$, as well as any randomness in the function $f$ itself.  For example, every class of distributions admits an $(n,n)$ amplification procedure, corresponding to taking the function $f$ to be the identity function.   If, instead, our definition had required that the \emph{conditional} distribution of $f(X_n)$ given $X_n$ be close to $D^m$, then the above definition would simply correspond to asking how well we can \emph{learn} $D$, given the $n$ samples denoted by $X_n$.

Definition~\ref{def1} is also equivalent, up to the choice of constant $1/3$ in the bound on total variation distance, to the following intuitive formulation of sample amplification as a game between two parties: the ``amplifier'' who will produce a dataset of size $m$, and a ``verifier'' who knows $D$ and will either accept or reject that dataset.   The verifier's protocol, however, must satisfy the condition that given $m$ independent draws from  the true distribution in question, the verifier must accept with probability at least $3/4$, where the probability is with respect to both the randomness of the set of samples, and any internal randomness of the verifier. We briefly describe this formulation, as it parallels the pseudo-randomness framework, and a number of natural directions for future work---such as if the verifier is computationally bounded, or only has sample access to $D$---are easier to articulate in this setting.

\begin{definition}\label{def2}
The \emph{sample amplification} game consists of two parties, an \emph{amplifier} corresponding to a function $f_{n,m}: S^n \rightarrow S^m$ which maps a set of $n$ datapoints in domain $S$ to a set of $m$ datapoints, and a \emph{verifier} corresponding to a function $v:S^m \rightarrow \{ACCEPT, REJECT\}$.  We say that a verifier $v$ is \emph{valid} for distribution $D$ if, when given as input a set of $m$ independent draws from $D$, the verifier accepts with probability at least $3/4$, where the probability is over both the randomness of the draws and any internal randomness of $v$: $$\Pr_{X_m \leftarrow D^m}[v(X_m) = ACCEPT] \ge 3/4.$$   A class $\mathcal{C}$ of distributions over domain $S$ admits an $(n,m)$ \emph{amplification procedure} if, and only if, there is an amplifier function $f_{\mathcal{C},n,m}$ that, for every $D\in \mathcal{C}$, can ``win'' the game with probability at least $2/3$; namely, such that for every $D \in \mathcal{C}$ and valid verifier $v_D$ for $D$ $$\Pr_{X_n \leftarrow D^n}[v_D(f_{\mathcal{C},n,m}(X_n))=ACCEPT] \ge 2/3,$$ where the probability is with respect to the randomness of the choice of the $n$ samples, $X_n,$ and any internal randomness in the amplifier and verifier, $f$ and $v$.
\end{definition} 

As was the case in Definition~\ref{def1}, in the above definition it is essential that the verifier only observes the output $f(X_n)$ produced by the amplifier.  If the verifier sees both the amplified samples, $f(X_n)$ in addition to the original data, $X_n$, then the above definition also becomes equivalent to asking how well the class of distributions in question can be \emph{learned} given $n$ samples. \\

\vspace{-10pt}
\begin{figure}[h]
\centering
\includegraphics[scale=0.45]{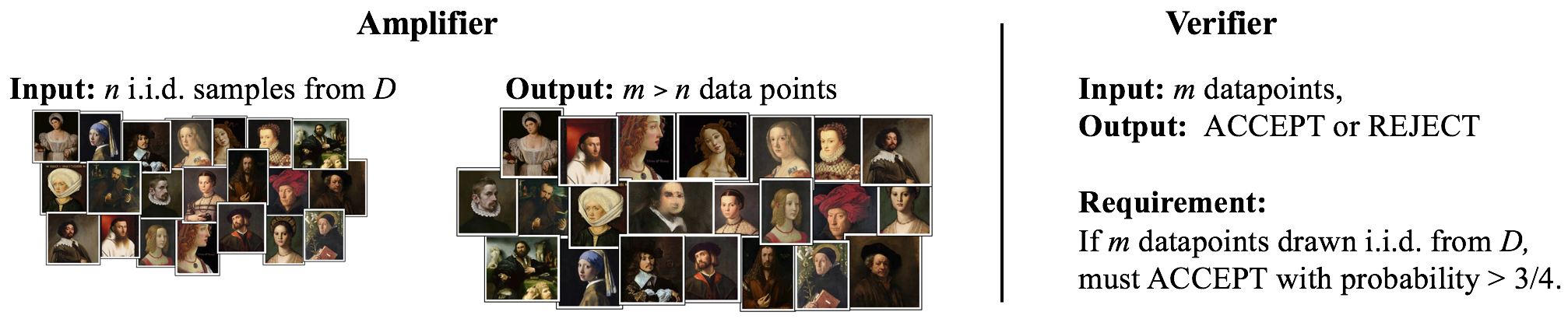}
\caption{Sample amplification can be viewed as a game between an ``amplifier'' that obtains $n$ independent draws from an unknown distribution $D$ and must output a set of $m > n$ samples, and a  ``verifier'' that receives the $m$ samples and must ACCEPT or REJECT.  The verifier knows the true distribution $D$ and is computationally unbounded but does not know the amplifier's training set (the set of $n$ input samples).  An amplification scheme is successful if, for every verifier, with probability at least $2/3$ the verifier will accept the output of the amplifier.  [In the setting illustrated above, observant readers might recognize that one of the images in the ``Output'' set is a painting which was sold in October, 2018 for over \$400k by Christie's auction house, and which was ``painted'' by a Generative Adversarial Network (GAN)~\cite{cohn_2018}].}
\end{figure}


\vspace{-15pt}
\begin{example}
Consider the class of distributions $\mathcal{C}$ corresponding to i.i.d. flips of a coin with unknown bias $p$.  We claim that there are constants $c' \ge c > 0$ such that $(n, n+cn)$ sample amplification is possible, but $(n, n+c'n)$ amplification is not possible.  To see this, consider the amplification strategy corresponding to returning a random permutation of the original samples together with $cn$ additional tosses of a coin with bias $\hat{p}$, where $\hat{p}$ is the empirical bias of the $n$ original samples.  Because of the random permutation, the total variation distance between these samples and $n+cn$ i.i.d. tosses of the $p$-biased coin is a function of only the distribution of the total number of $heads$.  Hence this is equivalent to the distance between $Binomial(n+cn,p),$ and the distribution corresponding to first drawing $h \leftarrow Binomial(n,p)$, and then returning $h+Binomial(cn,h/n)$.  It is not hard to show that the total variation distance between these two can be bounded by any small constant by taking $c$ to be a sufficiently small constant.  Intuitively, this is because both distributions have the same mean, they are both unimodal, and have variances that differ by a small constant factor for small constant $c$.   For the lower bound, to see that amplification by more than a constant factor is impossible, note that if it were possible, then one could learn $p$ to error $o(1/\sqrt{n})$, with small constant probability of failure, by first amplifying the original samples and then returning the empirical estimate of $p$ based on the amplified samples. 

In the above setting, this constant factor amplification is not surprising, since the amplifier can learn the distribution to non-trivial accuracy.  It is worth observing, however, that the above amplification scheme corresponding to a $(n, n+1)$ amplifier will return a set of $n+1$ samples, whose total variation distance from $n+1$ i.i.d. samples is only $O(1/n)$;  this is despite the fact that the amplifier can only learn the distribution to total variation distance $\Theta(1/\sqrt{n}).$
\end{example}

\vspace{-10pt}
\subsection{Summary of Results}

Our main results provide tight bounds on the extent to which sample amplification is possible for two fundamental settings, unstructured discrete distributions, and $d$-dimensional Gaussians with unknown mean and fixed covariance.  Our first result is for discrete distributions with support size at most $k$. In this case, we show that sample amplification is possible given only $O(\sqrt{k})$ samples from the distribution, and tightly characterize the extent to which amplification is possible.\footnote{This addresses a variant of an open problem posed in the Frontiers in Distribution Testing workshop at FOCS 2017 (\url{https://sublinear.info/index.php?title=Open_Problems:85}).} Note that learning the distribution to small total variation distance requires $\Theta(k)$ samples in this case.

\begin{theorem}\label{thm:discrete-full}
Let $\mathcal{C}$ denote the class of discrete distributions with support size at most $k$. For sufficiently large $k,$ and $m = n+O\left(\frac{n}{\sqrt{k}}\right)$, $\mathcal{C}$ admits an $\left(n, m\right)$ amplification procedure.

This bound is tight up to constants, i.e., there is a constant $c$, such that for every sufficiently large $k$,  $\mathcal{C}$ does not admit an $\left(n, n+\frac{cn}{\sqrt{k}}\right)$amplification procedure.
\end{theorem}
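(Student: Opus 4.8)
The plan splits into a positive direction (construct and analyze an amplification procedure) and a matching negative direction, which I expect to be substantially harder.

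\textbf{Upper bound.} I would realize the positive direction through the stated procedures \textsc{AmplifyDiscrete}/\textsc{AmplifyDiscretePoissonized} (which for notational slack take the input size to be a constant multiple of $n$). The first step is to strip away the Poissonization scaffolding: drawing the block sizes as independent $\poisson(n)$ variables makes the per-symbol counts independent across $[k]$, and the ``bad'' events in the pseudocode — the total size exceeding the number of input samples, or the number of generated samples falling below its target — each occur with probability bounded by a constant tending to $0$, by elementary Poisson tail bounds, so they, together with the de-Poissonization and ``topping-up'' steps, contribute only additive constants to $D_{TV}$ that can be tuned below any target. After this reduction, the \textsc{RandomPermute} step interleaves the generated copies with a fresh block of genuine samples, and since $D^m$ is exchangeable the output distribution is determined by the vector of per-symbol counts, which factorizes over $[k]$.

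The heart of the upper bound is then a one-dimensional estimate repeated $k$ times. For symbol $j$ the ``ideal'' count is $\poisson(\lambda_j)$ with $\lambda_j=(n+r)p_j$ and $r$ proportional to $m-n$, whereas the algorithm produces the count by adding, to a genuine $\poisson(np_j)$ block, a generated block $\poisson(\hat p_j r)$ with $\hat p_j$ an \emph{unbiased} estimate of $p_j$ from a disjoint genuine block. Hence the algorithm's law for symbol $j$ is a \emph{mean-preserving} perturbation of the ideal Poisson whose only defect is an excess variance of relative size $\Theta\big((r/n)^2\big)$. The crucial (non-lossy) point is that the total variation distance between a Poisson and such a mean-matched \emph{mixture} of Poissons is of the order of the relative excess variance itself, $\Theta\big((r/n)^2\big)$ — not of the order of a typical single perturbation $\Theta(r/n)$ — because the perturbations average out in the mixture. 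Summing over the $\le k$ symbols by subadditivity of total variation gives $O\big(k(r/n)^2\big)$, a small constant exactly when $r=O(n/\sqrt k)$, i.e.\ $m-n=O(n/\sqrt k)$. The only fiddly part is the handful of symbols with $p_j$ below a $\Theta\big((\log k)/n\big)$ threshold, where $\lambda_j=O(1)$ and the Poisson/Gaussian picture degenerates; there one bounds the contribution directly, checking that both the ``never-seen-hence-never-generated'' mass and the light-symbol contribution to the profile stay $o(1)$.

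\textbf{Lower bound.} Here I would pass through comparison of experiments. A symmetrization first: averaging any amplifier over a uniformly random relabeling of $[k]$, and post-composing with a uniformly random permutation of the $m$ output coordinates, can only decrease $D_{TV}(\cdot,D^m)$ (invariant under both operations), so it suffices to rule out amplifiers that are equivariant under relabelings of $[k]$ and exchangeable in their output. For such an amplifier and a $[k]$-symmetric target $D$, the output law is a deterministic function of the law of the output \emph{profile} (fingerprint), which is the image of the law of the input profile under a single channel $g$ (profiles of $n$ draws $\mapsto$ profiles of $m$ draws) \emph{that does not depend on which $D$ generated the data}. A valid $(n,m)$ amplifier therefore supplies one channel $g$ with $D_{TV}\big(g(\phi^{D^{n}}),\phi^{D^{m}}\big)\le 1/3$ simultaneously for every $D$ in a hard family. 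I would take the family to be near-uniform distributions that $n$ samples provably cannot tell apart — paired $\pm$-perturbations of the uniform distribution at a scale tuned to $n$, or uniform distributions over random large sub-supports — and argue that matching the profile statistics forced by one member produces a detectable error for a constant fraction of the others: either an unavoidable excess variance (dual to the upper bound: no amplifier can extrapolate the symbol counts from $n$ to $m$ draws without a relative excess variance $\Theta\big((m-n)^2/n^2\big)$, which a verifier checking a sum-of-counts-squared statistic detects once $k(m-n)^2/n^2=\Omega(1)$), or a symbol lying outside the true support. Either way the failure probability against the appropriate verifier exceeds $1/3$ once $m-n\ge c\,n/\sqrt k$.

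\textbf{Main obstacle.} The delicate part is calibrating this construction so the failure bites at exactly $m-n=\Theta(n/\sqrt k)$, rather than at the cruder thresholds $\Theta(\sqrt k)$ or $\Theta(\sqrt n)$ that a naive choice of perturbation scale or test statistic yields: a single two-point comparison, or a statistic whose signal scales proportionally to the sample size, only rules out amplification by a constant factor, since the $1/3$ slack in the definition then swamps the $(1+\Theta(1/\sqrt k))$-factor gap we must exhibit. The tight bound thus requires a genuinely high-dimensional, Assouad-type average over $2^{\Theta(k)}$ nearly-indistinguishable distributions in which the small per-coordinate signal accumulates over $\Theta(k)$ nearly-independent coordinates to a separation that is $\Theta(c)$ standard deviations large once $m-n=c\,n/\sqrt k$; making this accumulation — and the reduction to the profile — rigorous uniformly across the range of $n$ is where the bulk of the effort goes.
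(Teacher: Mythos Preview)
Your architecture (Poissonization, per-symbol factorization, light/heavy split) matches the paper's, but the core per-symbol estimate is off by a square root in a way that matters. You argue that each symbol contributes $\Theta((r/n)^2)$ to the TV because the algorithm's count law is a mean-preserving Poisson mixture. That is true for the \emph{marginal} law of the second-block count $y_j+\hat z_j$, averaged over the estimator $\hat p_j$. But the first block $X_{N_1}$ is also part of the output---without it you would return fewer samples than you received---so ``the output distribution is determined by the vector of per-symbol counts'' is false as stated: the verifier effectively sees the pair $(u_j,\,y_j+\hat z_j)$ for each $j$, not just the second coordinate. Conditioning on $u_j$ (as you then must) destroys the mean-matching: you are comparing $\poisson\big((n+r)p_j\big)$ with $\poisson\big(np_j+ru_j/n\big)$ for fixed $u_j$, whose TV is linear in the shift, and averaging over $u_j$ gives $\Theta(r/n)$ per symbol and $\Theta(kr/n)$ in total---small only for $r=O(n/k)$. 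Shuffling $X_{N_1}$ into the output does not rescue this either: the covariance $\mathrm{Cov}(u_j,\hat z_j)=rp_j$ then gives the combined count a relative excess variance of order $r/n$, not $(r/n)^2$.

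The paper recovers the quadratic saving through the choice of divergence rather than through the marginal. It conditions on $X_{N_1}$ via a coupling lemma and bounds the \emph{KL} of the conditional product laws: since $D_{KL}\big(\poisson(\lambda)\,\|\,\poisson(\lambda+\delta)\big)=\Theta(\delta^2/\lambda)$ is already quadratic in $\delta$, and KL is additive over independent coordinates, one gets total KL $\le O\big((r/n)^2\sum_j (u_j-np_j)^2/(np_j)\big)$; a Chebyshev bound controls the inner sum by $O(k)$ with high probability over $X_{N_1}$, and a single application of Pinsker yields TV $=O(r\sqrt{k}/n)$. Your mean-preserving intuition is exactly \emph{why} a quadratic is available; the mechanism that makes it rigorous here is KL-additivity plus Pinsker, not TV-subadditivity on marginals. (The light-symbol threshold in the paper is $p_j\asymp 1/(nk)$, not $(\log k)/n$.)

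\textbf{Lower bound.} Your symmetrize-to-profiles reduction and Assouad-style averaging over $2^{\Theta(k)}$ perturbed uniforms may be workable, but it is heavier than necessary and you correctly leave the calibration open. The paper takes your second hard family---uniform over an unknown $k$-subset of a larger universe---together with a single scalar statistic: the number of distinct symbols $U$. Any element the amplifier inserts outside its observed support is caught with high probability over the random support, so the distinct-count of the amplified set is at most that of $X_n$. A Doob-martingale concentration bound then pins $U^n$ and $U^m$ within $O(n/\sqrt k)$ of their means, which differ by $\Theta(m-n)$ when $n\le k/4$; thresholding $U$ separates $X_m$ from every amplifier once $m-n\gtrsim n/\sqrt k$. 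The range $n>k/4$ is reduced to the previous case by planting a single heavy atom of mass $1-k/(4n)$. No profile-channel formalism, excess-variance statistic, or high-dimensional hypercube is needed.
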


Our amplification procedure for discrete distributions is extremely simple: roughly, we generate additional  samples from the empirical distribution of the initial set of $n$ samples, and then randomly shuffle together the original and the new samples. For technical reasons, we do not exactly sample from the empirical distribution but from a suitable modification which facilitates the analysis. 

Our second result concerns $d$-dimensional Gaussian distributions with unknown mean and fixed covariance. We show that we can amplify even with only $O(\sqrt{d})$ samples from the distribution. In contrast, learning to small constant total variation distance requires $\Theta(d)$ samples. Unlike the discrete setting, however, we do not get optimal amplification in this setting by generating additional samples from the empirical distribution of the initial set of $n$ samples, and then randomly shuffling together the original and new samples. Moreover, we show a lower bound proving that, for  $n=o(d/\log d)$ there is no $(n,n+1)$ amplification procedure which always returns a superset of the original $n$ samples.  Curiously, however, the procedure that generates new samples from the empirical distribution, and then randomly shuffles together the new and old samples, is able to amplify at $n= \Omega(d/\log d)$, even though learning is not possible until $n=\Theta(d)$.  Additionally, as $n$ goes from $10 \frac{d}{\log d}$ to $1000 \frac{d}{\log d}$, this amplification procedure goes from being unable to amplify at all, to being able to amplify by nearly $\sqrt{d}$ samples.  This is formalized in the following proposition. 

\begin{proposition}\label{prop:gaussian_modify_full}
Let $\mathcal{C}$ denote the class of $d-$dimensional Gaussian distributions with unknown mean $\mu$ and  covariance $\Sigma$. There is an absolute constant, $c$, such that for sufficiently large $d$, if $n \le \frac{cd}{\log d},$ there is no $(n,n+1)$ amplification procedure that always returns a superset of the original $n$ points.  

On the other hand, there is a constant $c'$ such that for any $\eps$, for $n = \frac{d}{\eps \log d}$, and for sufficiently large $d$, there is an $\left(n,n+c'n^{\frac{1}{2}-9\eps}\right)$ amplification protocol for $\mathcal{C}$ that returns a superset of the original $n$ samples.
\end{proposition}
The above proposition suggests that to be able to amplify at input size $n = o(d/\log d)$, one must modify the input samples. A naive way to modify the input samples is to discard all the original $n$ samples and generate $m$ new samples from the distribution $N(\hat{\mu},\Sigma)$, where $\hat{\mu}$ is empirical mean $\hat{\mu}$ of the original set $X_n$. However this does not even give an $(n, n)$ amplification procedure for any value of $n$. To achieve optimal amplification in the Gaussian case, the amplifier first computes the empirical mean $\hat{\mu}$ of the original set $X_n$, and then draws $m-n$ new samples from $N(\hat{\mu},\Sigma)$. We then shift the original $n$ samples to ``decorrelate'' the original set and the new samples; intuitively, this step hides the fact that the $m-n$ new samples were generated based on the empirical mean of the original samples.  The final set of returned samples consists of the shifted versions of the $n$ original samples along with the $m-n$ freshly generated ones.  This procedure gives $(n, n+O(\frac{n}{\sqrt{d}}))$ amplification, and we also show that this amplification is tight up to constant factors.

\begin{theorem}\label{thm:gaussian_full} 
Let $\mathcal{C}$ denote the class of $d-$dimensional Gaussian distributions $N\left(\mu, \Sigma\right)$ with unknown mean $\mu$ and fixed covariance $\Sigma$. For all $d,n>0$ and $m = n+O\left(\frac{n}{\sqrt{d}}\right)$, $\mathcal{C}$ admits an $\left(n, m\right)$ amplification procedure. 

This bound is tight up to constants, i.e., there is a fixed constant $c$ such that for all $d,n>0$, $\mathcal{C}$ does not admit an $\left(n, m\right)$ amplification procedure for $m\ge n+\frac{cn}{\sqrt{d}}$.
\end{theorem}

\vspace{-10pt}
\subsection{Open Directions}\label{sec:open}

From a technical perspective, there are a number of natural open directions for future work, including establishing tight bounds on amplification for other natural distribution classes, such as $d$ dimensional Gaussians with unknown mean and covariance.  More conceptually, it seems worth getting a broader understanding of the range of potential amplification algorithms, and the settings to which each can be applied.   

\vspace{-5pt}
\paragraph{Weaker or More Powerful Verifiers?}
Our results showing that non-trivial amplification is possible even in the regime in which learning is not possible, rely on the modeling assumption that the verifier gets no information about the amplifier's training set, $X_n$ (the set of $n$ i.i.d. samples).  If this dataset is revealed to the verifier, then the question of amplification is equivalent to learning.  This prompts the question about a middle ground, where the verifier has some information about the set $X_n$, but does not see the entire set; this middle ground also seems the most practically relevant (e.g. how much do I need to know about a GAN's training set to decide whether it actually understands a distribution of images?).

{\centering
\begin{quote}\emph{How does the power of the amplifier vary depending on how much information the verifier has about $X_n$?  If the verifier is given a uniformly random subsample of $X_n$ of size $n' \ll n,$ how does the amount of possible amplification scale with $n'$?}\end{quote}}

Rather than considering how to increase the power of the verifier, as the above question asks, it might also be worth considering the consequences of decreasing either the computational power, or information theoretic power of the verifier.  

{\centering \begin{quote} \emph{If the verifier, instead of knowing distribution $D$,  receives only a set of independent draws from $D$, how much more power does this give the amplifier?  Alternately, if the verifier is constrained to be an efficiently computable function, does this provide additional power to the amplifier in any natural settings?}\end{quote}}

\vspace{-15pt}
\paragraph{Better Amplifiers in the Discrete Setting.}
In the discrete distribution setting, our amplification results are tight (to constant factors) in a worst-case sense, and our amplifier essentially just returns the original $n$ samples, together with additional samples drawn from the empirical distribution of those $n$ samples, and then randomly permutes the order of these datapoints.  This begs the question: \emph{In the case of discrete distributions, is there any benefit to considering more sophisticated amplification schemes?}    Below we sketch one example motivating a more clever amplification approach.

\begin{example}\label{example:good-turing}
Consider obtaining $n$ samples corresponding to independent draws from a discrete distribution that puts probability $p \gg 1/n$ on a single domain element, and with probability $1-p$ draws a sample from the uniform distribution over some infinite discrete domain.  If $p<2/3,$ then the amplification approach that adds samples from the empirical distribution of the data to the original set of samples, will fail.  Indeed, with probability at least $1/3$ it will introduce a second sample of one of the ``rare'' elements, and such samples can be rejected by the verifier.   For this setting, the optimal amplifier would always introduce extra samples corresponding to the element of probability $p$.
\end{example}

The above example motivates a more sophisticated amplification strategy for the discrete distribution setting.  Approaches such as Good-Turing frequency estimation, or more modern variants of it, adjust the empirical probabilities to more accurately reflect the true probabilities (see e.g.~\cite{good1953population,orlitsky2003always,valiant2016instance}).  Indeed, in a setting such as Example~\ref{example:good-turing}, based on the fact that only one domain element is observed more than once, it is easy to conclude that the total probability mass of all the elements observed just once, is likely at most $O(1/n)$, which implies that a successful amplification scheme cannot duplicate any of them.  While inserting samples from a Good-Turing adjusted empirical distribution will not improve the amplification in a worst-case sense for discrete distributions with a bounded support size, such schemes seem \emph{strictly} better than the schemes we currently analyze.  The following question outlines one potential avenue for quantifying this, along the lines of the recent work on ``instance optimal'' distribution testing and estimation (see e.g.~\cite{acharya2012competitive,orlitsky2015competitive,valiant2016instance}): 

{\centering
\begin{quote}\emph{Is there an ``instance optimal'' amplification scheme, which, for every distribution, $D$, amplifies as well as could be hoped?  Specifically, to what extent is there an amplification scheme which performs nearly as well as a hypothetical optimal scheme that knows distribution $D$ up to relabeling/permuting the domain?}\end{quote}}
\vspace{-15pt}
\paragraph{Potential Applications of Sample Amplification.}


An interesting future direction is to examine if sample amplification is a useful primitive in settings where the samples are given as input to downstream analysis.   Amplification does not add any new information to the original data, but it could still make the original information more easily accessible to certain types of algorithms which interact with the data in limited ways. For example, many popular algorithms and heuristics are not information theoretically optimal, despite their widespread use. It seems worth examining if  amplification schemes could improve the statistical efficiency of these commonly used methods. Since the amplified samples are ``good'' in an information theoretic sense (they are indistinguishable from true samples), the performance of downstream algorithms \emph{cannot} be significantly hurt.    Below, we provide a toy example of a setting where amplification improves the accuracy of a standard downstream estimator.

\begin{example}\label{example:amphelps}
Given labeled examples, $(x_1,y_1),\ldots,(x_n,y_n)$ drawn from a distribution, $D$, with $x_i \in \mathbb{R}^d$ and $y_i \in \mathbb{R}$, a natural quantity to estimate is the fraction of variance in $y$  explainable as a linear function of $x$:  $$\inf_{\theta \in \mathbb{R}^d} \E_{(x,y)\sim D}[(\theta^T x - y)^2].$$  The standard unbiased estimator for this quantity is the training error of the least-squares linear model, scaled by a factor of $\frac{1}{n-d}.$  This scaling factor makes this estimate unbiased, although the variance is large when $n$ is not much larger than $d$. Figure~\ref{fig:learnability} shows the  expected squared error of this estimator on raw samples, and on $(n,n+2)$ amplified samples, in the case where $x_i \sim N(0,I_d),$ and $y_i = \theta^T x_i + \eta$ for some model $\|\theta\|_2=1$ and independent noise $\eta\sim N(0,\frac{1}{4})$---hence the true value for the ``unexplainable variance'' is  $1/4$. Here, the amplification procedure draws two additional datapoints, $x$ from the isotropic Gaussian with mean equal to the empirical mean, and labels them according to the learned least-squares regression model $\hat{\theta}$ with independent noise of variance $5/n$ times the empirical estimate of the unexplained variance.
\end{example}
\begin{wrapfigure}{r}{0.4\textwidth}
\vspace{-\baselineskip}
\centering
\includegraphics[width=0.42\textwidth]{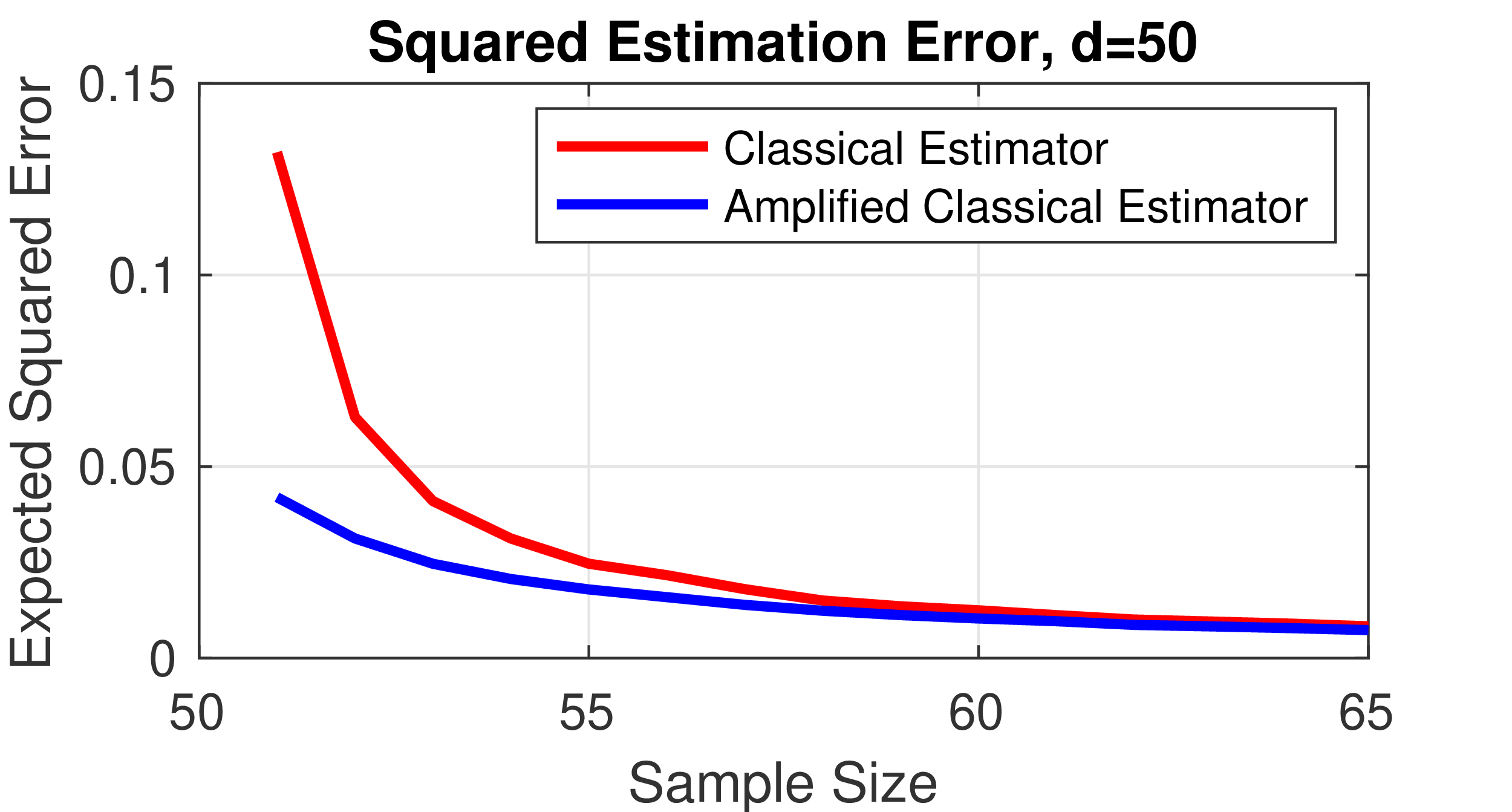}
  \captionof{figure}{\small{Toy example illustrating potential benefit of feeding amplified samples into a commonly used estimator. See Example~\ref{example:amphelps} for a description of the specific setup.  
  }}
    \label{fig:learnability}
\vspace{-\baselineskip}
\end{wrapfigure}




One potential limitation to applications of amplification is that our existing results show that it is only possible to amplify the sample size by sub-constant factors (for the settings considered).  If the algorithm using the amplified data is limited, however, then we could hope for much larger amplification factors. This is reminiscent of the open problem in the previous section on whether larger amplification is possible against weaker classes of verifiers.

In practice, there is already growing interest in using generative models for data augmentation to improve classification accuracy \citep{antoniou2017data,frid2018gan,wang2018low,yi2019generative}. Given our results which show that amplification is significantly easier than learning, such pipelines might be more effective than one would initially suspect.  It is also worth thinking more generally about how to design modular data analysis or learning pipelines, where a first component of the pipeline could be an amplifier tailored to the specific data distribution, followed by more generic learning algorithms that do not attempt to leverage structural properties of the data distribution. Such modular pipelines might prove to be significantly easier to develop and maintain, in practice.

\vspace{-5pt}
\paragraph{Implications for Generative Modelling.}

The sample amplification framework has some connections to generative modelling. Generative models such as GANs aim to produce new samples from an unknown distribution $D$ given a training set of size $n$ drawn from $D$. It is tempting to try to relate the amplification setting to GANs by viewing the amplifier and verifier as analogs of the generator and discriminator, respectively.  This is \emph{not} an accurate correspondence:  For GANs, the discriminator typically evaluates examples individually (or in small batches), and often has seen the same training set as the generator, whereas our verifier explicitly evaluates a full set of samples without knowledge of the training samples.  
The samples generated by a generative model are often evaluated by humans (either manually or algorithmically). This evaluation is usually aimed at understanding the quality of output samples \emph{conditioned on the training data}---if some of the output samples are copies of the training set, this is not satisfactory---which again corresponds to learning rather than sample amplification.

Despite these differences, some ways that generative models are actually used, do closely mirror the amplification setting.  For example, when generative models are used to augment a training set that is used to learn a classifier, both the generated samples and the original dataset are fed into the learning algorithm.  The learning algorithm does not necessarily distinguish between ``new'' and ``old'' samples.  In this setting, it does make sense to evaluate the set of ``new'' and ``old'' samples together, as a single set of ``amplified'' samples,  rather than evaluating the ``new'' samples conditioned on the ``old'' ones.  This exactly corresponds to our amplification formulation.   Given that amplification is often easier than learning, it might be worthwhile trying to develop more techniques that are explicitly trying to amplify, rather than learn.

A second, distinct connection between amplification and GANs, relates to the question of how humans (or algorithms) can evaluate the samples produced by a GAN.  The gap between learning (evaluating the generated samples conditioned on the training set), and amplification (evaluating the generated samples without knowledge of the training set), suggests that in order to truly evaluate the samples produced by a GAN, we would need to closely inspect the training data used by the GAN.  This is obviously impractical in many settings, and motivates some of the questions described above concerning how much access a verifier needs to the input examples in order for there to be a gap between learning, and amplifying.

\subsection{Related Work}

  The question of deciding whether a set of samples consists of independent draws from a specified distribution is one of the fundamental problems at the core of distributional property testing.  Interest in this problem was sparked by the seminal work of Goldreich and Ron~\cite{GR00}, who considered the specific problem of determining whether a set of samples was drawn from a uniform distribution of support size $k$.  This sparked a line of work on the slightly more general problem of ``identity testing'' whether a set of samples was drawn from a specified distribution, $D$, versus a distribution with distance at least $\epsilon$ from $D$~\cite{batu2001testing,paninski2008coincidence,valiant2017automatic,diakonikolas2016new}. Beyond the specific question of identity testing, there is an enormous body of work on other distributional property testing questions, including the ``tolerant'' version of identity testing, as well as the multi-distribution analogs (see e.g.~\cite{batu2013testing,valiant2011testing,chan2014optimal,orlitsky2015competitive,bhattacharya2015testing,levi2013testing,diakonikolas2016new}).  In the majority of these works, the assumption is that the given samples consist of independent draws from some fixed distribution, and the common theme in these results is that such tests can typically be accomplished with far less data than would be required to learn the distribution. While the identity testing problem is clearly related to the amplification problem we consider, these appear to be quite distinct problems.  In particular, in the identity testing setting, the main technical challenge is understanding what statistics of a set of i.i.d. samples are capable of distinguishing samples drawn from the prescribed distribution, versus samples drawn from any distribution that is at least $\eps$-far from that distribution.  In contrast, in the amplification setting, the core question is how the amplifier can leverage a set of independent samples from $D$ to generate a larger set of (presumably) non-independent samples that can successfully masquerade as a set of independent samples drawn from $D$; of course, the catch is that the amplifier must do this in the data regime in which it is impossible for it to learn much about $D$.

  Within this line of work on distributional property testing and estimation, there is also a recent thread of work on designing estimators for specific properties (such as entropy, or distance to uniformity), whose performance given $n$ independent draws from the distribution in question is comparable to the expected performance of a naive ``plugin'' estimator (which returns the property value of the empirical distribution) based on $m > n$ independent draws~\cite{valiant2016instance,yi2018data}.  The term ``data amplification'' has been applied to this line of work, although it is a different problem from the one we consider.  In particular, we are considering the extent to which the samples can be used to create a larger set of samples; the work on property estimation is asking to what extent one can craft superior estimators whose performance is comparable to the performance that a more basic estimator would achieve with a larger sample size.

The recent work on \emph{sampling correctors}~\cite{canonne2018sampling} also considers the question of how to produce a ``good'' set of draws from a given distribution.  That work assumes access to independent draws from a distribution, $D$, which is close to having some desired structural property, such as monotonicity or uniformity, and considers how to ``correct'' or ``improve'' those samples to produce a set of samples that appear to have been drawn from a different distribution $D'$ that possesses the desired property (or is closer to possessing the property).  Part of that work also considers the question of whether such a protocol requires access to additional randomness.   

Our formulation of sample amplification as a game between an amplifier and a verifier, closely resembles the setup for \emph{pseudo-randomness}  (see~\cite{vadhan2012pseudorandomness} for a relatively recent survey).   There, the pseudo-random generator takes a set of $n$ independent fair coin flips, and outputs a  longer string of $m > n$ outcomes.  The verifier's job is to distinguish the output of the generator from a set of $m$ independent tosses of the fair coin (i.e. truly random bits).  In contrast to our setting, in pseudo-randomness, both players know that the  distribution in question is the uniform distribution, the catch is that the generator does not have access to randomness, and the verifier is computationally bounded.  Beyond the superficial similarity in setup, we are not aware of any deeper connections between our notion of amplification and pseudorandomness.

Finally, it is also worth mentioning the work of Viola on the complexity of sampling from distributions~\cite{viola2012complexity}.  That work also considers the challenge of generating samples from a specified distribution, though the problem is posed as the computational challenge of producing samples from a specified distribution, given access to uniformly random bits.  One of the punchlines of that work is that there are distributions, such as the distribution over pairs $(x,y)$ where $x$ is a uniformly random length-$n$ string, and $y=parity(x),$ where small circuits can sample from the distribution, yet no small circuit can compute $y=parity(x)$ given $x$.  A different way of phrasing that punchline is that there are distributions that are easy to sample from, for which it is much harder to sample from their  conditional distributions (e.g. in the parity case, sampling $(x,y)$ given $x$ is hard).  

\vspace{-5pt}
\section{Algorithms and Proof Overview}

In this section, we describe our algorithms for data amplification for discrete and Gaussian distributions. We also give an intuitive overview of the proofs of both the upper and lower bounds.

\vspace{-5pt}
\subsection{Discrete Distributions with Bounded Support}

We begin by providing some intuition for amplification in the discrete distribution setting, by considering the simple case where the distribution in question is a uniform distribution over an unknown support.  We then describe how our more general amplification algorithm extends this intuition.

\vspace{-5pt}
\paragraph{Intuition via the Uniform Distribution.}
Consider the problem of  generating $(n+1)$ samples from a uniform distribution over $k$ unknown elements,  given a set of $n$ samples from the distribution. Suppose $n\ll \sqrt{k}$. Then with high probability, no element appears more than once in a set of $(n+1)$ samples from $\text{Unif}[k]$. Therefore, as the amplifier only knows $n$ elements of the support with $n$ samples, it cannot produce a set of $(n+1)$ samples such that each element only appears once in the set. Hence, no amplification is possible in this regime. Now consider the case when $n=c\sqrt{k}$ for a large constant $c$. By the birthday paradox, we now expect some elements to appear more than once, and the number of elements appearing twice has expectation $\approx \frac{c^2}{2}$ and standard  deviation $\Theta(c)$. In light of this fact, consider an amplification procedure which takes any element that appears only once in the set $X_n$, adds an additional copy of this element to the set $X_n$, and then randomly shuffles these $n+1$ samples to produce the final set $Z_{n+1}$.  It is easy to verify that the distribution of  $Z_{n+1}$ will be close in total variation distance to a set $X_{n+1}$ of $(n+1)$ i.i.d. samples drawn from the original uniform distribution.  Since the standard deviation of the number of elements in $X_{n+1}$ that appear twice is $\Theta(c)$, intuitively, we should be able to amplify by an additional $\Theta(c)$ samples, by taking $\Theta(c)$ elements which appear only once and repeating them, and then randomly permuting these $n+\Theta(c)$ samples. Note that with high probability, most elements only appear once in the set $X_n$, and hence the previous amplifier is almost equivalent to an amplifier which generates new samples by sampling from the empirical distribution of the original $n$ samples, and then randomly shuffles together the original and new samples. Our amplification procedure for general discrete distributions is based on this sample-from-empirical procedure.

\vspace{-5pt}
\paragraph{Algorithm and Upper Bound.}

To facilitate the analysis, our general amplification procedure which applies to any discrete distribution $D$, deviates from the sample-from-empirical-then-shuffle scheme in two ways. The modifications avoid two sources of dependencies in the sample-from-empirical-then-shuffle scheme which complicate the analysis. 

First, we use the standard ``Poissonization'' trick and go from working with the multinomial distribution to the Poisson distribution---making the  element counts independent for all $\le k$ elements. And second, note that the new samples are dependent on the old samples if we generate the new samples from the empirical distribution. To leverage independence, we instead (i) divide the input samples into two sets, (ii) use the first set to estimate the empirical distribution, (iii) generate new samples using this empirical distribution, and (iv) randomly shuffle these new samples with the samples in the second set. More precisely, we simulate two sets $X_{N_1}$ and $X_{N_2}$, of $\poisson(n/4)$ samples from the distribution $D$, using the original set $X_n$ of $n$ samples from $D$. This is straightforward to do, as a $\poisson(n/4)$ random variable is $\le n/2$ with high probability. We then estimate the probabilities of the elements using the first set $X_{N_1}$, and use these estimated probabilities to generate $R \approx m -n$ more samples from a Poisson distribution, which are then randomly shuffled with the samples in $X_{N_2}$ to produce $Z_{N_2+R}$. Then the set of output samples $Z_m$ just consist of the samples in $X_{N_1}$ concatenated with those in $Z_{N_2+R}$. This describes the main steps in the procedure, more technical details can be found in the full description in Algorithm \ref{alg:discrete}. We show that this procedure achieves a $(n,m)$ amplifier for sufficiently large $k$ and $m=n+O\left(\frac{n}{\sqrt{k}}\right)$.

To prove this upper bound, first note that the counts of each element in a shuffled set $Z_m$ are a sufficient statistics for the probability of observing $Z_m$, as the ordering of the elements is uniformly random. Hence we only need to show that the distribution of the counts in the set $Z_m$ is close in total variation distance to the distribution of counts in a set $X_m$ of $m$ elements drawn i.i.d. from $D$. Since the first set $X_{N_1}$ is independent of the second set $X_{N_2}$, the additional samples added to $X_{N_2}$ are independent of the samples originally in $X_{N_2}$, which avoids additional dependencies in the analysis. Using this independence, we show a technical lemma that with high probability over the first set $X_{N_1}$, the KL-divergence between the distribution of the set $Z_{N_2+R}$ and $D^{N_2+R}$ of $N_2+R$ i.i.d. samples from $D$ is small. Then using Pinsker's inequality, it follows that the total variation distance is also small. The final result then follows by a coupling argument, and showing that the Poissonization steps are successful with high probability.

\vspace{-5pt}
\paragraph{Lower Bound.}

We now describe the intuition for showing our lower bound that the class of discrete distributions with support at most $k$ does not admit an $(n,m)$ amplification scheme for $m\ge n+\frac{cn}{\sqrt{k}}$, where $c$ is a fixed constant. For $n\le \frac k 4$, we show this lower bound for the class of uniform distributions $D=\text{Unif}[k]$ on some unknown $k$ elements. In this case, a verifier can distinguish between true samples from $D$ and a set of amplified samples by counting the number of unique samples in the set. Note that as the support of $D$ is unknown, the number of unique samples in the amplified set is at most the number of unique samples in the original set $X_n$, unless the amplifier includes samples that are outside the support of $D$, in which case the verifier will trivially reject this set.  The expected number of unique samples in $n$ and $m$ draws from $D$ differs by
$\frac {c_1 n }{\sqrt{k}} $, for some fixed constant $c_1$. We use a Doob martingale and martingale concentration bounds to show that the number of unique samples in $n$ samples from $D$ concentrates within a $ \frac{c_2  n }{\sqrt k}$ margin of its expectation with high probability, for some fixed constant $c_2 \ll c_1$. This implies that there will be a large gap between the number of unique samples in $n$ and $m$ draws from $D$. The verifier uses this to distinguish between true samples from $D$ and an amplified set, which cannot have sufficiently many unique samples. 

 Finally, we show that for 
 { $n> \frac k 4$, a $\Big(n,n+\frac{c'n}{\sqrt{k}}\Big) $ }
 amplification procedure for discrete distributions on $k$ elements implies a 
 {$(n',n'+\frac{c'n'}{\sqrt{k}}) $ }
 amplification procedure for the uniform distribution on $(k-1)$ elements where $n' \approx \frac{k}{8}$. For sufficiently large $c'$ this is a contradiction to the previous part. This reduction follows by considering the distribution which has 
 {$1 - \frac{k}{8n} $ }
 mass on one element and 
 {$\frac{k}{8n} $ }
 mass uniformly distributed on the remaining $(k-1)$ elements. With sufficiently large probability, the number of samples in the uniform section will be  
 {$\approx \frac{k}{8}< \frac{k}{4} $ }
 and hence we can apply the previous result.

\subsection{Gaussian Distributions with Unknown Mean and Fixed Covariance}

Given the success of the simple sampling-from-empirical scheme for the discrete case, it is natural to consider the analogous algorithm for $d$-dimensional Gaussian distributions with unknown mean and fixed covariance. In this section, we first show that this analogous procedure achieves non-trivial amplification for $n= \Omega(d/\log d)$. We then describe the idea behind the lower bound that any procedure which does not modify the input samples does not work for $n= o(d/\log d)$. Inspired by the insights from this lower bound, we then discuss a more sophisticated procedure, which is optimal and achieves non-trivial amplification for $n$ as small as $\Omega(\sqrt{d})$.

\vspace{-5pt}
\paragraph{Upper Bound for Algorithm which Samples from the Empirical Distribution.}

Let $\hat{\mu}$ be the empirical mean of the original set $X_n$. Consider the $(n,m)$ amplification scheme which draws $(m-n)$ new samples from $N(\hat{\mu},\Sigma)$ and then randomly shuffles together the original samples and the new samples. We show that for any $\eps$, this procedure---with a small
modification to facilitate the analysis---achieves
$\left(n,n+O \left(n^{\frac{1}{2}-9\eps}\right) \right)$ amplification for $n= \frac{d}{\eps \log d}$ . This is despite the empirical distribution $N(\hat{\mu},\Sigma)$ being $1-o(1)$ far in total variation distance from the true distribution $N(\mu, \Sigma)$, for $n=o(d)$.  

We now provide the proof intuition for this result. First, note that it is sufficient to prove the result for $\Sigma=I$. This is because all the operations performed by our amplification procedure are invariant under linear transformations. The intuition for the result in the identity covariance case is as follows.  Consider $n=\Theta(d/\log d)$. In this case, with high probability, the empirical mean $\hat{\mu}$ satisfies $\norm{\mu-\hat{\mu}} = O(\sqrt{\log d})\le \sqrt{c\log n}$ for a fixed constant $c$. If we center and rotate the coordinate system, such that $\hat{\mu}$ has the coordinates $(\norm{\mu-\hat{\mu}},0,\dots,0)$, then the distribution of samples from $N(\hat{\mu},I)$ and $N(\mu, I)$ only differs along the first axis, and is independent across different axes. Hence, with some technical work, our problem reduces to the following univariate problem: what is the total variation distance between $(n+1)$ samples from the univariate distributions $N(0,1)$ and $\tilde{D}$, where $\tilde{D}$ is a mixture distribution where each sample is drawn from $N(0,1)$ with probability $1-\frac{1}{n+1}$ and from $N(\sqrt{c\log n},1)$ with probability $\frac{1}{n+1}$? We show that the total variation distance between these distributions is small, by bounding the squared Hellinger distance between them. Intuitively, the reason for the total variation distance being small is that, even though one sample from $N(\sqrt{c\log n},1)$ is easy to distinguish from one sample from $N(0,1)$,  for sufficiently small $c$ it is difficult to distinguish between these two samples in the presence of $n$ other samples from $N(0,1)$. This is because for $n$ draws from $N(0,1)$,  with high probability there are $O(n^{1 - c})$ samples in a constant length interval around $\sqrt{c\log n}$, and hence it is difficult to detect the presence or absence of one extra sample in this interval.

\vspace{-5pt}
\paragraph{Lower Bound for any Procedure which Returns a Superset of the Input Samples.}\label{sec:overview_lower_empirical}

We show that procedures which return a superset of the input samples are inherently limited in this Gaussian setting, in the sense that they cannot achieve $(n,n+1)$ amplification for $n\le \frac{cd}{\log d}$, where $c$ is a fixed constant. 


The idea behind the lower bound is as follows. If we consider any arbitrary direction and project a true sample from $N(\mu,I)$ along that direction, then with high probability, the projection lies close to the projection of the mean. However, for input set $X_n$ with mean $\hat{\mu}$, the projection of an extra sample added by any amplification procedure along the direction $\mu-\hat{\mu}$ will be far from the projection of the mean $\mu$. This is because after seeing just $ \frac{cd}{\log d}$ samples, any amplification procedure will have high uncertainty about the location of $\mu$ relative to $\hat{\mu}$.   Based on this, we construct a verifier which can distinguish between a set of true samples and a set of amplified samples, for $n\le \frac{cd}{\log d}$.

More formally, Let $x_i'$ be the $i$-th sample returned by the procedure, and let $\hat{\mu}_{-i}$ be the mean of all except the $i$-th sample. Let ``$\new$'' be the index of the additional point added by the amplifier to the original set $X_n$, hence the amplifier returns the set $\{x_{\new}',X_n\}$. Note that $\hat{\mu}\leftarrow N(\mu, \frac{I}{n})$, hence $\norm{\mu-\hat{\mu}}^2 \approx \frac{d}{n}$ with high probability. Suppose the verifier evaluates the following inner product for the additional point $x_{\new}'$,
\vspace{-3pt}
\begin{align}
    \ip{x_{\new}'-\hat{\mu}_{-\new}}{\mu-\hat{\mu}_{-\new}}. \label{eq:ip_test}
\end{align}
Note that $\hat{\mu}_{-\new}=\hat{\mu}$ as the amplifier has not modified any of the original samples in $X_n$. For a point $x_{\new}'$ drawn from $N(\mu,I)$, this inner product concentrates around $\norm{\mu-\hat{\mu}}^2\approx \frac{d}{n}$.  We now argue that if the true mean $\mu$ is drawn from the distribution $N(0,\sqrt{d}I)$, then the above inner product is much smaller than $\frac{d}{n}$ with high probability over $\mu$. The reason for this is as follows. After seeing the samples in $X_n$, the amplification algorithm knows that $\mu$ lies in a ball of radius $\approx \sqrt{\frac{d}{n}}$  centered at $\hat{\mu}$, but $\mu$ could lie along any direction in that ball. Formally, we can show that if $\mu$ is drawn from the distribution $N(0,\sqrt{d}I)$, then the posterior distribution of $\mu \mid X_n$ is a Gaussian $N(\bar{\mu},\bar{\sigma}I)$ with $\bar{\mu}\approx \hat{\mu}$ and $\bar{\sigma}\approx \frac{1}{n}$. As $\mu-\hat{\mu}$ is a random direction, for any $x_{\new}'$ that the algorithm returns, the inner product in \eqref{eq:ip_test} is $ \approx \norm{x_{\new}'-\hat{\mu}}\norm{\mu-\hat{\mu}}\left(\frac{1}{\sqrt{d}}\right)$ with high probability over the randomness in $\mu \mid X_n$. The verifier checks and ensures that $\norm{x_{\new}'-\hat{\mu}_{-\new}}=\norm{x_{\new}'-\hat{\mu}}\approx \sqrt{d}$. Hence for any $(n,n+1)$ amplification scheme, the inner product in \eqref{eq:ip_test} is at most $\approx \sqrt{\frac{d}{n}}$ with high probability over $\mu \mid X_n$. In contrast, we argued before that this inner product is $\approx \frac{d}{n}$ for a true sample from $N(\mu,I)$.

Finally, note that the algorithm can randomly shuffle the samples, and hence the verifier does the above inner product test for every returned sample $x_i'$, for a total of $(n+1)$ tests. If $(n+1)$ tests are performed, then the inner product is expected to deviate by $\sqrt{\frac{{d \log n}}{n}}$ around its expected value of $\frac{d}{n}$, even for $(n+1)$ true samples drawn for the distribution. But if $n\ll \frac{d}{\log d}$, then $\sqrt{\frac{d}{n}} \ll \frac{d}{n}- \sqrt{\frac{{d \log n}}{n}}$, and hence any $(n,n+1)$ amplification scheme in this regime fails at least one of the following tests with high probability over $\mu$:
\begin{enumerate}
    \item $\forall\; i \in [n+1],  \ip{x_{i}'-\hat{\mu}_{-i}}{\mu-\hat{\mu}_{-i}} \ge \frac{d}{n}- \sqrt{\frac{{d \log n}}{n}}$,
    \item $\forall\; i \in [n+1], \norm{x_{i}'-\hat{\mu}_{-i}}\approx \sqrt{d}$. 
\end{enumerate}
As true samples pass all the tests with high probability, this shows that $(n,n+1)$ amplification without modifying the provided samples is impossible for $n\ll \frac{d}{\log d}$.

\vspace{-5pt}
\paragraph{Optimal Amplification Procedure for Gaussians: Algorithm and Upper Bound.}

The above lower bound shows that it is necessary to modify the input samples $X_n$ to achieve amplification for $n=o(d/\log d)$. {What would be the most naive amplification scheme which does not output a superset of the input samples? One candidate could be an amplifier which first estimates the sample mean $\hat{\mu}$ of $X_n$, and then just outputs $m$ samples from $N(\hat{\mu}, I)$. It is not hard to see that this scheme does not even give a valid $(n, n)$ amplification procedure. The verifier in this case could check the distance between the true mean and the mean of the returned samples, which would be significantly more than expected, with high probability.

How should one modify the input samples then?}  The above lower bound also shows what such an amplification procedure must achieve---the inner product in \eqref{eq:ip_test} should be driven towards its expected value of $\frac{d}{n}$ for a true sample drawn from the distribution. Note that the inner product is too small for the algorithm which samples from the empirical distribution $N(\hat{\mu},I)$ as the generated point $x_{\new}'$ is too correlated with the mean $\hat{\mu}_{-\new}=\hat{\mu}$ of the remaining points. We can fix this by shifting the original points in $X_n$ themselves, to hide the correlation between $x_{\new}'$ and the original mean $\hat{\mu}$ of $X_n$. The full procedure is quite simple to state, and is described in Algorithm \ref{alg:gaussian1}. Note that unlike our  other amplification procedures, this  procedure does not involve any random shuffling of the samples. We show that this procedure achieves $(n,m)$ amplification for all $d>0$ and $m=n + O\left(\frac{n}{\sqrt{d}}\right)$. 
\vspace{-3pt}
\alggaussian

We now provide a brief proof sketch for this upper bound, for the case when $m=n+1$. First, we show that the returned samples in $Z_m$ can also be thought of as a single sample  from a  $(m \times d)$-dimensional Gaussian distribution $N\Big(\underbrace{\left(\mu, \mu, \dots, \mu\right)}_{m \text{ times}}, \tilde{\Sigma}_{md \times md}\Big)$, as the returned samples are linear combinations of Gaussian random variables. Hence, it is sufficient to find their mean and covariance, and use that to bound their total variation distance to true samples from the distribution (which can also be though  of as a single sample from a $(d \times m)$-dimensional Gaussian distribution $N\Big(\underbrace{\left(\mu, \mu, \dots, \mu\right)}_{m \text{ times}}, I_{md \times md}\Big)$). Therefore, our problem reduces to ensuring that the total variation distance between these two Gaussian distributions is small, and this distance is proportional to $\norm{\tilde{\Sigma}_{md \times md} - I_{md \times md}}_{\text{F}}$. Our modification procedure removes the correlations between the original samples and the generated samples to ensure that the non-diagonal entries of $\tilde{\Sigma}_{md \times md}$ are small, and hence the total variation distance is also small. For example, the original correlation between the first coordinates of the original sample $x_1$ and the generated sample $x_{n+1}'$ is too large, but it is easy to verify that the correlation between the first coordinates of the modified sample $x_{1}'=x_1-\frac{x_{n+1}'-\hat{\mu}}{n}$ and the generated sample $x_{n+1}'$ is zero.

\vspace{-5pt}
\paragraph{General Lower Bound for Gaussians.}

We show a lower bound that there is no $(n,m)$ amplification procedure for Gaussian distibutions with unknown mean for $m\ge n+\frac{cn}{\sqrt{d}}$, where $c$ is a fixed constant. The intuition behind the lower bound is that any such amplification procedure could be used to find the true mean $\mu$ with much smaller error than what is  possible with $n$ samples.  

To show this formally, we define a verifier such that for $\mu\leftarrow N(0,\sqrt{d}I)$ and $m>n+\frac{cn}{\sqrt{d}}$, $m$ true samples from $N(\mu,I)$ are accepted by the verifier with high probability over the randomness in the samples, but $m$ samples generated by any $(n,m)$ amplification scheme are rejected by the verifier with high probability over the randomness in the samples and $\mu$. 
In this case, the verifier only needs to evaluate the squared distance $\norm{\mu- \hat{\mu}_{m}}^2$ of the empirical mean $\hat{\mu}_m$ of the returned samples from the true mean $\mu$, and accept the samples if and only if this squared distance is less than $\frac{d}{m} + \frac{c_1\sqrt{d}}{{m}}$ for some fixed constant $c_1$.  It is not difficult to see why this test is sufficient. Note that for $m$ true samples drawn from $N(\mu,I)$, $\norm{\mu- \hat{\mu}_{m}}^2 = \frac{d}{m}\pm O\left(\frac{\sqrt{d}}{m}\right)$. Also, the squared distance $\norm{\mu- \hat{\mu}^2}$ of the mean $\hat{\mu}$ of the original set $X_n$ from the true mean $\mu$ is concentrated around $\frac{d}{n}\pm O\left(\frac{\sqrt{d}}{n}\right)$. Using this, for $m>n+\frac{cn}{\sqrt{d}}$, we can show that no algorithm can find a $\hat{\mu}_{m}$ which satisfies $\norm{\mu- \hat{\mu}_{m}}^2 \le \frac{d}{m}\pm O\left(\frac{\sqrt{d}}{m}\right)$ with decent probability over $\mu\leftarrow N(0,\sqrt{d}I)$. This is because the algorithm only knows $\mu$ up to squared error $\frac{d}{n}\pm O\left(\frac{\sqrt{d}}{n}\right)$ based on the original set $X_n$. 

\section{Proofs: Gaussian with Unknown Mean and Fixed Covariance}
\subsection{Upper Bound}
In this section, we prove the upper bound in Theorem \ref{thm:gaussian_full} by showing that Algorithm \ref{alg:gaussian1} can be used as a $(n, n + \frac{n}{\sqrt d})$ amplification procedure.

First, note that it is sufficient to prove the theorem for the case when input samples come from an identity covariance Gaussian. This is because, for the purpose of analysis we can transform our samples to those coming from indentity covariance Gaussian, as our amplification procedure is invariant to linear transformations to samples. In particular, let $f_\Sigma$ denote our amplification procedure for samples coming from $N(\mu, \Sigma)$, and, $Y_n = (y_1, y_2, \dots, y_n)$ denote the random variable corresponding to $n$ samples from $N(\mu, \Sigma)$. Let $X_n = (x_1, x_2, \dots, x_n)$ denote $n$ samples from $N(\mu, I)$, such that $Y_n = \Sigma^\frac{1}{2}(X_n - \mu) + \mu = (\Sigma^\frac{1}{2}(x_1 - \mu) + \mu, \Sigma^\frac{1}{2}(x_2 - \mu) + \mu, \dots, \Sigma^\frac{1}{2}(x_n - \mu) + \mu)$. Due to invariance of our amplification procedure to linear transformations, we get that $\Sigma^{\frac{1}{2}}(f_I(X_n) - \mu) + \mu$ is equal in distribution to $ f_{\Sigma}(\Sigma^\frac{1}{2}(X_n - \mu) + \mu) = f_{\Sigma}(Y_n)$. This gives us 
\begin{equation*}
    \begin{split}
        D_{TV}(f_\Sigma(Y_n), Y_m) &=  
        D_{TV}(f_\Sigma(\Sigma^\frac{1}{2}(X_n - \mu) + \mu), \Sigma^\frac{1}{2}(X_m - \mu) + \mu)\\
        &= D_{TV}(\Sigma^{\frac{1}{2}}(f_I(X_n) - \mu) + \mu, \Sigma^\frac{1}{2}(X_m - \mu) + \mu)\\
        &\leq D_{TV}(f_I(X_n), X_m),
    \end{split}
\end{equation*}
where the last inequality is true because the total variation distance between two distributions can't increase if we apply the same transformation to both the distributions. Hence, we can conclude that it is sufficient to prove our results for identity covariance case. This is true for both the amplification procedures for Gaussians that we have discussed. So in this whole section, we will work with identity covariance Gaussian distributions.

\begin{proposition}\label{prop:gaussian-ub-main}
Let $\mathcal{C}$ denote the class of $d-$dimensional Gaussian distributions $N\left(\mu, I\right)$ with unknown mean $\mu$. For all $d,n>0$ and $m = n+O\left(\frac{n}{\sqrt{d}}\right)$, $\mathcal{C}$ admits an $\left(n, m\right)$ amplification procedure.
\end{proposition}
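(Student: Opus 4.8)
The plan is to analyze Algorithm~\ref{alg:gaussian1} directly, taking $\Sigma = I$ as already justified. Write $r := m-n$; this is a nonnegative integer, and if $r=0$ the identity map works, while if $r\ge 1$ then $m>n$ together with $m=n+O(n/\sqrt d)$ forces $n=\Omega(\sqrt d)$, so $r/n$ is bounded by a small constant (which we will need below). Set $S := \sum_{j=n+1}^{m}\epsilon_j\sim N(0,rI_d)$, independent of $x_1,\dots,x_n$. Since every output coordinate $x_i'$ is an affine function of the jointly Gaussian vector $(x_1,\dots,x_n,\epsilon_{n+1},\dots,\epsilon_m)$, the tuple $Z_m=(x_1',\dots,x_m')$ is a single draw from an $md$-dimensional Gaussian; likewise $m$ i.i.d.\ draws $X_m$ from $N(\mu,I)$ form a single draw from $N((\mu,\dots,\mu),I_{md})$. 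A direct check gives $\E[x_i']=\mu$ for every $i$ (the correction term $S/n$ has mean $0$ and the empirical mean has mean $\mu$), so the two $md$-dimensional Gaussians share the mean $(\mu,\dots,\mu)$, and it remains to compute the covariance $\tilde\Sigma$ of $Z_m$ and bound $D_{TV}\bigl(N((\mu,\dots,\mu),\tilde\Sigma),\,N((\mu,\dots,\mu),I_{md})\bigr)$.

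Next I would compute $\tilde\Sigma$ block by block; by isotropy each $d\times d$ block is a scalar multiple of $I_d$. For distinct old indices $i\ne i'\le n$ only the common term $S/n$ contributes, giving $\tfrac{r}{n^2}I_d$, while $\Var(x_i')=(1+\tfrac{r}{n^2})I_d$. For the new samples, $\Var(\hat\mu)=\tfrac1n I_d$ contributes $\tfrac1n I_d$ to both the diagonal ($\Var(x_i')=(1+\tfrac1n)I_d$) and the off-diagonal new–new blocks. The crucial computation is the old–new cross term: for $i\le n$ and a new index $n+\ell$, the positive contribution $\mathrm{Cov}\!\left(x_i,\tfrac1n\sum_k x_k\right)=\tfrac1n I_d$ is exactly cancelled by $-\mathrm{Cov}\!\left(\tfrac{S}{n},\epsilon_{n+\ell}\right)=-\tfrac1n I_d$, so this block vanishes — which is precisely what the decorrelation shift in Algorithm~\ref{alg:gaussian1} is designed to achieve. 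Hence $\tilde\Sigma = I_{md} + M\otimes I_d$, where $M$ is the $m\times m$ block-diagonal matrix $\tfrac{r}{n^2}\mathbf 1_n\mathbf 1_n^{\top}\oplus\tfrac1n\mathbf 1_r\mathbf 1_r^{\top}$, whose only nonzero eigenvalues are both equal to $\tfrac rn$.

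Consequently $\tilde\Sigma$ has eigenvalue $1+\tfrac rn$ with multiplicity $2d$ and eigenvalue $1$ with multiplicity $(m-2)d$, so all its eigenvalues lie in $[1,1+\tfrac rn]$, and the Gaussian relative-entropy formula gives $\DKL{N((\mu,\dots,\mu),\tilde\Sigma)}{N((\mu,\dots,\mu),I_{md})} = d\bigl(\tfrac rn-\ln(1+\tfrac rn)\bigr)\le d\,(r/n)^2$, using $\ln(1+x)\ge x-x^2/2$ for $x\ge 0$. Choosing the constant in $m=n+O(n/\sqrt d)$ small enough, say $r\le cn/\sqrt d$ with $c\le \sqrt2/3$, this bound is at most $c^2\le 2/9$, so Pinsker's inequality yields $D_{TV}(Z_m,X_m)\le c/\sqrt2\le 1/3$, proving the proposition. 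I do not anticipate a genuine obstacle here: the only step that needs care is the block-covariance bookkeeping — in particular verifying the old–new cancellation — after which everything reduces to reading off the spectrum of the rank-two matrix $M$ and a one-line relative-entropy estimate.
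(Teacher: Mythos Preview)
Your proof is correct and follows essentially the same route as the paper: identify $Z_m$ as an $md$-dimensional Gaussian with mean $(\mu,\dots,\mu)$, compute the covariance block by block (the decisive step being the old--new cancellation that the shift in Algorithm~\ref{alg:gaussian1} is designed to produce), and then bound the TV distance between the resulting covariance and $I_{md}$. The only difference is in that last step --- the paper invokes the Frobenius-norm bound $D_{TV}\le\tfrac32\|\tilde\Sigma-I\|_F$ from \cite{devroye2018total} and evaluates $\|\tilde\Sigma-I\|_F=\sqrt{3d}\,(m-n)/n$, whereas you diagonalize $\tilde\Sigma$, read off its two nontrivial eigenvalues $1+r/n$, compute the Gaussian KL in closed form, and finish with Pinsker; your route is a touch more self-contained but otherwise equivalent.
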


\begin{proof}
The amplification procedure consists of two parts. The first uses the provided samples to learn the empirical mean $\hat \mu$ and generate $m-n$ new samples from $\mathcal N(\hat{\mu}, I)$. The second part adjusts the first $n$ samples to ``hide" the correlations that would otherwise arise from using the empirical mean to generate additional samples. 

Let $\epsilon_{n+1}, \epsilon_{n+2}, \dots, \epsilon_{m}$ be $m-n$ i.i.d. samples generated from $N\left(0, I\right)$, and let $\hat{\mu} = \frac{\sum_{i=1}^n x_i}{n}$. The amplification procedure will return $x_1', \dots, x_m'$ with: 
\begin{equation}
    x_i'=
    \begin{cases}
  x_i - \frac{\sum_{j=n+1}^m \epsilon_j}{n}, & \text{for}\ i \in \{1,2, \dots, n\} \\
      \hat{\mu} + \epsilon_i, & \text{for}\ i \in \{n+1, n+2, \dots, m\}.\\
    \end{cases}
  \end{equation}
We will show later in this proof that subtracting $\frac{\sum_{j=n+1}^m \epsilon_j}{n}$ will serve to decorrelate the first $n$ samples from the remaining samples. 

Let $f_{\mathcal{C},n,m} : S^n  \rightarrow S^m$ be the random function denoting the map from $X_n$ to $Z_m$ as described above, where $S = \R^d$. We need to show 
$$D_{TV}\left(Z_m = f_{\mathcal{C},n,m}\left(X_n\right), X_m\right) \le 1/3,$$
where $X_n$ and $X_m$ denote $n$ and $m$ independent samples from $N\left(\mu, I\right)$ respectively.

For ease of understanding, we first prove this result for the univariate case, and then extend it to the general setting.

So consider the setting where $d=1$. In this case, $X_m$ corresponds to $m$ i.i.d. samples from a Gaussian with mean $\mu$, and variance $1$. $X_m$ can also be thought of as a single sample from an $m-$dimensional Gaussian $N\Big(\underbrace{\left(\mu, \mu, \dots, \mu\right)}_{m \text{ times}}, I_{m \times m}\Big)$. Now, $f_{\mathcal{C},n,m}$  is a map that takes $n$ i.i.d samples from $N\left(\mu,1\right)$, $m-n$ i.i.d samples ($\epsilon_i$)
from $N\left(0,1\right)$, and outputs $m$ samples that are a linear combination of the $m$ input samples. So, $f_{\mathcal{C},n,m}\left(X_n\right)$ can be thought of as a
$m-$dimensional random variable obtained by applying a linear transformation to a sample drawn from
$N\Big(\Big(\underbrace{\mu, \mu, \dots, \mu}_{n \text{ times}}, \underbrace{0, 0,\dots, 0}_{m-n \text{ times}}\Big), I_{m \times m}\Big)$. As a linear transformation applied to Gaussian random variable outputs a Gaussian random variable, we get that $Z_m = \left(x_1', x_2', \dots, x_m'\right)$ is distributed according to $N\left(\tilde{\mu},\Sigma_{m \times m}\right)$, where $\tilde{\mu}$ and $\Sigma_{m \times m}$ denote the mean and covariance. Note that $\tilde{\mu} = \underbrace{\left(\mu, \mu, \dots, \mu\right)}_{m \text{ times}}$ as  

\begin{equation}
    \E[x_i']=
    \begin{cases}
      \E[x_i] - \E \left [\frac{\sum_{j=n+1}^m \epsilon_j}{n} \right ] = \mu - 0 = \mu, & \text{for}\ i \in \{1,2, \dots, n\} \\
      \E[\hat{\mu}] + \E[\epsilon_i] = \mu + 0 = \mu, & \text{for}\ i \in \{n+1, n+2, \dots, m\}.\\
    \end{cases}
\end{equation}
Next, we compute the covariance matrix $\Sigma_{m \times m}$.

For $i=j$, and $i \in \{1,2, \dots, n\}$, we get
\begin{equation*}
\begin{split}
\Sigma_{ii} & = \E[\left(x_i' - \mu\right)^2] \\
 & = \E\left [\left(x_i - \mu\right)^2 \right ] + \E \Bigg [\left(\frac{\sum_{j=n+1}^m \epsilon_j}{n}\right)^2\Bigg]\\
 & = 1 + \frac{m-n}{n^2}.
\end{split}
\end{equation*}

For $i=j$, and $i \in \{n+1,n+2, \dots, n+m\}$, we get
\begin{equation*}
\begin{split}
\Sigma_{ii} & = \E\left [\left(x_i' - \mu\right)^2 \right ] \\
 & = \E \left [\left(\hat{\mu} - \mu\right)^2 \right ] + \E \left [ \epsilon_i^2 \right]\\
 & = \frac{1}{n} +1 .
\end{split}
\end{equation*}

For $i \in \{1,2, \dots, n\}, j \in \{n+1,n+2, \dots, n+m\}$, we get
\begin{equation*}
\begin{split}
\Sigma_{ij} & = \E \left [\left(x_i' - \mu\right)\left(x_j' - \mu\right) \right] \\
 & = \E\left[\left(x_i - \frac{\sum_{k=n+1}^m \epsilon_k}{n} - \mu\right)\left(\hat{\mu} + \epsilon_j - \mu\right)\right]\\
 & = \E[\left(x_i - \mu\right)\left(\hat{\mu} - \mu\right)] - \E\left[\left(\frac{\sum_{k=n+1}^m \epsilon_k}{n}\right)\left(\epsilon_j\right)\right]\\
 & = \frac{1}{n} - \frac{1}{n}\\
 & = 0.
\end{split}
\end{equation*}

For $i, j \in \{1,2, \dots, n\}, i \neq j$, we get
\begin{equation*}
\begin{split}
\Sigma_{ij} & = \E\Big[\left(x_i' - \mu\right)\left(x_j' - \mu\right)\Big] \\
 & = \E\Bigg[\left(x_i - \frac{\sum_{k=n+1}^m \epsilon_k}{n} - \mu\right)\left(x_j - \frac{\sum_{k=n+1}^m \epsilon_k}{n} - \mu\right)\Bigg]\\
 & = \E \left [ (x_i - \mu) (x_j - \mu ) \right ] + \E\left[\left(\frac{\sum_{k=n+1}^m \epsilon_k}{n}\right)^2\right]\\
 & = \frac{m-n}{n^2}.
\end{split}
\end{equation*}

For $i, j \in \{n+1,n+2, \dots, m\}, i \neq j$, we get
\begin{equation*}
\begin{split}
\Sigma_{ij} & = \E[\left(x_i' - \mu\right)\left(x_j' - \mu\right)] \\
 & = \E[\left(\hat{\mu} + \epsilon_i - \mu\right)\left(\hat{\mu} + \epsilon_j - \mu\right)]\\
 & = \E \left [\left(\hat{\mu} - \mu\right)^2 \right ]\\
 & = \frac{1}{n}.
\end{split}
\end{equation*}
This gives us
\[
\Sigma_{m \times m} = \begin{bmatrix} 

    1+\frac{m-n}{n^2} & \frac{m-n}{n^2} & \cdots  &  \frac{m-n}{n^2} & 0 & 0  & \cdots  & 0 \\
    \frac{m-n}{n^2} & 1+\frac{m-n}{n^2} & \cdots & 
    \frac{m-n}{n^2}  & 0 & 0   & \cdots  & 0 \\
    \vdots & \cdots & \cdots & \vdots & \vdots & \cdots & \cdots & \vdots\\
    \vdots & \cdots   & \cdots  & \frac{m-n}{n^2} & \vdots & \cdots  & \cdots & \vdots\\
    \frac{m-n}{n^2} & \cdots  &  \frac{m-n}{n^2} &  1+\frac{m-n}{n^2} & 0 & 0  & \cdots & 0  \\
    0 & \cdots  & \cdots & 0 & 1 + \frac{1}{n} & \frac{1}{n}   & \cdots & \frac{1}{n}\\
    0 & \cdots  & \cdots & 0 &  \frac{1}{n} & 1+\frac{1}{n}   & \cdots & \frac{1}{n}\\
    \vdots & \cdots & \cdots & \vdots & \vdots & \cdots & \cdots & \vdots\\
    \vdots & \cdots & \cdots & \vdots & \vdots & \cdots & \cdots &  \frac{1}{n}\\
    0 & \cdots  & \cdots & 0 &  \frac{1}{n} & \cdots &\frac{1}{n}    & 1+\frac{1}{n}\\
    \end{bmatrix}.
\]
Now, finding $D_{TV}\left(Z_m, X_m\right)$ reduces to computing $D_{TV}\left(N\left(\tilde{\mu}, I_{m \times m}\right), N\left(\tilde{\mu}, \Sigma_{m \times m}\right)\right)$. From \cite[Theorem 1.1]{devroye2018total}, we know that $D_{TV}\left(N\left(\tilde{\mu}, I_{m \times m}\right), N\left(\tilde{\mu}, \Sigma_{m \times m}\right)\right) \leq \min\left(1, \frac{3}{2}||\Sigma - I||_F\right)$. This gives us

\begin{equation}\label{unidim}
\begin{split}
D_{TV}\left(N\left(\tilde{\mu}, I_{m \times m}\right), N\left(\tilde{\mu}, \Sigma_{m \times m}\right)\right) &\leq \min\left(1, \frac{3}{2}||\Sigma - I||_F\right)\\
&\leq \sqrt{\frac{3}{2}\left(\left(\frac{m - n}{n^2}\right)^2 n^2 + \frac{1}{n^2} \left(m-n\right)^2\right)}\\
& = \frac{\sqrt{3}\left(m-n\right)}{n}.
\end{split}
\end{equation}

Now, for $d > 1$, by a similar argument as above, $X_m$ can be thought of as $d$ independent samples from the following $d$ distributions: $N\Big(\underbrace{\left(\mu_1, \mu_1, \dots, \mu_1\right)}_{m \text{ times}}, I_{m \times m}\Big), \dots,N\Big(\underbrace{\left(\mu_d, \mu_d, \dots, \mu_d\right)}_{m \text{ times}}, I_{m \times m}\Big)$.
Or equivalently, as a single sample from $N\Big(\Big(\underbrace{\mu_1, \mu_1, \dots, \mu_1}_{m \text{ times}},\dots ,\underbrace{\mu_d, \mu_d, \dots, \mu_d}_{m \text{ times}}\big), I_{md \times md}\Big)$. Similarly, $Z_m$ can be thought of as $d$ independent samples from $N\Big(\underbrace{\left(\mu_i, \mu_i, \dots, \mu_i\right)}_{m \text{ times}}, \Sigma_{m \times m}\Big)$, or equivalently, a single sample from $N\Big(\Big(\underbrace{\mu_1, \mu_1, \dots, \mu_1}_{m \text{ times}},\dots ,\underbrace{\mu_d, \mu_d, \dots, \mu_d}_{m \text{ times}}\big), \tilde{\Sigma}_{md \times md}\Big)$ where $\tilde{\Sigma}_{md \times md}$ is a block diagonal matrix with block diagonal entries equal to $\Sigma_{m \times m}$ (denoted as $\Sigma$ in the figure).

\[
\tilde{\Sigma}_{md \times md} = \begin{bmatrix} 

    \Sigma_{} & 0 & \cdots  &  \cdots & 0 \\
    0 & \Sigma_{} & 0 & \cdots  &  \vdots \\
    \vdots & 0 & \ddots  &  0 & \vdots \\
    \vdots & \cdots & 0  &  \ddots & 0\\
   0 & \cdots & \cdots  & 0 &  \Sigma_{}
    \end{bmatrix}.
\]
Similar to \eqref{unidim}, we get

\begin{equation*}
\begin{split}
&D_{TV}\Big(
N\Big(\Big(\underbrace{\mu_1, \mu_1, \dots, \mu_1}_{m \text{ times}},\dots ,\underbrace{\mu_d, \mu_d, \dots, \mu_d}_{m \text{ times}}\big), I_{md \times md}\Big)
, N\Big(\Big(\underbrace{\mu_1, \mu_1, \dots, \mu_1}_{m \text{ times}},\dots ,\underbrace{\mu_d, \mu_d, \dots, \mu_d}_{m \text{ times}}\big), \tilde{\Sigma}_{md \times md}\Big)
\Big)\\
& \hspace{130pt} \leq  \min\left(1, \frac{3}{2}||\tilde{\Sigma} - I||_F\right)\\
& \hspace{130pt} \leq \sqrt{d \left(\frac{3}{2}\left(\left(\frac{m - n}{n^2}\right)^2 n^2 + \frac{1}{n^2} \left(m-n\right)^2\right)\right)}\\
& \hspace{130pt} = \frac{\sqrt{3d}\left(m-n\right)}{n}.
\end{split}
\end{equation*}
If we want the total variation distance to be less than $\delta$, we get $m-n = O\left(\frac{n \delta}{\sqrt{d}}\right)$. Setting $\delta = \frac{1}{3}$, we get $m = n+  O\left(\frac{n}{\sqrt{d}}\right)$, which completes the proof.
\end{proof}
\subsection{Lower Bound}
In this section we prove the lower bound from Theorem \ref{thm:gaussian_full} and show that it is impossible to amplify beyond $O \left ( \frac{ n }{\sqrt d} \right )$ samples. 

\begin{proposition}\label{prop:gaussian_lower}
Let $\mathcal{C}$ denote the class of $d-$dimensional Gaussian distributions $N\left(\mu, I\right)$ with unknown mean $\mu$. There is a fixed constant $c$ such that for all sufficiently large $d,n>0$, $\mathcal{C}$ does not admit an $\left(n, m\right)$ amplification procedure for $m\ge n+\frac{cn}{\sqrt{d}}$.
\end{proposition}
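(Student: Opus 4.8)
The plan is to prove the lower bound by a Bayesian (averaging) reduction: exhibit a single verifier that is valid for every $N(\mu,I)$ but rejects the output of \emph{any} $(n,m)$ amplifier with high probability when $\mu$ is drawn from a well‑chosen prior; averaging over the prior then produces a concrete $D=N(\mu^\ast,I)$ on which any amplifier's output is far in total variation from $D^m$, contradicting Definition~\ref{def1}. By the invariance reduction already established in this section, it suffices to treat $\Sigma=I$.

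Concretely, fix the prior $\mu\leftarrow N(0,\sqrt d\, I)$ and let the verifier $v$, on input a set of $m$ points, compute their empirical mean $\hat\mu_m$ and \textsc{accept} iff $\norm{\mu-\hat\mu_m}^2\le \frac dm+\frac{c_1\sqrt d}{m}$ (recall the verifier knows $D=N(\mu,I)$, hence knows $\mu$). Validity is immediate: on genuine i.i.d.\ draws $\hat\mu_m-\mu\sim N(0,I/m)$, so $m\norm{\mu-\hat\mu_m}^2\sim\chi^2_d$, and a standard $\chi^2$ tail bound makes this event hold with probability at least $3/4$ for a sufficiently large absolute constant $c_1$, uniformly in $\mu$. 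The only place the modelling assumption ``the verifier does not see $X_n$'' enters is that $\hat\mu_m$ — the empirical mean of the amplifier's output — is a (randomized) function of $X_n$ alone; so I would next lower bound $\norm{\mu-\hat\mu_m}^2$ for an \emph{arbitrary} amplifier. Under the prior the posterior is $\mu\mid X_n\sim N(\bar\mu,\bar\sigma I)$ with $\bar\sigma=(n+1/\sqrt d)^{-1}$, so $\bar\sigma d=\frac dn - O(1/n^2)$; the prior variance $\sqrt d$ is chosen precisely so that the $1/\sqrt d$ term is negligible in the relevant regime $n=\Omega(\sqrt d)$. Conditioning on $X_n$ and the amplifier's internal randomness fixes $\hat\mu_m$; writing $a:=\norm{\bar\mu-\hat\mu_m}$ and decomposing $\norm{\mu-\hat\mu_m}^2=\norm{\mu-\bar\mu}^2+2\ip{\mu-\bar\mu}{\bar\mu-\hat\mu_m}+a^2$, the first term concentrates around $\bar\sigma d$ by a $\chi^2$ tail, the cross term is $N(0,\bar\sigma a^2)$ hence $O(\sqrt{\bar\sigma}\,a)$ in magnitude with high probability, and $a^2-O(\sqrt{\bar\sigma}\,a)\ge -O(\bar\sigma)$ by the AM–GM inequality. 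Hence, with probability at least $1-\delta$ over $\mu\mid X_n$ (for any small constant $\delta$), $\norm{\mu-\hat\mu_m}^2\ge \frac dn-O\!\left(\frac{\sqrt d}{n}\right)$, no matter what the amplifier does.

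It remains to compare thresholds. For $m\ge n+\frac{cn}{\sqrt d}$ and large $d$ one has $\frac dm+\frac{c_1\sqrt d}{m}\le \frac dn-\frac{(c-c_1)\sqrt d}{n}+\frac{c^2}{n}$, so choosing $c$ a sufficiently large absolute constant (larger than $c_1$ plus the hidden constant in the bound of the previous paragraph) makes $\frac dn-O\!\left(\frac{\sqrt d}{n}\right)$ strictly larger than $\frac dm+\frac{c_1\sqrt d}{m}$ for all sufficiently large $d$. Thus the amplifier's output is rejected with probability at least $1-\delta$ over the joint draw of $(\mu,X_n,\text{randomness})$; by Markov there is a fixed $\mu^\ast$ for which, with $D=N(\mu^\ast,I)$, the amplifier's output is accepted with probability $O(\delta)$ while genuine samples are accepted with probability at least $3/4$, forcing $D_{TV}\!\left(f_{\mathcal C,n,m}(X_n),D^m\right)\ge 3/4-O(\delta)>1/3$ and contradicting Definition~\ref{def1}.

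I expect the main obstacle to be the uniform‑over‑all‑amplifiers lower bound on $\norm{\mu-\hat\mu_m}^2$: one must handle amplifiers whose output mean $\hat\mu_m$ lands far from the posterior mean $\bar\mu$ (this is exactly what the AM–GM step buys), and one must track all constants carefully enough that the genuine gap $\frac dn-\frac dm\approx \frac{c\sqrt d}{n}$ dominates both the $\chi^2$ fluctuation terms ($O(\sqrt d/n)$) and the verifier's built‑in slack $\frac{c_1\sqrt d}{m}$. Everything else — the conjugate‑Gaussian posterior computation, the $\chi^2$ concentration, and the averaging/Markov step — is routine.
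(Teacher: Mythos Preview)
Your proposal is correct and follows essentially the same approach as the paper: the same prior $N(0,\sqrt d\,I)$, the same verifier based on $\norm{\mu-\hat\mu_m}^2$, the same conjugate‑Gaussian posterior, the same decomposition of $\norm{\mu-\hat\mu_m}^2$ about the posterior mean $\bar\mu$, and the same quadratic/AM--GM step to absorb the cross term. The only cosmetic differences are that the paper uses a two‑sided test $\bigl|\norm{\hat\mu_m-\mu}^2-d/m\bigr|\le 10\sqrt d/m$ and appeals directly to Definition~\ref{def2} rather than extracting a fixed $\mu^\ast$ via Markov.
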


\begin{proof}

Note that it is sufficient to prove the theorem for $m=n+cn/\sqrt{d}$ for a fixed constant $c$, as an amplification procedure for $m>n+cn/\sqrt{d}$ implies an amplification procedure for $m=n+cn/\sqrt{d}$ by discarding the residual samples. To prove the theorem for $m=n+cn/\sqrt{d}$, we will define a distribution $D_{\mu}$ over $\mu$ and a verifier $v(Z_{m})$ for the distribution $N(\mu,I)$ which takes as input a set $Z_{m}$ of $m$ samples, such that: (i) for all $\mu$, the verifier $v(Z_{m})$ will accept with probability $1-1/e^2$ when given as input a set $Z_{m}$ of $m$ i.i.d. samples from $N(\mu,I)$, (ii) but will reject any $(n,m)$ amplification procedure for $m=n+cn/\sqrt{d}$ with probability $1-1/e^2$, where the probability is with respect to the randomness in $\mu\leftarrow D_{\mu}$, the set $X_n$ and in any internal randomness of the amplifier. Note that by Definition \ref{def2} of an amplification procedure, this implies that there is no $(n,m)$ amplification procedure for $m=n+cn/\sqrt{d}$. \\

We now define the distribution $D_{\mu}$ and the verifier  $v(Z_{m})$. We choose $D_{\mu}$ to be $N(0,\sqrt{d}I)$. Let $\hat{\mu}_{m}$ be the mean of the samples $Z_{m}$ returned by the amplification procedure. The verifier  $v(Z_{m})$ performs the following test, accepts if $\hat{\mu}_{m}$ passes the test, and rejects otherwise---
\begin{align}
\Big|\norm{\hat{\mu}_{m}-\mu}^2-d/m\Big|\le 10\sqrt{d}/m.\label{eq:simple_test}
\end{align}
We first show that $m$ i.i.d. samples from $N(\mu, I)$ pass the above test with probability $1-1/e^2$. We will use the following concentration bounds for a $\chi^2$ random variable $Z$ with $d$ degrees of freedom \cite{laurent2000adaptive,wainwright2015basic},
\begin{align}
\Pr\Big[Z - d \ge 2\sqrt{dt} + 2t\Big] &\le e^{-t},\; \forall \; t>0,\label{eq:chisquare}\\
\Pr\Big[|Z - d|\ge dt] &\le 2e^{-dt^2/8}, \; \forall \; t\in (0,1).\label{eq:chisquare2}
\end{align}
Note that $\hat{\mu}_{m}\leftarrow N(\mu,\frac{I}{m})$ for $m$ i.i.d. samples from $N(\mu, I)$. Hence by using \eqref{eq:chisquare2} and setting $t=10/\sqrt{d}$, 
\begin{align*}
\Pr\Big[\Big|\norm{\hat{\mu}_{m}-\mu}^2-d/m\Big|>10\sqrt{d}/m\Big] \le 1/e^3.
\end{align*}
Hence $m$ i.i.d. samples from $N(\mu, I)$ pass the  test with probability at least $1-1/e^2$.

We now show that for $\mu$ sampled from $ D_{\mu}=N(0,\sqrt{d}I)$, the verifier rejects any $(n,m)$ amplification procedure for $m=n+ cn/\sqrt{d}$ with high probability over the randomness in $\mu$. Let $D_{\mu \mid X_n}$ be the posterior distribution of $\mu$ conditioned on the set $X_n$. We will show that for any set $X_n$ received by the amplifier, the amplified set $Z_{m}$ is accepted by the verifier with probability at most $1/e^2$ over $\mu\leftarrow D_{\mu \mid X_n}$. This implies that with probability $1-1/e^2$ over the randomness in $\mu\leftarrow D_{\mu}$, the set $X_n$ and any internal randomness in the amplifier, the amplifier cannot output a set $Z_{m}$ which is accepted by the verifier, completing the proof of  Proposition \ref{prop:gaussian_lower}.

To show the above claim, we first find the posterior distribution $D_{\mu \mid X_n}$ of $\mu$ conditioned on the amplifier's set $X_n$. Let $\mu_0$ be the mean of the set $X_n$. By standard Bayesian analysis (see, for instance, \citep{murphy2007conjugate}), the posterior distribution $D_{\mu \mid X_n}=N(\bar{\mu},\bar{\sigma}^2 I)$, where,
\begin{align*}
\bar{\mu} = \frac{n}{n+1/\sqrt{d}}\mu_0, \quad \bar{\sigma}^2 = \frac{1}{n+1/\sqrt{d}}.
\end{align*}

We show that any set $Z_m$ returned by the amplifier for $m=n+100n/\sqrt{d}$ fails the test \eqref{eq:simple_test} with probability $1-1/e^2$ over the randomness in $\mu \mid X_n$. We expand $\norm{\hat{\mu}_{m}-\mu}^2$ in the test as follows,
\begin{align*}
\norm{\hat{\mu}_{m}-\mu}^2 &= \norm{\hat{\mu}_{m}-\overline{\mu} - (\mu - \overline{\mu})}^2\\
&=\norm{\hat{\mu}_{m}-\bar{\mu}}^2 -2 \ip{\hat{\mu}_{m}-\bar{\mu}}{{\mu}-\bar{\mu}} + \norm{\mu-\bar{\mu}}^2.
\end{align*}
By using \eqref{eq:chisquare2} and setting $t=10/\sqrt{d}$,  with probability $1-1/e^3$,
\begin{align*}
\norm{\mu-\bar{\mu}}^2&\ge \frac{d}{n+1/\sqrt{d}}-\frac{10\sqrt{d}}{n+1/\sqrt{d}}\\
&\ge \Big(\frac{d}{n}\Big) \Big(1-\frac{1}{n\sqrt{d}}\Big) -\frac{10\sqrt{d}}{n}\\
&= d/n -\sqrt{d}/n^2 -10\sqrt{d}/n\\
&\ge d/n  -12\sqrt{d}/n.
\end{align*}
As $\mu \mid X_n \leftarrow N(\bar{\mu},\bar{\sigma}^2)$, $\ip{\hat{\mu}_{m}-\bar{\mu}}{{\mu}-\bar{\mu}}$ is distributed as $N(0,\bar{\sigma}^2\norm{\hat{\mu}_{m}-\bar{\mu}}^2)$. Hence with probability $1-1/e^3$, $\ip{\hat{\mu}_{m}-\bar{\mu}}{{\mu}-\bar{\mu}}\le 10\norm{\hat{\mu}_{m}-\bar{\mu}}/\sqrt{n+1/\sqrt{d}}\le 10\norm{\hat{\mu}_{m}-\bar{\mu}}/\sqrt{n}$. Therefore, with probability $1-2/e^3$, 
\begin{align*}
\norm{\hat{\mu}_{m}-\mu}^2 \ge \norm{\hat{\mu}_{m}-\bar{\mu}}^2 -(20/\sqrt{n}) \norm{\hat{\mu}_{m}-\bar{\mu}}+ d/n-12\sqrt{d}/n.
\end{align*}
We claim that $\norm{\hat{\mu}_{m}-\bar{\mu}}^2-20{\norm{\hat{\mu}_{m}-\bar{\mu}}/\sqrt{n}}\ge-100/n$. To verify, note that $\norm{\hat{\mu}_{m}-\bar{\mu}}^2-20{\norm{\hat{\mu}_{m}-\bar{\mu}}/\sqrt{n}}+100/n$ is a non-negative quadratic function in $\norm{\hat{\mu}_{m}-\bar{\mu}}$. Therefore, with probability at least $1-2/e^3$,
\begin{align*}
\norm{\hat{\mu}_{m}-\mu}^2 \ge -100/n + d/n-\sqrt{d}/n^2-10\sqrt{d}/n \ge d/n-20\sqrt{d}/n.
\end{align*}
To pass \eqref{eq:simple_test}, $\norm{\hat{\mu}_{m}-\mu}^2 \le d/m + 10\sqrt{d}/m$. Therefore, if an amplifier passes the test with probability greater than $1-2/e^3$ over the randomness in $\mu \mid X_n$  for $m= n + 100n/\sqrt{d}$, then,
\begin{align*}
 &d/n-20\sqrt{d}/n \le \norm{\hat{\mu}_{m}-\mu}^2 \le d/m + 10\sqrt{d}/m, \\
 \implies& d/n-20\sqrt{d}/n \le d/m + 10\sqrt{d}/m,\\
 \implies& d/n-20\sqrt{d}/n \le d/(n+100n/\sqrt{d}) + 10\sqrt{d}/(n+100n/\sqrt{d}),\\
 \implies& d/n-20\sqrt{d}/n \le d/n(1+100/\sqrt{d})^{-1} + 10\sqrt{d}/n(1+100/\sqrt{d})^{-1},\\
 \implies& d/n-20\sqrt{d}/n \le d/n(1-50/\sqrt{d}) + 10\sqrt{d}/n(1-50/\sqrt{d}), \\
 \implies& -20\sqrt{d}/n \le -40\sqrt{d}/n -1000/n,\\
 \implies& -20\sqrt{d}/n \le -30\sqrt{d}/n ,
\end{align*}
which is a contradiction. Hence for $m= n + 100n/\sqrt{d}$, every $(n,m)$ amplifier is rejected by the verifier  with probability greater than $1-1/e^2$ over the randomness in $\mu$, the set $X_n$, and any internal randomness of the amplifier.

\end{proof}

\subsection{Upper Bound for Procedures which Returns a Superset of the Input Samples}

In this section we prove the upper bound in Proposition \ref{prop:gaussian_modify_full}. The algorithm itself is presented in Algorithm \ref{alg:gaussian2}. Before we proceed with the proof we prove a brief lemma that will be useful for bounding the total variation distance.

\begin{lemma}\label{lem:coupling}
Let $X, Y_1, Y_2$ be random variables such that with probability at least $1 - \epsilon$ over X, $D_{TV}(Y_1 | X, Y_2| X) \leq \epsilon'$, then $D_{TV}((X, Y_1), (X, Y_2)) \leq \epsilon + \epsilon'$.
\end{lemma}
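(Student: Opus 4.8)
The plan is to reduce to the standard decomposition of total variation distance across the conditioning coordinate. Since the pairs $(X,Y_1)$ and $(X,Y_2)$ literally share the same first coordinate, they have identical marginal laws on $X$; call this marginal $\mu$. The first step is to record the elementary identity that, for any two joint laws on the pair having common $X$-marginal $\mu$,
\[
D_{TV}\big((X,Y_1),(X,Y_2)\big) \;=\; \E_{x\sim\mu}\big[D_{TV}(Y_1\mid X=x,\; Y_2\mid X=x)\big].
\]
This follows by writing $D_{TV}$ as one-half the $L^1$ distance between the joint densities (or p.m.f.s), factoring each joint density as $\mu(x)$ times the corresponding conditional density, and pulling $\mu(x)$ out of the inner integral over the second coordinate. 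In the settings relevant to this paper (finite discrete domains, or Gaussians) regular conditional distributions exist, so this identity is unproblematic.

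Next I would split the expectation according to the event $\mathcal A = \{x : D_{TV}(Y_1\mid X=x,\ Y_2\mid X=x)\le \epsilon'\}$, which by hypothesis satisfies $\mu(\mathcal A)\ge 1-\epsilon$. On $\mathcal A$ the integrand is at most $\epsilon'$, and on $\mathcal A^c$ it is at most $1$ since total variation distance never exceeds $1$, so
\[
D_{TV}\big((X,Y_1),(X,Y_2)\big) \;\le\; \epsilon'\cdot \Pr[\mathcal A] + 1\cdot\Pr[\mathcal A^c] \;\le\; \epsilon' + \epsilon,
\]
which is the claim. An equivalent route, if one prefers a coupling argument: for each $x\in\mathcal A$ take the maximal coupling of $Y_1\mid X=x$ and $Y_2\mid X=x$ (agreeing with probability at least $1-\epsilon'$), for $x\notin\mathcal A$ take any coupling, and couple the $X$-coordinates to be equal; the resulting coupling of $(X,Y_1)$ and $(X,Y_2)$ disagrees with probability at most $\Pr[\mathcal A^c]+\epsilon'\le \epsilon+\epsilon'$, and $D_{TV}$ is bounded by this disagreement probability.

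There is essentially no obstacle here; the only points that need care are that the decomposition identity genuinely requires the two joint laws to have the same $X$-marginal (which holds by construction, as the first coordinate is the same random variable $X$ in both pairs), and that one should bound the contribution of the low-probability event $\mathcal A^c$ using only $D_{TV}\le 1$ rather than attempting anything sharper.
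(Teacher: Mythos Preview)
Your proof is correct and follows essentially the same approach as the paper: both establish the identity $D_{TV}((X,Y_1),(X,Y_2)) = \E_{x\sim\mu}[D_{TV}(Y_1\mid X=x,\,Y_2\mid X=x)]$ by factoring the common $X$-marginal out of the $L^1$ expression, then bound the expectation by $\epsilon'$ on the good event and by $1$ on its complement. Your additional coupling-based alternative is a nice touch but not needed for the argument.
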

\begin{proof}
From the definition of total variation distance, we know
\begin{equation*}
\begin{split}
D_{TV}((X, Y_1), (X, Y_2)) &= \frac{1}{2}\sum_{x,y} \left \lvert \Pr((X, Y_1) = (x,y)) - \Pr((X, Y_2) = (x, y))) \right \rvert\\
& = \frac{1}{2}\sum_{x,y} \Pr(X = x) \left \lvert \Pr \left (Y_1 = y \mid X = x \right ) - \Pr(Y_2 = y \mid X = x) \right \rvert\\
& = \sum_{x} \Pr(X = x) \ \frac{1}{2}\sum_{y} \lvert \Pr(Y_1 = y \mid X = x) - Pr(Y_2 = y \mid X = x) \rvert\\
& = \sum_x \Pr(X = x) \ d_{TV}(Y_1 \mid X=x, Y_2 \mid X=x).
\end{split}
\end{equation*}
Since with probability $(1 - \epsilon)$  over $X$, $d_{TV}(Y_1 \mid X, Y_2 \mid X)$ is at most $\epsilon'$, and total variation distance is always bounded by 1, we get $\sum_x Pr(X = x) \ d_{TV}(Y_1 \mid X=x, Y_2 \mid X=x) \leq  (1-\epsilon)\epsilon' + \epsilon \leq \epsilon' + \epsilon$. 

This same proof with summations appropriately replaced with integrals will go through when the random variables in consideration are defined over continuous domains.
\end{proof}

Now we prove the upper bound from Proposition \ref{prop:gaussian_modify_full}. As in Proposition \ref{prop:gaussian-ub-main}, it is sufficient to prove this bound only for the case of identity covariance gaussians as our algorithm in this case is also invariant to linear transformation.
\begin{proposition}
Let $\mathcal{C}$ denote the class of $d-$dimensional Gaussian distributions $N(\mu, I)$ with unknown mean $\mu$. There is a constant $c'$ such that for any $\epsilon$, and $n = \frac{d}{\eps \log d}$, and for sufficiently large $d$, there is an $\left(n,n+c'n^{\frac{1}{2}-9\eps}\right)$ amplification protocol for $\mathcal{C}$ that returns a superset of the original $n$ samples.
\end{proposition}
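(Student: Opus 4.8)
The plan is to analyze Algorithm~\ref{alg:gaussian2}, whose design splits the input into two halves: the first $n/2$ samples $x_1,\dots,x_{n/2}$ are returned unchanged and used only to form the empirical mean $\hat\mu$, and the remaining $n/2$ samples are interleaved with fresh draws from $N(\hat\mu,I)$ via the per-slot $\mathrm{Bernoulli}\!\left(\frac{2r}{r+n/2}\right)$ process, where $r=m-n=c'n^{1/2-9\eps}$. The reason for this particular design is that, conditioned on $\hat\mu$, the second-half samples are i.i.d.\ $N(\mu,I)$ and independent of $\hat\mu$; moreover, on the event $G$ that the without-replacement draws never exhaust the second half (this needs the number of ``tails'' in the loop, which has mean $n/2-r$ and variance $O(r)$, to stay below $n/2$, so $\Pr[G^c]\le e^{-\Omega(r)}$), each of the $n/2+r$ interleaved output slots is an independent draw from the mixture $\tilde D_{\hat\mu}:=(1-w)N(\mu,I)+w\,N(\hat\mu,I)$ with $w=\frac{2r}{r+n/2}=\Theta(r/n)$. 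Hence, conditioned on $\hat\mu$ and $G$, the output $Z_m$ is a uniformly random permutation of $n/2$ i.i.d.\ $N(\mu,I)$ samples together with $n/2+r$ i.i.d.\ $\tilde D_{\hat\mu}$ samples, while $D^m$ is likewise a permutation of $n/2+(n/2+r)$ i.i.d.\ $N(\mu,I)$ samples.

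Given this structure, coupling the first $n/2$ coordinates exactly and the remaining ones optimally shows $D_{TV}(Z_m\mid\hat\mu,\,D^m)\le\Pr[G^c]+D_{TV}\big(\tilde D_{\hat\mu}^{\,N},N(\mu,I)^{N}\big)$ with $N=n/2+r$, and by tensorization of the Hellinger affinity, $D_{TV}\big(\tilde D_{\hat\mu}^{\,N},N(\mu,I)^{N}\big)\le\sqrt{2N\,H^2(\tilde D_{\hat\mu},N(\mu,I))}$. The Hellinger term reduces to one dimension: by rotational invariance take $\mu=0$ and $\hat\mu=(a,0,\dots,0)$ with $a=\|\hat\mu-\mu\|$; the $d-1$ coordinates orthogonal to $\hat\mu$ have the same law under both distributions, so the affinity factorizes and $H^2(\tilde D_{\hat\mu},N(\mu,I))=H^2\big((1-w)N(0,1)+w\,N(a,1),\,N(0,1)\big)$. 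Writing $\phi$ for the standard normal density and $\rho(x)=e^{ax-a^2/2}$ for the likelihood ratio of $N(a,1)$ to $N(0,1)$, and using $\sqrt{1+t}\ge 1+\tfrac t2-\tfrac{t^2}8$ (valid for all $t\ge-1$, hence for $t=w(\rho-1)$) together with $\int\phi(\rho-1)\,dx=0$ and $\int\phi(\rho-1)^2\,dx=e^{a^2}-1$, I get $H^2(\tilde D_{\hat\mu},N(\mu,I))\le\frac{w^2}{8}\big(e^{a^2}-1\big)\le\frac{w^2}{8}e^{a^2}$.

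It then remains to plug in parameters. Since $\hat\mu$ is the mean of $n/2$ i.i.d.\ $N(\mu,I)$ samples, $\frac n2\|\hat\mu-\mu\|^2$ is a $\chi^2_d$ variable, so by \eqref{eq:chisquare} with $t=\sqrt d$, with probability $1-e^{-\sqrt d}$ we have $a^2=\|\hat\mu-\mu\|^2\le\frac2n(d+O(d^{3/4}))=2\eps\log d+o(\log d)\le 3\eps\log n$ (using $\log n=(1-o(1))\log d$); call such $\hat\mu$ \emph{good}. For good $\hat\mu$, $e^{a^2}\le n^{3\eps}$, and since $w=\Theta(r/n)=\Theta(c'n^{-1/2-9\eps})$ we get $w^2e^{a^2}=O(c'^2 n^{-1-18\eps+3\eps})=O(c'^2/n)$, hence $H^2(\tilde D_{\hat\mu},N(\mu,I))=O(c'^2/n)$ and, as $N=n/2+r\le n$, $D_{TV}\big(\tilde D_{\hat\mu}^{\,N},N(\mu,I)^{N}\big)=O(c')$. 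Finally, apply Lemma~\ref{lem:coupling} with $X=\hat\mu$, folding $\Pr[G^c]\le e^{-\Omega(r)}$ and $\Pr[\hat\mu\text{ bad}]\le e^{-\sqrt d}$ into the ``$\epsilon$'' term, to conclude $D_{TV}(Z_m,D^m)\le D_{TV}((\hat\mu,Z_m),(\hat\mu,D^m))\le O(c')+e^{-\Omega(r)}+e^{-\sqrt d}$, which is below $1/3$ once $c'$ is a sufficiently small absolute constant and $d$ is large enough that $r=c'n^{1/2-9\eps}\to\infty$ (which in particular restricts attention to the nontrivial regime, say $\eps<1/18$). The crux of the argument is the Hellinger bound: a single draw from $N(\hat\mu,I)$ is statistically far from $N(\mu,I)$ — $\chi^2(N(\hat\mu,I),N(\mu,I))=e^{a^2}-1$ is of order $n^{2\eps}$, which is enormous — and the only reason amplification survives is that this blow-up is multiplied by the tiny squared mixture weight $w^2=\Theta(n^{-1-18\eps})$, leaving $O(1/n)$, just small enough to survive tensorizing over $m\approx n$ samples. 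Making these exponents line up (the $9\eps$ in $r$ against the $e^{\|\hat\mu-\mu\|^2}\approx n^{2\eps}$ factor) is the one genuinely delicate point; the two low-probability events, the coupling, and the reduction to one dimension are all routine.
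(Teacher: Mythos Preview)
Your approach is the paper's: sample-split, couple to the auxiliary process where the last $n/2+r$ slots are genuinely i.i.d.\ from the mixture (this is the paper's $Z_m'$, and your event $G$ is exactly the coupling event), tensorize Hellinger over those slots, and reduce to one dimension by rotating $\hat\mu-\mu$ onto the first axis. Your Hellinger computation is in fact cleaner than the paper's---the paper splits the integral at $x=\|\hat\mu\|/2$ and ends up with $H^2\lesssim (r/n)^2 e^{3\|\hat\mu-\mu\|^2}$, whereas your $\chi^2$-style bound gives $e^{\|\hat\mu-\mu\|^2}$; the factor of $3$ in the exponent is precisely the origin of the ``$9\eps$'' in the proposition, so your route would actually support a sharper exponent.

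Two small slips to repair. First, the inequality $\sqrt{1+t}\ge 1+\tfrac t2-\tfrac{t^2}8$ is \emph{false} for $t\in(-1,0)$ (try $t=-\tfrac12$), and your $t=w(\rho-1)$ lands there whenever $\rho<1$. The clean substitute is $(1-\sqrt{1+t})^2=\dfrac{t^2}{(1+\sqrt{1+t})^2}\le t^2$ for all $t\ge-1$, which gives $H^2\le w^2\!\int\phi(\rho-1)^2=w^2(e^{a^2}-1)$ and only changes your constant. Second, when you invoke Lemma~\ref{lem:coupling}, condition on the entire first half $(x_1,\dots,x_{n/2})$ rather than on $\hat\mu$ alone: conditioned only on $\hat\mu$, the first $n/2$ slots of $Z_m$ are Gaussians \emph{constrained to have sample mean $\hat\mu$}, not i.i.d.\ $N(\mu,I)$, so ``coupling the first $n/2$ coordinates exactly'' to those of $D^m$ does not go through unless you also define $\hat\mu$ as a function of the first $n/2$ coordinates of $D^m$. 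The paper avoids this by taking $X=Z_m^{1'}$ in Lemma~\ref{lem:coupling}; doing so here changes nothing in your estimates, since the only feature of the first half you use downstream is $\|\hat\mu-\mu\|$.
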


\alggaussiannonmod

\begin{proof}
Let $m = n+r$ ,where $r = O\left(n^{\frac{1}{2} - 9\epsilon}\right)$. We begin by describing the procedure to generate $m$ samples $Z_m = \left(x_1', x_2', \dots, x_m'\right)$, given $n$ i.i.d. samples $X_{n} = \left(x_1, x_2, \dots, x_{n}\right)$ drawn from $N\left(\mu, I\right)$. Let $\tilde{\mu} = \sum_{i=1}^{n/2} \frac{x_i}{n/2}$ denote the mean of first $\frac{n}{2}$ samples in $X_{n}$. For distributions $P$ and $Q$, let $(1-\alpha) P + \alpha Q$ denote the mixture distribution where $(1 - \alpha)$ and $\alpha$ are the respective mixture weights. 

We first describe how to generate $Z_m' = (x_1'',x_2'', \dots, x_m'')$, given $n$ i.i.d samples $X_{n}$.
For $i \in \{1, 2, \dots, \frac{n}{2}\}$, we set $x_i''= x_i$. For $i \in \{\frac{n}{2}+1, \frac{n}{2}+2, \dots, m\}$, we set $x_i''$ to a random independent draw from the mixture distribution $\left(1 - \frac{10r}{r+\frac{n}{2}}\right)N(\mu, I_{d \times d}) + \frac{10r}{r+\frac{n}{2}}N(\tilde{\mu}, I_{d \times d})$.

Now, the construction of $Z_m$ is very similar to $Z_m'$ except that we don't have access to $N(\mu, I_{d \times d})$ to sample points from the mixture distribution. So, for $Z_m$, set $x_i'= x_i$ for $i \in \{1, 2, \dots, \frac{n}{2}\}$. For $i \in \{\frac{n}{2}+1, \frac{n}{2}+2, \dots, m\}$, we use samples from $(x_{\frac{n}{2}+1},x_{\frac{n}{2}+2}, \dots, x_{n} )$ instead of producing new samples from $N(\mu, I_{d \times d})$. With probability $\left(1 - \frac{10r}{r+\frac{n}{2}}\right)$, we generate a random sample without replacement from $\left(x_{\frac{n}{2}+1}, x_{\frac{n}{2}+2}, \dots, x_{n}\right)$, and with probability $\frac{10r}{r+\frac{n}{2}}$ we generate a sample from $N(\tilde{\mu},I)$, and set $x_i'$ equal to that sample.  As we sample from $(x_{\frac{n}{2}+1}, x_{\frac{n}{2}+2}, \dots, x_{n})$ without replacement, we can generate only $\frac{n}{2}$ samples  this way. The expected number of samples needed is $(\frac{n}{2}+r)(1 - \frac{10r}{r+\frac{n}{2}}) = \frac{n}{2} - 9r$, and with high probability, we won't need more than $\frac{n}{2}$ samples. If the total number of required samples  from $\left(x_{\frac{n}{2}+1}, x_{\frac{n}{2}+2}, \dots, x_{n}\right)$ turns out to be more than $\frac{n}{2}$, we set $x_i$ to an arbitrary $d-$dimensional vector (say $x_1$) but this happens with low probability, leading to insignificant loss in total variation distance.

Let $X_m$ denote the random variable corresponding to $m$ i.i.d. samples from $N(\mu, I)$. We want to show that $D_{TV}(X_m, Z_m)$ is small. By triangle inequality, $D_{TV}(X_m, Z_m) \leq D_{TV}(X_m, Z_m') + D_{TV}(Z_m', Z_m)$.

We first bound $D_{TV}(Z_m, Z_m')$. Let $Y, Y' \leftarrow \text{Binomial} \left (r+\frac{n}{2}, 1 - \frac{10r}{r+\frac{n}{2}} \right)$ be random variables that denotes the number of samples from $(1 - \frac{10r}{r+\frac{n}{2}})$ mixture component in $Z_m$ and $Z_m'$ respectively. Let $\Omega$ denote the sample space of $Z_m$ and $Z_m'$.
\begin{equation*}
    \begin{split}
        D_{TV}(Z_m, Z_m') &= \max_{E  \subseteq \Omega} \ \lvert \Pr(Z_m \in E) - \Pr(Z_m' \in E)\rvert \\
        &= \max_{E  \subseteq \Omega} \ \lvert \Pr \left(Z_m \in E \mid Y \leq \frac{n}{2}\right)\Pr \left(Y \leq \frac{n}{2}\right) + \Pr \left(Z_m \in E \mid Y > \frac{n}{2}\right)\Pr \left(Y > \frac{n}{2} \right)\\    & \hspace{34pt} -\Pr \left(Z_m' \in E \mid Y' \leq \frac{n}{2} \right)\Pr \left(Y' \leq \frac{n}{2}\right) - \Pr \left(Z_m' \in E \mid Y' > \frac{n}{2} \right)\Pr \left(Y' > \frac{n}{2}\right) \rvert \\
    \end{split}
\end{equation*}
Since $Y$ and $Y'$ have the same distribution, we have $\Pr\left(Y' \leq \frac{n}{2}\right) = \Pr \left(Y \leq \frac{n}{2} \right)$, and $\Pr \left( Y' > \frac{n}{2} \right) = \Pr \left(Y > \frac{n}{2} \right)$. This gives us
\begin{equation*}
    \begin{split}
    D_{TV}(Z_m, Z_m') 
        &= \max_{E  \subseteq \Omega} \ \lvert \Pr \left(Z_m \in E \mid Y \leq \frac{n}{2}\right)\Pr \left(Y \leq \frac{n}{2}\right) + \Pr \left(Z_m \in E \mid Y > \frac{n}{2}\right)\Pr \left(Y > \frac{n}{2}\right)\\    & \hspace{34pt} -\Pr \left(Z_m' \in E \mid Y' \leq \frac{n}{2}\right)\Pr \left(Y \leq \frac{n}{2}\right) - \Pr \left(Z_m' \in E \mid Y' > \frac{n}{2}\right)\Pr \left(Y > \frac{n}{2}\right) \rvert \\
        &\leq \max_{E  \subseteq \Omega} \Pr \left(Y \leq \frac{n}{2}\right) \mid \Pr \left(Z_m \in E | Y \leq \frac{n}{2} \right) - \Pr \left(Z_m' \in E | Y' \leq \frac{n}{2} \right) \mid \\ 
        & \hspace{34pt} + \Pr \left(Y > \frac{n}{2}\right) \mid  \Pr \left(Z_m \in E | Y > \frac{n}{2}\right) - \Pr \left(Z_m' \in E | Y' > \frac{n}{2}\right) \mid. \\
    \end{split}
\end{equation*}
where the last inequality holds because of the triangle inequality.
Now, note that $\Pr(Z_m \in E | Y \leq \frac{n}{2}) = \Pr(Z_m' \in E | Y' \leq \frac{n}{2})$ for all $E$, and $\lvert  \Pr(Z_m \in E | Y > \frac{n}{2}) - \Pr(Z_m' \in E | Y' > \frac{n}{2}) \rvert \leq 1$. This gives us 
$$D_{TV}(Z_m, Z_m') \leq \Pr \left(Y > \frac{n}{2}\right).$$
We know $\E[Y] = \frac{n}{2}-9r$, and $\Var[Y] = \left(\frac{n}{2}+r \right)\left(1 - \frac{10r}{\frac{n}{2}+r}\right) \left(\frac{10r}{\frac{n}{2}+r}\right) \leq 10r$. Using Bernstein's inequality, we get 
\begin{equation*}
    \begin{split}
        \Pr \left[Y > \frac{n}{2} \right] &= \Pr(Y - \E[Y] > 9r)\\
        &\leq \exp \left(\frac{-(9r)^2}{2(10r + 9r/3)}\right)\\
        &\leq \exp\left(\frac{-81r}{26}\right).
    \end{split}
\end{equation*}
So we get $D_{TV}(Z_m, Z_m') \leq \exp\left(\frac{-81r}{26}\right)$.

Next, we calculate $D_{TV}(X_m, Z_m')$. We write $X_m = (X_m^1, X_m^2)$ and $Z_m' = (Z_m^{1'}, Z_m^{2'})$ where $X_m^1$ and $Z_m^{1'}$ denote the first $\frac{n}{2}$ samples of $X_m$ and $Z_m'$ , and $X_m^2$ and $Z_m^{2'}$ denote rest of their samples. Since $X_m^1$ and $Z_m^{1'}$ are drawn from the same distribution, $\Pi_{i=1}^\frac{n}{2} N(\mu, I)$, and $Z_m^{1'}, X_m^1, X_m^2$ are independent, we get $(Z_m^{1'}, X_m^2)$ and $(X_m^{1}, X_m^2)$ are equal in distribution. This gives us
\begin{align*}
D_{TV}(X_m, Z_m') = D_{TV}((X_m^1, X_m^2), (Z_m^{1'}, Z_m^{2'})) = D_{TV}((Z_m^{1'}, X_m^2), (Z_m^{1'}, Z_m^{2'})).
\end{align*}
From Lemma \ref{lem:coupling}, we know that, if with probability at least $1 - \epsilon_1$ over  $Z_m^{1'}$, $D_{TV}(X_m^2|Z_m^{1'}, Z_m^{2'}|Z_m^{1'}) \leq \epsilon_2$, then $D_{TV}((Z_m^{1'}, X_m^2), (Z_m^{1'}, Z_m^{2'})) \leq \epsilon_1 + \epsilon_2$. Here, $Z_m^{1'}$ and $X_m^2$ are independent, and the only dependency between $Z_m^{1'}$ and $Z_m^{2'}$ is via the mean $\tilde{\mu}$ of the elements of $Z_m^{1'}$. So $D_{TV}(X_m^2|Z_m^{1'}, Z_m^{2'}|Z_m^{1'}) = D_{TV}(X_m^2, Z_m^{2'}|\tilde{\mu})$. We will show that with high probability over $\tilde{\mu}$, this total variation distance is small. \\

We first estimate $\norm{\tilde{\mu} - \mu}$. Note that $\E_{Z_m^{1'}}[\norm{\tilde{\mu} - \mu}^2] = \frac{2d}{n}$, and $\frac{n}{2}\norm{\tilde{\mu} - \mu}^2$ is a $\chi^2$ random variable with $d$ degrees of freedom.  To bound the deviation of $\norm{\tilde{\mu} - \mu}^2$ around it's mean, we will use the following concentration bound for a $\chi^2$ random variable $R$ with $d$ degrees of freedom \citep[Example 2.5]{wainwright2015basic}.
\begin{align*}
\Pr[|R - d| \ge dt] \le 2e^{-dt^2/8}, \text{ for all } t \in (0,1).
\end{align*}
This gives us $\Pr(\lvert \frac{n}{2}\norm{\tilde{\mu} - \mu}^2  - d \rvert  \geq {0.5 d}) \leq 2e^{-d/32}$, that is, $\norm{\tilde{\mu} - \mu} \leq \sqrt{\frac{3 d}{n}} \leq \sqrt{3 \epsilon \log d}$ with probability at least $1 - 2e^{-d/32}$.

$X_m^2$ is distributed as the product of $\frac{n}{2}+r$ gaussiaus $\Pi_{i = 1}^{\frac{n}{2}+r} N(\mu, I_{d \times d})$ and $Z_m^{2'}|\tilde{\mu}$ is distributed as the product of $\frac{n}{2}+r$ mixture distributions  $\Pi_{i=1}^{\frac{n}{2}+r} (1 - \frac{10r}{\frac{n}{2}+r})N(\mu, I_{d \times d}) + \frac{10r}{\frac{n}{2}+r}N(\tilde{\mu}, I_{d \times d})$. We evaluate the total variation distance between these two distributions by bounding their squared Hellinger distance, since squared Hellinger distance is easy to bound for product distributions and is within a quadratic factor of the total variation distance for any distribution. By the subadditivity of the squared Hellinger distance, we get 
\begin{equation}\label{eq:thm-hellinger-subadd}
    \begin{split}
        &H\left(\Pi_{i = 1}^{\frac{n}{2}+r} N(\mu, I_{d \times d}),\Pi_{i=1}^{\frac{n}{2}+r} \left(1 - \frac{10r}{\frac{n}{2}+r}\right)N \left(\mu, I_{d \times d} \right) + \frac{10r}{\frac{n}{2}+r}N \left(\tilde{\mu}, I_{d \times d} \right) \right)^2\\
        &\leq \left(\frac{n}{2}+r \right) \ H\left( N \left(\mu, I_{d \times d}\right), \left(1 - \frac{10r}{\frac{n}{2}+r}\right)N \left(\mu, I_{d \times d}\right) + \frac{10r}{\frac{n}{2}+r}N \left(\tilde{\mu}, I_{d \times d}\right) \right)^2.
    \end{split}
\end{equation}
For sufficiently large $d$, $r$ and $n$ satisfy $r \leq \frac{n}{18}$, so we can use Lemma \ref{lem:hellinger-gaussian} to get
\begin{equation}\label{eq:thm-hellinger-ubound}
    \begin{split}
         H\left( N(\mu, I_{d \times d}), \left(1 - \frac{10r}{\frac{n}{2}+r} \right)N \left(\mu, I_{d \times d}\right) + \frac{10r}{\frac{n}{2}+r}N \left(\tilde{\mu}, I_{d \times d}\right) \right)^2
         &\leq \frac{576r^2}{n^2}e^{3\norm{\tilde{\mu} - \mu}^2}\\
         &\leq \frac{576r^2 d^{9\epsilon}}{n^2},
    \end{split}
\end{equation}
with probability at least $1 - 2e^{-d/32}$ over $\tilde{\mu}$.  From \eqref{eq:thm-hellinger-subadd} and \eqref{eq:thm-hellinger-ubound}, we get that with probability at least $1 - 2e^{-d/32}$ over $\tilde{\mu}$,
\begin{equation*}
    \begin{split}
        &H\left(\Pi_{i = 1}^{\frac{n}{2}+r} N(\mu, I_{d \times d}),\Pi_{i=1}^{\frac{n}{2}+r} \left(1 - \frac{10r}{\frac{n}{2}+r} \right) N(\mu, I_{d \times d}) + \frac{10r}{\frac{n}{2}+r}N(\tilde{\mu}, I_{d \times d}) \right)^2\leq (\frac{n}{2}+r)\frac{576r^2 d^{9\epsilon}}{n^2}\leq \frac{576r^2 d^{9\epsilon}}{n},
    \end{split}
\end{equation*}
where the last inequality holds because $r < \frac{n}{2}$. As the total variation distance between two distributions is upper bounded by $\sqrt{2}$ times their Hellinger distance, we get that with probability at least $1 - 2e^{-d/32}$ over $\tilde{\mu}$,
\begin{equation*}
    \begin{split}
        &D_{TV}\left(\Pi_{i = 1}^{\frac{n}{2}+r} N(\mu, I_{d \times d}),\Pi_{i=1}^{\frac{n}{2}+r} \left(1 - \frac{10r}{\frac{n}{2}+r}\right)N(\mu, I_{d \times d}) + \frac{10r}{\frac{n}{2}+r}N(\tilde{\mu}, I_{d \times d}) \right)\\
        &\leq \frac{24 \sqrt{2} r d^{9\epsilon/2}}{\sqrt{n}}\leq \frac{24\sqrt{2} r n^{9\epsilon}}{\sqrt{n}},
    \end{split}
\end{equation*}
where the last inequality is true because $n > \sqrt{d}$.

Now, from Lemma \ref{lem:coupling}, we know that if with probability at least $1 - \epsilon_1$ over  $Z_m^{1'}$, $D_{TV}(X_m^2|Z_m^{1'}, Z_m^{2'}|Z_m^{1'}) \leq \epsilon_2$, then $D_{TV}((Z_m^{1'}, X_m^2), (Z_m^{1'}, Z_m^{2'})) \leq \epsilon_1 + \epsilon_2$. In this
case, $\epsilon_1 = 2e^{-d/32}$ and $\epsilon_2 = \frac{24\sqrt{2} r n^{9\epsilon}}{\sqrt{n}}$, so we get $D_{TV}((Z_m^{1'}, X_m^2), (Z_m^{1'}, Z_m^{2'})) =  D_{TV}(X_m, Z_m') \leq 2e^{-d/32} + \frac{24\sqrt{2} r n^{9\epsilon}}{\sqrt{n}}$. We also know that $D_{TV}(Z_m, Z_m') \leq e^{-81r/26}$. Using triangle inequality, we get 
\begin{align*}
    D_{TV}(X_m, Z_m) \leq 2e^{-d/32} + \frac{24\sqrt{2} r n^{9\epsilon}}{\sqrt{n}} + e^{-81r/26}.
\end{align*}
 For $\delta > 2(2e^{-d/32} + e^{-81r/26})$, and for $  r \leq \frac{n ^{\frac{1}{2} - 9\epsilon} \delta }{48\sqrt{2}}$, we get $D_{TV}(X_m, Z_m) \leq \delta$. For $d$ large enough, setting $\delta= \frac{1}{3}$ and $r \leq \frac{n ^{\frac{1}{2} - 9\epsilon} }{144\sqrt{2}} $, we get the desired result. Note that we haven't tried to optimize the constants in this proof.

\begin{lemma}\label{lem:hellinger-gaussian}
Let $P = N(0, I_{d \times d})$ and $Q = N(\hat{\mu}, I_{d \times d})$ be $d$-dimensional gaussian distributions. For  $r \leq \frac{n}{18}$, $H\left(P, \left(1 - \frac{10r}{r+\frac{n}{2}}\right)P + \frac{10r}{r+\frac{n}{2}}Q\right) \leq  \frac{24r}{n}e^{\frac{3\norm{\hat{\mu}}^2}{2}}$.
\end{lemma}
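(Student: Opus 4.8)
The plan is to bound the squared Hellinger distance $H(\cdot,\cdot)^2$ and then take a square root at the end. Write $\alpha := \frac{10r}{r+n/2}$, so the second argument of $H$ is the mixture $M := (1-\alpha)P + \alpha Q$; the hypothesis $r \le n/18$ guarantees $\alpha \le 1$ (it is increasing in $r$ and equals $1$ at $r=n/18$), so $M$ is a genuine distribution, and moreover $r+n/2 \ge n/2$ gives $\alpha \le \frac{20r}{n}$. Denoting the densities of $P,Q,M$ by $p,q,m$, we have the pointwise identity $m-p = \alpha(q-p)$, and since $\sqrt m + \sqrt p \ge \sqrt p > 0$ pointwise (Gaussian densities are strictly positive), $(\sqrt m - \sqrt p)^2 = (m-p)^2/(\sqrt m + \sqrt p)^2 \le (m-p)^2/p = \alpha^2 (q-p)^2/p$. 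Integrating,
\[
H(P,M)^2 = \tfrac12\int (\sqrt m - \sqrt p)^2 \;\le\; \tfrac{\alpha^2}{2}\int \frac{(q-p)^2}{p} \;=\; \tfrac{\alpha^2}{2}\,\chi^2(Q\,\|\,P),
\]
where $\chi^2(Q\,\|\,P) = \int q^2/p - 1$ is the chi-squared divergence. The point to stress is that the saving of a full factor of $\alpha$ over the naive bound one gets from convexity of $P'\mapsto H(P,P')^2$ (which only yields $H^2 \le \alpha\, H(P,Q)^2 = O(\alpha)$) comes precisely from the cancellation $\int(q-p) = 0$, which turns $\int (m-p)^2/p$ into $\alpha^2\chi^2(Q\,\|\,P)$.

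The next step is the standard evaluation of the chi-squared divergence between two Gaussians with a common covariance: $\chi^2\big(N(\hat\mu, I)\,\|\,N(0,I)\big) = \int q^2/p - 1 = e^{\norm{\hat\mu}^2} - 1 \le e^{\norm{\hat\mu}^2}$. Substituting this and $\alpha \le 20r/n$ into the display gives $H(P,M)^2 \le \frac{200\,r^2}{n^2}\, e^{\norm{\hat\mu}^2}$, hence
\[
H(P,M) \;\le\; \frac{\sqrt{200}\,r}{n}\, e^{\norm{\hat\mu}^2/2} \;\le\; \frac{24\,r}{n}\, e^{3\norm{\hat\mu}^2/2},
\]
using $\sqrt{200} = 10\sqrt2 < 24$ and $e^{\norm{\hat\mu}^2/2}\le e^{3\norm{\hat\mu}^2/2}$; this is in fact slightly stronger than the stated bound, so the stated constant and exponent hold with room to spare.

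There is essentially no serious obstacle here; the only mildly delicate points are the pointwise inequality $(\sqrt m - \sqrt p)^2 \le (m-p)^2/p$ and the chi-squared identity for Gaussians, both routine. If one prefers to avoid manipulating the density ratio inside an integral, an equivalent route writes $H(P,M)^2 = \E_{X\sim P}\big[1 - \sqrt{1 + \alpha(L(X)-1)}\big]$ with $L = q/p$ the likelihood ratio, applies the elementary inequality $1-\sqrt{1+u} \le -u/2 + u^2/2$ valid for all $u\ge -1$ (checked by a one-variable argument, with equality at $u=-1$ and $u=0$), and uses $\E_P[L]=1$ so that the linear term vanishes in expectation, leaving $H(P,M)^2 \le \tfrac{\alpha^2}{2}\E_P[(L-1)^2] = \tfrac{\alpha^2}{2}\chi^2(Q\,\|\,P)$ — the same bound, finished as above.
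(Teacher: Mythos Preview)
Your proof is correct and in fact sharper than what the lemma asks for. Your route, however, is genuinely different from the paper's. The paper first rotates so that $\hat\mu$ lies along the first coordinate axis and uses subadditivity of squared Hellinger distance for products to reduce to a one-dimensional problem; it then writes out the one-dimensional Hellinger integral explicitly and bounds it by splitting the real line at $\norm{\hat\mu}/2$, applying the elementary inequalities $1-y \le \sqrt{1-y}$ and $\sqrt{1+y} \le 1+y/2$ in the two regions, and finishing with a crude Gaussian tail estimate that produces the $e^{3\norm{\hat\mu}^2}$ factor. You instead work directly in $d$ dimensions via the pointwise bound $(\sqrt m - \sqrt p)^2 \le (m-p)^2/p$, which collapses the squared Hellinger distance to $\alpha^2\,\chi^2(Q\Vert P)/2$ and then uses the closed-form $\chi^2(N(\hat\mu,I)\Vert N(0,I)) = e^{\norm{\hat\mu}^2}-1$. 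This is both shorter and yields the better exponent $e^{\norm{\hat\mu}^2/2}$ rather than $e^{3\norm{\hat\mu}^2/2}$; the paper's splitting of the integral is what costs the extra factor. One small remark: you are using the convention $H^2 = \tfrac12\int(\sqrt m - \sqrt p)^2$, whereas the paper drops the $\tfrac12$; under the paper's convention your bound becomes $H \le \frac{20r}{n}e^{\norm{\hat\mu}^2/2}$, which still comfortably implies the claimed $\frac{24r}{n}e^{3\norm{\hat\mu}^2/2}$.
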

\begin{proof}
We work in the rotated basis where $Q = N((\norm{\hat{\mu}},\underbrace{0, 0, \dots, 0}_{d-1 \text{ times}}), I_{d \times d})$ and $P = N(0, I_{d \times d})$. Let $P_1 = N(0, 1)$ and $Q_1 = N(\norm{\hat{\mu}}, 1)$ denote the projection of $P$ and $Q$ along the first coordinate axis respectively. Note that the mixture distribution in question is the product of $\left(\left(1 - \frac{10r}{r+\frac{n}{2}}\right)P_1 + \frac{10r}{r+\frac{n}{2}}Q_1\right)$ and $N(0, I_{d-1 \times d-1})$, and $P$ is the product of $P_1$ and $N(0, I_{d-1 \times d-1})$. Since the squared Hellinger distance is subadditive for product distributions, we get,
\begin{equation*}
    \begin{split}
        H\left(P, \left(1 - \frac{10r}{r+\frac{n}{2}}\right)P + \frac{10r}{r+\frac{n}{2}}Q\right)^2 &\leq H\left(P_1,\left(1 - \frac{10r}{r+\frac{n}{2}}\right)P_1 + \frac{10r}{r+\frac{n}{2}}Q_1\right)^2 + H(N(0,I_{d-1 \times d-1}), N(0,I_{d-1 \times d-1}))^2 \\
        &= H\left(P_1,\left(1 - \frac{10r}{r+\frac{n}{2}}\right)P_1 + \frac{10r}{r+\frac{n}{2}}Q_1\right)^2.
    \end{split}
\end{equation*}

Therefore, to bound the required Hellinger distance, we just need to bound $H \left(P_1, \left(1 - \frac{10r}{r+\frac{n}{2}}\right)P_1 + \frac{10r}{r+\frac{n}{2}}Q_1 \right)$. Let $p_1$ and $q_1$ denote the probability densities of $P_1$ and $\left(\left(1 - \frac{10r}{r+\frac{n}{2}}\right)P_1 + \frac{10r}{r+\frac{n}{2}}Q_1\right)$ respectively. We get $H\left(P_1,\left(1 - \frac{10r}{r+\frac{n}{2}}\right)P_1 + \frac{10r}{r+\frac{n}{2}}Q_1\right)^2 = \int_{- \infty}^{\infty} \left(\sqrt{p_1} - \sqrt{q_1}\right )^2 dx$
\begin{equation*}
    \begin{split}
      &= \int_{- \infty}^{\infty} \left(\sqrt{\frac{1}{\sqrt{2 \pi}}e^{-x^2/2}}  - \sqrt{\left(1 -\frac{10r}{r+\frac{n}{2}}\right)\frac{1}{\sqrt{2 \pi}} e^{-x^2/2} +  \frac{10r}{r+\frac{n}{2}} \frac{1}{\sqrt{2 \pi}} e^{-(x  - \norm{\hat{\mu}})^2/2}}\right)^2 dx\\
      &= \int_{- \infty}^{\infty}
      \frac{1}{\sqrt{2 \pi}} e^{-x^2/2}
      \left(1 - \sqrt{1 - \frac{10r}{r+\frac{n}{2}} + \frac{10r}{r+\frac{n}{2}}e^{\frac{- \norm{\hat{\mu}}^2 + 2 \norm{\hat{\mu}} x}{2}}}  \right)^2 dx.\\
    \end{split}
\end{equation*}
We will evaluate this integral as a sum of integral in two regions. 
\begin{enumerate}
    \item From $-\infty$ to $\nmu/2$:
\begin{equation*}
    \begin{split}
    \int_{- \infty}^{\nmu/2}
      \frac{1}{\sqrt{2 \pi}} e^{-x^2/2}
      \left(1 - \sqrt{1 - \frac{10r}{r+\frac{n}{2}} + \frac{10r}{r+\frac{n}{2}}e^{\frac{- \norm{\hat{\mu}}^2 + 2 \norm{\hat{\mu}} x}{2}}}  \right)^2 dx
      &\leq 
      \int_{- \infty}^{\nmu/2}
      \frac{1}{\sqrt{2 \pi}} e^{-x^2/2}
      \left(1 - \sqrt{1 - \frac{10r}{r+\frac{n}{2}} }  \right)^2 dx.
    \end{split}
\end{equation*}
Since $r \leq \frac{n}{18}$, we get $\frac{10r}{r + \frac{n}{2}} \leq 1$.  Using $1 - y \leq \sqrt{1 - y}$ for $0 \leq y \leq 1$, we get 
\begin{equation*}
    \begin{split}
      \int_{- \infty}^{\nmu/2}
      \frac{1}{\sqrt{2 \pi}} e^{-x^2/2}
      \left(1 - \sqrt{1 - \frac{10r}{r+\frac{n}{2}} }  \right)^2 dx
      &\leq 
      \int_{- \infty}^{\nmu/2}
      \frac{1}{\sqrt{2 \pi}} e^{-x^2/2}
      \left(\frac{10r}{r+\frac{n}{2}}   \right)^2 dx\\
      &\leq \frac{400r^2}{n^2}.
    \end{split}
\end{equation*}

\item From $\frac{\nmu}{2}$ to $\infty$, we get $\int_{\nmu/2}^{\infty}
      \frac{1}{\sqrt{2 \pi}} e^{-x^2/2}
      \left(1 - \sqrt{1 - \frac{10r}{r+\frac{n}{2}} + \frac{10r}{r+\frac{n}{2}}e^{\frac{- \norm{\hat{\mu}}^2 + 2 \norm{\hat{\mu}} x}{2}}}  \right)^2 dx$.
\begin{equation*}
    \begin{split}
      &\leq
      \int_{\nmu/2}^{\infty}
      \frac{1}{\sqrt{2 \pi}} e^{-x^2/2}
      \left( \sqrt{1 + \frac{10r}{r+\frac{n}{2}}  e^{\frac{- \norm{\hat{\mu}}^2 + 2 \norm{\hat{\mu}} x}{2}}}  - 1  \right)^2 dx.\\
    \end{split}
\end{equation*}
This is because $x \geq \nmu/2$, and therefore $\frac{10r}{r+\frac{n}{2}}e^{\frac{- \norm{\hat{\mu}}^2 + 2 \norm{\hat{\mu}} x}{2}} \geq \frac{10r}{r+\frac{n}{2}}$. Now, using $\sqrt{1+y} \leq 1 + \frac{y}{2}$, we get

  \begin{equation*}
    \begin{split}
&\int_{\nmu/2}^{\infty}
      \frac{1}{\sqrt{2 \pi}} e^{-x^2/2}
      \left( \sqrt{1 + \frac{10r}{r+\frac{n}{2}}  e^{\frac{- \norm{\hat{\mu}}^2 + 2 \norm{\hat{\mu}} x}{2}}}  - 1  \right)^2 dx\\
      &\leq
      \int_{\nmu/2}^{\infty}
      \frac{1}{\sqrt{2 \pi}} e^{-x^2/2}
      \left( {1 + \frac{5r}{r+\frac{n}{2}}  e^{\frac{- \norm{\hat{\mu}}^2 + 2 \norm{\hat{\mu}} x}{2}}}  - 1  \right)^2 dx\\
      &\leq
     \frac{100r^2}{n^2} \int_{\nmu/2}^{\infty}
      \frac{1}{\sqrt{2 \pi}} {e^{- \norm{\hat{\mu}}^2 + 2 \norm{\hat{\mu}} x}} e^{-x^2/2}
          dx\\
      &=
    \frac{100r^2}{n^2}e^{-\nmu^2} \int_{\nmu/2}^{\infty}
      \frac{1}{\sqrt{2 \pi}} {e^{2\nmu x - x^2/4}} e^{-x^2/4}
          dx.\\
    \end{split}
\end{equation*}

Since $2\nmu x - x^2/4 \leq 4 \nmu^2$, we get 
      
\begin{equation*}
    \begin{split}
    \frac{100r^2}{n^2} e^{-\nmu^2}\left(
      \int_{\nmu/2}^{\infty} 
      e^{2\nmu x - x^2/4}
      \frac{1}{\sqrt{2 \pi}} e^{-x^2/4}
      \right) dx
      &\leq
      \frac{100r^2}{n^2} e^{3\nmu^2}\left(
      \int_{\nmu/2}^{\infty} 
      \frac{1}{\sqrt{2 \pi}} e^{-x^2/4}
      \right) dx\\
      &\leq
      \frac{100\sqrt{2} r^2}{n^2} e^{3\nmu^2}\left(
      \int_{-\infty}^{\infty} 
      \frac{1}{\sqrt{4 \pi}} e^{-x^2/4}
      \right) dx\\
      &\leq 
      \frac{100\sqrt{2} r^2}{n^2} e^{3\nmu^2}.\\
    \end{split}
\end{equation*}

\end{enumerate}

Adding the two integrals, we get 
\begin{equation*}
    \begin{split}
        H \left(P_1, \left(1 - \frac{10r}{r+\frac{n}{2}}\right)P_1 + \frac{10r}{r+\frac{n}{2}}Q_1 \right)^2
        &\leq \frac{400r^2}{n^2} + \frac{100\sqrt{2}r^2}{n^2}e^{3\nmu^2}\\
        &\leq \frac{576r^2}{n^2}e^{3\nmu^2}.
    \end{split}
\end{equation*}
This gives us $H(P, \left(1 - \frac{10r}{r+\frac{n}{2}}\right)P + \frac{10r}{r+\frac{n}{2}}Q) \leq \frac{24r}{n} e^{3\nmu^2/2}$ which completes the proof.
\end{proof}
\end{proof}

\subsection{Lower Bound for Procedures which Return a Superset of the Input Samples}
In this section we prove the lower bound from Proposition \ref{prop:gaussian_modify_full}.

\begin{proposition}
Let $\mathcal{C}$ denote the class of $d-$dimensional Gaussian distributions $N\left(\mu, I\right)$ with unknown mean $\mu$. There is an absolute constant, $c$, such that for sufficiently large $d$, if $n \le \frac{cd}{\log d},$ there is no $(n,n+1)$ amplification procedure that always returns a superset of the original $n$ points.
\end{proposition}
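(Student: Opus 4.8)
The plan is to realize the heuristic from the overview by choosing a prior on $\mu$, computing the posterior of $\mu$ given the amplifier's input, and exhibiting a verifier that is valid for $N(\mu,I)$ yet rejects every amplifier forced to return a superset of $X_n$. I would take the prior $D_\mu = N(0,\sqrt d\, I)$ and let the verifier know $\mu$. For an input set $X_n$ with empirical mean $\hat\mu$, conjugacy (exactly as in Proposition~\ref{prop:gaussian_lower}) gives the posterior $\mu\mid X_n = N(\bar\mu,\bar\sigma^2 I)$ with $\bar\mu = \tfrac{n}{\,n+1/\sqrt d\,}\hat\mu$ and $\bar\sigma^2 = \tfrac{1}{\,n+1/\sqrt d\,}$, so $\bar\mu\approx\hat\mu$ and $\bar\sigma^2\approx 1/n$; the $\chi^2$ bounds \eqref{eq:chisquare}--\eqref{eq:chisquare2} further give $\norm{\hat\mu}=O(d^{3/4})$, hence $\norm{\bar\mu-\hat\mu}=O(d^{1/4}/n)$, with high probability over $\mu$ and $X_n$. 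The one structural fact I would exploit is that, conditioned on $X_n$ (and the amplifier's internal coins), the output $Z_{n+1}$ is a fixed object that is independent of $\mu\mid X_n$; and since $Z_{n+1}\supseteq X_n$, if $\new$ is the index of the single extra point then the leave-one-out mean $\hat\mu_{-\new}$ over $Z_{n+1}$ equals $\hat\mu$.

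On input $Z_{n+1}=(z_1,\dots,z_{n+1})$, the verifier computes for each $i$ the leave-one-out mean $\hat\mu_{-i}$ (the mean of the other $n$ points), and accepts iff for every $i\in[n+1]$
\begin{align*}
\bigl|\,\norm{z_i-\hat\mu_{-i}}^2 - d\bigl(1+\tfrac1n\bigr)\,\bigr| &\le C\sqrt{d\log n},\\
\ip{z_i-\hat\mu_{-i}}{\mu-\hat\mu_{-i}} &\ge \tfrac dn - C\sqrt{\tfrac{d\log n}{n}},
\end{align*}
for a suitably large absolute constant $C$. First I would show that $n+1$ i.i.d.\ draws from $N(\mu,I)$ are accepted with probability at least $1-1/e^2$: writing $z_i-\hat\mu_{-i}=u-v$ with $u=z_i-\mu\sim N(0,I)$ independent of $v=\hat\mu_{-i}-\mu\sim N(0,I/n)$, we get $\norm{z_i-\hat\mu_{-i}}^2=\norm u^2-2\ip uv+\norm v^2$ concentrated at $d+d/n$ within $O(\sqrt{d\log n})$, and $\ip{z_i-\hat\mu_{-i}}{\mu-\hat\mu_{-i}}=\norm v^2-\ip uv$ concentrated at $d/n$ within $O(\sqrt{d\log n/n})$; a union bound over the $n+1$ indices (costing the $\sqrt{\log n}$ factors, via \eqref{eq:chisquare}--\eqref{eq:chisquare2} and Gaussian tails) puts everything inside the stated windows once $C$ is large enough.

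Next I would show that every superset-returning amplifier is rejected with probability at least $1-1/e^2$ over $\mu\leftarrow D_\mu$, the draw of $X_n$, and the amplifier's coins. Fix $X_n$ and the amplifier's coins, and let $j$ be the resulting position of the extra point, so $\hat\mu_{-j}=\hat\mu$ is fixed. If $z_j$ violates the norm test the verifier already rejects; otherwise $\norm{z_j-\hat\mu}^2 = O(d)$. Now $z_j-\hat\mu$ is a fixed vector while $\mu-\bar\mu\sim N(0,\bar\sigma^2 I)$ is an independent isotropic Gaussian, so $\ip{z_j-\hat\mu}{\mu-\bar\mu}\sim N\!\bigl(0,\bar\sigma^2\norm{z_j-\hat\mu}^2\bigr)$, which is $O(\sqrt{d\log n/n})$ except with probability $1/\mathrm{poly}(n)$; and $|\ip{z_j-\hat\mu}{\bar\mu-\hat\mu}|\le\norm{z_j-\hat\mu}\,\norm{\bar\mu-\hat\mu}=O(d^{3/4}/n)=o(d/n)$ by Cauchy--Schwarz. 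Hence $\ip{z_j-\hat\mu}{\mu-\hat\mu}=O(\sqrt{d\log n/n})+o(d/n)$, and for $n\le cd/\log d$ with $c$ small enough this is strictly below $\tfrac dn - C\sqrt{d\log n/n}$ (since then $\sqrt{d\log n/n}\le\sqrt c\cdot\tfrac dn$). So the extra point fails the inner-product test wherever it lands, and the verifier rejects. Combining the two bounds and appealing to Definition~\ref{def2} gives the claimed impossibility.

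I expect the only real work to be the bookkeeping of constants in these two concentration estimates — choosing $C$ and $c$ so that the $\sqrt{\log n}$ slack in the inner-product window is wide enough for all $n+1$ genuine samples yet too narrow for the $O(\sqrt{d/n})$-scale alignment (and the even smaller $O(d^{3/4}/n)$ shift caused by $\bar\mu\neq\hat\mu$) that the amplifier's new point can muster; this trade-off is exactly what pins the threshold at $n=\Theta(d/\log d)$. Conceptually, the crux is the double use of the posterior $\mu\mid X_n$: it is nearly isotropic with per-coordinate variance $\approx 1/n$, and it is independent of the amplifier's output, so the amplifier's fabricated sample can correlate with the random direction $\mu-\hat\mu$ only about as well as a blind guess, i.e.\ a factor $\approx 1/\sqrt d$ worse than a true fresh draw from $N(\mu,I)$.
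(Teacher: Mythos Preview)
Your proposal is correct and follows essentially the same approach as the paper: the same prior $N(0,\sqrt d\,I)$, the same posterior computation, and the same two-part verifier (a norm test plus the inner-product test $\ip{z_i-\hat\mu_{-i}}{\mu-\hat\mu_{-i}}$), with the key step being that the amplifier's extra point, having $\hat\mu_{-\new}=\hat\mu$ fixed before $\mu\mid X_n$ is revealed, cannot achieve the $\Theta(d/n)$ alignment that a genuine sample gets. The only cosmetic difference is that the paper's norm test compares $\|z_i-\mu\|$ to $O(\sqrt d)$ (and then does a case split on $\|z_j-\bar\mu\|$), whereas you test $\|z_i-\hat\mu_{-i}\|$ directly and carry explicit $\sqrt{\log n}$ slack in the thresholds; both variants work for the same reasons.
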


\begin{proof}
The outline of the proof is very similar to the proof of Proposition \ref{prop:gaussian_lower}. As in the proof of Proposition \ref{prop:gaussian_lower}, we define a verifier $v(Z_{n+1})$ for the distribution $N(\mu,I)$ which takes as input $(n+1)$ samples $\{x_i'\in \mathbb{R}^d, i \in [n+1]\}$, and a distribution $D_{\mu}$ over $\mu$, such that if $n< O(d/\log(d))$; (i) for all $\mu$, the verifier will accept with probability $1-1/e^2$ when given as input a set $Z_{n+1}$ of $(n+1)$ i.i.d. samples from $N(\mu,I)$, (ii) but will reject any $(n,n+1)$ amplification procedure which does not modify the input samples with probability $1-1/e^2$, where the probability is with respect to the randomness in $\mu\leftarrow D_{\mu}$, the set $X_n$ and in any internal randomness of the amplifier. Note that by Definition \ref{def2} of an amplification procedure, this implies that there is no $(n,n+1)$ amplification procedure which does not modify the input samples for $n< O(d/\log(d))$. We choose $D_{\mu}$ to be $N(0,\sqrt{d}I)$. Let $\hat{\mu}_{-i}$ be the mean of the all except the $i$-th sample returned by the amplification procedure. The verifier performs the following tests, and accepts if all tests pass, and rejects otherwise---
\begin{enumerate}
\item $\forall \;i \in [n+1],\norm{x_{i}'-\mu}^2\le 15d$.
    \item $\forall\; i \in [n+1], \ip{x_{i}'-\hat{\mu}_{-i}}{\mu-\hat{\mu}_{-i}} \ge d/(4n)$.
\end{enumerate}
We first show that for a sufficiently large constant $C$ and $n< O(d/\log(d))$, $(n+1)$ i.i.d. samples from $N(\mu, I)$ pass the above tests with probability at least $1-1/e^2$. As $\norm{x_{i}'-\mu}^2$ is a  $\chi^2$ random variable with $d$ degrees of freedom, by the concentration bound for a $\chi^2$ random variable \eqref{eq:chisquare}, a true sample $x_{i}'$ passes the first test with failure probability $e^{-3d}$. Hence by a union bound, all samples $\{x_i, i \in [n+1]\}$ pass the first test with probability at least $1-de^{-3d}\ge 1- 1/e^3$. 
Let $E$ denote the following event, 
\begin{align*}
\forall\; i\in[n+1], \norm{\hat{\mu}_{n}-\mu}^2 \ge d/n-\sqrt{20d\log d}/n &\ge d/(2n),\\
\forall\; i\in[n+1],
\norm{\hat{\mu}_{n}-\mu}^2 \le d/n+\sqrt{20d\log d}/n &\le 2d/n.
\end{align*}
Note that $\hat{\mu}_{-i}\leftarrow N(\mu,\frac{I}{n})$. Hence, by using \eqref{eq:chisquare2} with $t=20\sqrt{\frac{\log d}{ d}}$, and a union bound over all $i\in [n+1]$, 
\begin{align*}
\Pr[E] \ge 1-1/e^3.
\end{align*}
Note that as $x_{i}'\leftarrow N(\mu,I)$, for a fixed $\hat{\mu}_{-i}$, $\ip{x_{i}'-\hat{\mu}_{-i}}{\mu-\hat{\mu}_{-i}} \leftarrow N(\norm{\hat{\mu}_{-i}-\mu}^2, \norm{\hat{\mu}_{-i}-\mu}^2)$. Hence conditioned on $E$, by standard Gaussian tail bounds,
\begin{align*}
\Pr\Big[ \ip{x_{i}'-\hat{\mu}_{-i}}{\mu-\hat{\mu}_{-i}} \le d/(2n) - \sqrt{20d\log d/n}  \Big] \le 1/n^2, \\
\implies \Pr\Big[ \ip{x_{i}'-\hat{\mu}_{-i}}{\mu-\hat{\mu}_{-i}} \le d/(4n) \Big] \le 1/n^2,
\end{align*}
where in the last step we use the fact that $n< \frac{d}{C\log d}$ for a large constant $C$. Therefore, conditioned on  $E$, $\{x_i, i \in [n+1]\}$ pass the third test with probability at least $1-1/e^3$. Hence by a union bound, $(n+1)$ samples drawn from $N(\mu,I)$ will satisfy all 3 tests with failure probability at most $1/e^2$. Hence for any $\mu$, the verifier accepts $n+1$ i.i.d. samples from $N(\mu,I)$ with probability at least $1-1/e^2$.\\

We now show that for $n<\frac{d}{C\log d}$ and $\mu$ sampled from $ D_{\mu}=N(0,\sqrt{d}I)$, the verifier rejects any $(n,n+1)$ amplification procedure which does not modify the input samples with high probability over the randomness in $\mu$ and the set $X_n$. Let $D_{\mu|X_n}$ be the posterior distribution of $\mu$ conditioned on the set $X_n$. As in Proposition \ref{prop:gaussian_lower}, $D_{\mu|X_n}=N(\bar{\mu},\bar{\sigma}^2 I)$, where,
\begin{align*}
\bar{\mu} = \frac{n}{n+1/\sqrt{d}}\mu_0, \quad \bar{\sigma}^2 = \frac{1}{n+1/\sqrt{d}}.
\end{align*}
We will show that with probability $1-e^{-3d}$ over the randomness in the set $X_n$ received by the amplifier and with probability $1-1/e^2$ over $\mu\leftarrow D_{\mu|X_n}$ and any internal randomness of the amplifier, the amplifier cannot output a set $Z_{n+1}$ which contains the set $X_n$ as a subset and which is accepted by the verifier. To show this, we first claim that $\norm{\mu_0}\le 30d^{3/4}$ with probability $1-e^{d}$. Note that $\mu_0 \leftarrow N(\mu, \frac{I}{n})$, where $\mu\leftarrow N(0,\sqrt{d}I)$.  By \eqref{eq:chisquare}, with probability at least $1-e^{-3d}$, $\norm{\mu}\le 15d^{3/4}$ and $\norm{\mu-\mu_0}\le 15\sqrt{d}$. Hence by the triangle inequality, $\norm{\mu_0}\le 30d^{3/4}$ with probability at least $1-e^{-3d}$. We now show that for sets $X_n$ such that $\norm{\mu_0}\le 30d^{3/4}$, $Z_{n+1}$ cannot pass the verifier with probability more than $1-e^2$ over the randomness in $\mu|X_n$. The proof consists of two cases, and the analysis of the cases is similar to the proof of Proposition \ref{prop:gaussian_lower}. Without loss of generality, assume that $Z_{n+1}=\{x_1',X_n\}$, hence $x_1'$ is the only sample not present in the set. We will show that either $x_1'$ or $\hat{\mu}_{-1}$ fail one of the three tests performed by the verifier with high probability.

\subsubsection*{Case 1: $\norm{x_{1}'-\bar{\mu}}^2 \ge 100d$.}

We show that the first test is not satisfied with high probability in this case. As $\mu|X_n \leftarrow N(\bar{\mu},\bar{\sigma}^2)$, hence by \eqref{eq:chisquare}, $ \norm{\mu-\bar{\mu}}^2 \le 15d/n $ with probability $1-e^{-3d}$. Therefore, if  $\norm{x_{1}'-\bar{\mu}}^2 \ge 100d$, then with probability $e^{-3d}$,
\begin{align*}
\norm{x_{1}'-{\mu}}^2 \ge (\sqrt{100d} - \sqrt{15d/n})^2 > 15d,
\end{align*}
in which case the first test is not satisfied. Hence in the first case, the amplifier succeeds with probability at most $e^{-3d}$.

\subsubsection*{Case 2: $\norm{x_{1}'-\bar{\mu}}^2 < 100d$.}

Note that for the sample $x_1'$, $\mu_{-1}=\mu_0$ as the last $n$ samples are the same as the original set $X_n$. We now bound $\norm{\hat{\mu}_{-1}-\bar{\mu}}$ as follows,
\begin{align*}
\norm{\hat{\mu}_{-1}-\bar{\mu}} = \Big\| \mu_0-\frac{n}{n+1/\sqrt{d}}\mu_0 \Big\| \le  \frac{\norm{\mu_0}}{n\sqrt{d}}\le \frac{30d^{1/4}}{n}.
\end{align*}
We now expand $\ip{x_1'-\hat{\mu}_{-1}}{\mu-\hat{\mu}_{-1}}$ in the third test as follows,
\begin{align*}
\ip{x_1'-\hat{\mu}_{-1}}{\mu-\hat{\mu}_{-1}}  &= \ip{x_1'-\bar{\mu}}{\mu-\bar{\mu}}  - \ip{\hat{\mu}_{-1}-\bar{\mu}}{\mu-\bar{\mu}} - \ip{x_1'-\bar{\mu}}{\hat{\mu}_{-1}-\bar{\mu}}+ \norm{\hat{\mu}_{-1}-\bar{\mu}}^2,\\
&\le \ip{x_1'-\bar{\mu}}{\mu-\bar{\mu}}  - \ip{\hat{\mu}_{-1}-\bar{\mu}}{\mu-\bar{\mu}} + \norm{x_1'-\bar{\mu}} \norm{\hat{\mu}_{-1}-\bar{\mu}}+ \norm{\hat{\mu}_{-1}-\bar{\mu}}^2.
\end{align*}
Note that $\ip{\hat{\mu}_{-1}-\bar{\mu}}{{\mu}-\bar{\mu}}$ is distributed as $N(0,\bar{\sigma}^2\norm{\hat{\mu}_{-1}-\bar{\mu}}^2)$ and hence with probability $1-1/e^3$ it is at most $10\norm{\hat{\mu}_{-1}-\bar{\mu}}/\sqrt{n}$. Similarly, with probability $1-1/e^3$, $\ip{x_1'-\bar{\mu}}{\mu-\bar{\mu}}$ is at most $10\norm{x_1'-\bar{\mu}}/\sqrt{n}$. Therefore, with probability $1-2/e^3$,
\begin{align*}
\ip{x_1'-\hat{\mu}_{-1}}{\mu-\hat{\mu}_{-1}} &\le 10\norm{x_1'-\bar{\mu}}/\sqrt{n}  + 10\norm{\hat{\mu}_{-1}-\bar{\mu}}/\sqrt{n} + \norm{x_1'-\bar{\mu}} \norm{\hat{\mu}_{-1}-\bar{\mu}}+ \norm{\hat{\mu}_{-1}-\bar{\mu}}^2,\\
&\le 100\sqrt{\frac{d}{n}} + 300\frac{d^{3/4}}{n^2} + 300\frac{d^{3/4}}{{n}} + 900\frac{\sqrt{d}}{n^2}\\
&\le 100\sqrt{\frac{d}{n}} + 1500\frac{d^{3/4}}{{n}}\\
&=  100\sqrt{\frac{n}{d}}\Big(\frac{d}{n}\Big) + \frac{1500}{d^{1/4}}\Big(\frac{d}{n}\Big).
\end{align*}
Hence for a sufficiently large constant $C$, $n<\frac{d}{C\log d}$ and $d$ sufficiently large, with probability $1-2/e^3$,
\begin{align*}
\ip{x_1'-\hat{\mu}_{-1}}{\mu-\hat{\mu}_{-1}} &\le \frac{d}{5n},
\end{align*}
which implies that the second test is not satisfied. Hence the amplifier succeeds in this case with probability at most $2/e^3$.

The overall success probability of the amplifier is the maximum success probability across the two cases, hence for sets $X_n$ such that the $\norm{\mu_0}\le 30d^{3/4}$, the verifier accepts the amplified set $Z_{n+1}$ with probability at most $2/e^3$. As $\Pr\Big[\norm{\mu_0}\le 30d^{3/4}\Big]\ge 1-e^{-3d}$, the overall success probability of the amplifier over the randomness in $\mu$, $X_n$ and any internal randomness of the amplifier is at most $1/e^2$.
\end{proof}

\section{Proofs: Discrete Distributions with Bounded Support}
\subsection{Upper Bound}
In this section we prove the upper bound from Theorem \ref{thm:discrete-full}. The algorithm itself is presented in Algorithm \ref{alg:discrete}.
For clarity of writing, we assume that the number of input samples is $4n$, instead of $n$.
\algdiscrete

\begin{proposition}
Let $\mathcal{C}$ denote the class of discrete distributions with support size at most $k$. For sufficiently large $k,$ and $m = 4n+O\left(\frac{n}{\sqrt{k}}\right)$, $\mathcal{C}$ admits an $\left(4n, m\right)$ amplification procedure.
\end{proposition}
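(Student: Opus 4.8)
The plan is to show that Algorithm~\ref{alg:discrete} is a valid $(4n,m)$ amplifier for $m=4n+\Theta(n/\sqrt k)$; it suffices to prove this for $m=4n+\lfloor cn/\sqrt k\rfloor$ for a suitable small constant $c$ (in the regime $n=o(\sqrt k)$ the amplification is trivial and the identity map works, so we may assume $n=\Omega(\sqrt k)$, hence $n$ is large). Write $D=(p_1,\dots,p_k)$, let $X_{N_1},X_{N_2}$ be the two subsamples with $N_1,N_2\sim\Poisson(n)$ i.i.d., let $r=\Theta(m-4n)$ control the number of freshly generated points, and let $Z_{N_2+R}$ be the shuffled middle block produced by \textsc{AmplifyDiscretePoissonized}. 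The first move is a reduction: for any two distributions on sequences that are, conditioned on their count vector $(\mathrm{count}_j)_{j\in[k]}$, uniform over orderings (this holds both for a shuffled multiset and for an i.i.d.\ sample), the total variation distance, and likewise the KL divergence, between them equals that between the induced count-vector laws, since the number-of-orderings factor cancels. So it suffices to control the count vector of $Z_{N_2+R}$ conditioned on $X_{N_1}$, and then splice everything together by a coupling.

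The central step is the double Poissonization. Conditioned on $X_{N_1}$: the counts $\mathrm{count}_j(X_{N_2})$ are independent $\Poisson(p_jn)$ (the Poissonization trick, using $N_2\sim\Poisson(n)$ and $X_{N_2}\perp X_{N_1}$), the added counts $\hat z_j\sim\Poisson(\hat p_jr)$ are independent, and the two families are independent, so $\mathrm{count}_j(Z_{N_2+R})\sim\Poisson(p_jn+\hat p_jr)$ independently across $j$. On the ideal side, the middle block has the same random length $L=N_2+R$; one checks $R\mid X_{N_1}\sim\Poisson((N_1/n)r)$ (here the normalization $\hat p_j:=\mathrm{count}_j(X_{N_1})/n$ — dividing by $n$, not $N_1$ — is used, so $\sum_j\hat p_j=N_1/n$), hence $L\mid X_{N_1}\sim\Poisson(\tilde n)$ with $\tilde n:=n+(N_1/n)r=\sum_j(p_jn+\hat p_jr)$, so by Poissonization again $D^{L}\mid X_{N_1}$ has independent $\Poisson(p_j\tilde n)$ counts. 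Therefore
\[
D_{KL}\big(\mathcal L(Z_{N_2+R}\mid X_{N_1})\,\big\|\,\mathcal L(D^{L}\mid X_{N_1})\big)=\sum_{j=1}^{k}D_{KL}\big(\Poisson(p_jn+\hat p_jr)\,\big\|\,\Poisson(p_j\tilde n)\big),
\]
by additivity of KL over product measures. The two Poisson means differ by $(\hat p_j-p_jN_1/n)\,r$, and both lie in $[p_jn,\,2p_jn]$ when $r\ll n$; hence each summand is $O\big((\hat p_j-p_jN_1/n)^2r^2/(p_jn)\big)$. Taking expectation over $X_{N_1}$ and using $\mathrm{Var}(\mathrm{count}_j(X_{N_1})\mid N_1)\le p_jN_1\le 2p_jn$, the $j$-th term has expectation $O(r^2/n^2)$, so $\E_{X_{N_1}}[D_{KL}]=O(kr^2/n^2)$, an arbitrarily small constant once $r=cn/\sqrt k$ with $c$ small. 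By Markov's inequality, $D_{KL}$ is below a small threshold with probability $\ge 9/10$ over $X_{N_1}$, and Pinsker's inequality then bounds the corresponding total variation distance by a small constant on that event.

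The last step assembles this via the coupling of Lemma~\ref{lem:coupling}: the full output is $Z_m=(X_{N_1},Z_{N_2+R},X_{\text{rest}})$, where conditioned on the block sizes $X_{N_1}$ and $X_{\text{rest}}$ are independent i.i.d.\ blocks of $D$ and $D^m$ splits into blocks of the same sizes, so coupling the first and third blocks to be identical leaves only the middle-block discrepancy already bounded. One then discards the low-probability ``bad size'' events: $\Pr[N_1+N_2>4n]$ and $\Pr[N_1>2n]$ are $e^{-\Omega(n)}$ (negligible since $n$ is large), and $\Pr[R<r/8]\le e^{-\Omega(r)}$ because conditioned on $N_1\gtrsim n$ we have $R\sim\Poisson(\gtrsim r)$ — the factor $8$ in the threshold is precisely what makes this a small constant even when $r$ is as small as $8$. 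Summing the middle-block TV distance, the Markov-failure probability, and the three tail probabilities falls below $1/3$ for an appropriate $c$.

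The main obstacle is the ``technical lemma'' of the second paragraph — the expected-KL bound. What makes it tractable is (i) the two Poissonization steps, which turn both count vectors into products over $[k]$ so KL is additive, and (ii) the choice to normalize by $n$ rather than $N_1$, which forces the two product-Poisson laws to share the total mean $\tilde n$ and keeps each coordinate pair within a factor $2$, so $D_{KL}(\Poisson(\lambda)\|\Poisson(\mu))$ is controlled by $(\lambda-\mu)^2/\min(\lambda,\mu)$ with no special handling for rare elements on which $\hat p_j$ deviates a lot from $p_j$. Everything else — the sufficient-statistic reduction, the coupling, and the Poisson-tail estimates — is routine, but the conditioning must be carried out carefully so that it respects the independence structure the Poissonization relies on.
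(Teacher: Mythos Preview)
Your approach is correct and genuinely simpler than the paper's at both technical steps. The skeleton (Poissonize, split into two halves, estimate from the first, shuffle into the second) is the same, but the paper compares the algorithm's middle-block counts $\prod_j\Poisson(np_j+r\hat p_j)$ to the \emph{fixed-parameter} ideal $\prod_j\Poisson((n+r)p_j)$ and then controls $\sum_j\delta_j^2/(np_j)$ via Chebyshev; the variance of that sum behaves like $\sum_j 1/(np_j)$, which blows up for rare elements, so the paper is forced to partition $[k]$ into $S=\{j:p_j\ge\epsilon/(2nk)\}$ and $S^c$ and run a separate KL-plus-Chebyshev argument on $S$ and a direct TV bound on $S^c$. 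You compare instead to the \emph{matched-total} ideal $\prod_j\Poisson(p_j\tilde n)$ with $\tilde n=n+(N_1/n)r$, and bound the expected KL directly: since both rates are $\ge p_jn$, the universal inequality $D_{KL}(\Poisson(\lambda)\Vert\Poisson(\mu))\le(\lambda-\mu)^2/(2\min(\lambda,\mu))$ yields $\E_{X_{N_1}}[D_{KL,j}]\le r^2/(2n^2)$ uniformly in $j$, and Markov replaces Chebyshev with no partition needed. (One remark: your claim that ``both lie in $[p_jn,2p_jn]$'' is false for $\lambda_j=p_jn+\hat p_jr$ when $p_j$ is tiny and $u_j\ge 1$, but you never actually use it---only the lower bound $\min(\lambda_j,\mu_j)\ge p_jn$ is needed, and that holds.) Matching the random length $L$ also streamlines the de-Poissonization: the paper's Lemma~\ref{lem:discrete-ub2} must reconcile sequences of \emph{different} random lengths ($N+R$ versus $M\sim\Poisson(2n+r)$) via a three-step pad/truncate argument, whereas in your setup the ideal middle block already has the same length $L$ as the algorithm's, so a single data-processing step---append a fresh $D^{m-N_1-L}$ block, which on the ideal side yields exactly $D^m$ and on the algorithm side reproduces $Z_m$ since $X_{\text{rest}}$ is conditionally $D^{m-N_1-L}$ given $(X_{N_1},Z_L)$---completes the splice.
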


\begin{proof}
To avoid dependencies between the count of different elements, we first prove our results in a Poissonized setting, and then in lemma \ref{lem:discrete-ub2}, we describe how to use the amplifier for Poissonized setting to get an amplifier for the original multinomial setting. Let $D \in \mathcal{C}$ be an unknown probability distribution over $[k]$, and let $p_i$ denote the probability mass associated with $i \in [k]$. Throughout the proof, we use random variable $X_q$ to denote $q$ independent samples from $D$, where $q$ can also be a random variable. Suppose we are given $N = N_1 + N_2$ independent samples from $D$, denoted by $X_{N_1} \text{ and } X_{N_2}$, where $N_1$ and $N_2$ are drawn from $\poisson(n)$. We show how to amplify them to  $\tilde{M} = N + R$ samples, denoted by $Z_{\tilde{M}}$, such that $D_{TV}(Z_{\tilde{M}}, X_{M})$ is small, where $M \leftarrow \poisson(2n+r)$.

Our amplifying procedure involves estimating the probability of each element using $X_{N_1}$, generating $R$ independent samples using these estimates, and randomly shuffling these samples with $X_{N_2}$.
Let $u_i$ be the count of element $i$ in $X_{N_1}$ and $y_i$ be the count of $i$ in $X_{N_2}$ noting they are both distributed as  $\poisson(np_i)$. The amplification procedure proceeds through the following  steps:
\begin{enumerate}
    \item Estimate the frequency $\hat{p}_i$ of each element using $u_i$, that is,  $\hat{p}_i = \frac{u_i}{n}$.
    \item Draw $\hat{z}_i \leftarrow \poisson(r\hat{p}_i) $ additional samples of element $i$ for all $i \in [k]$.
    \item Append these generated samples to $X_{N_2}$ to get $Z_{N_2 + R}$.
    \item Randomly permute the elements of $Z_{N_2 + R}$, and append them to $X_{N_1}$ to get $Z_{\tilde{M}}$. 
\end{enumerate}

We first show that $Z_{\tilde{M}}$ is close in total variation distance, to $\poisson(2n+r)$ samples generated from $D$. We will prove this by showing that with high probability over the choice of $X_{N_1}$, the distribution of $Z_{N_2+R}$ is close to $\poisson(n+r)$ samples generated from $D$. After this, we can use lemma \ref{lem:coupling} to show that appending $Z_{N_2 + R}$ to the samples in $X_{N_1}$ results in a sequence with low total variation distance to $X_{M}$. Since our amplification procedure randomly permutes the last $N_2 + R$ elements, we can argue this using only the count of each element. Recall $y_i$ is the count of element $i$ in $X_{N_2}$, and $\hat{z}_i$ is the number of additional samples of element $i$ added by our amplification procedure. Let $z_i \leftarrow \poisson(rp_i)$, and let  $v_i=y_i+z_i$ and $\hat{v}_i=y_i+\hat{z}_i$. Here, $v_i$ denotes the count of element $i$ in $\poisson(n+r)$ samples drawn from $D$, and $\hat{v}_i$ denotes the corresponding count in samples generated using our amplification procedure. We use $P_v$ to denote the distribution associated with random variable $v$.

\begin{lemma}\label{lem:discrete1}
For $r\le n\epsilon^{1.5}/(4\sqrt{k})$, with probability $1-\eps$ over the randomness in $\{u_i,i\in [k]\}$, 
\begin{align*}
d_{TV} \left (\prod_{i=1}^{k}{v_i}, \prod_{i=1}^{k}{\hat{v}_i} \right)  \le \eps/2.
\end{align*}
where $\prod$ refers to the product distribution. 
\end{lemma}
\begin{proof}
We partition the support $[k]$ into two sets. Let $S=\{i:p_i\ge \eps/(2nk)\}$ and $S^c= [k]\backslash S$. Let $|S|=k'$. Without loss of generality, assume that $S={\{i:1\le i \le k'\}}$ and $S^c={\{i: k'+1\le i \le k\}}$. We will separately bound the contribution of the variables in the set $S$ and $S^c$  to the total variation distance. For the first set $S$, we will upper bound $\sum_{i=1}^{k'} D_{KL}(v_i\parallel \hat{v}_i)$, and use Pinsker's inequality to then bound the total variation distance. For the second set $S^c$, we will directly bound $\sum_{i=k'+1}^{k} d_{TV}(v_i, \hat{v}_i)$. All our bounds will be with high probability over the randomness in the first set $\{u_i, i \in [k]\}$. 

We first bound the total variation distance for the variables in the first set $S$. Note that because the sum of two Poisson random variables is a Poisson random variable, $v_i$ is distributed as $\poisson(np_i+rp_i)$ and $\hat{v}_i$ is distributed as $\poisson(np_i +ru_i /n)$. We will use the following expression for the KL divergence $D_{KL}(P\parallel Q)$ between two Poisson distributions $P$ and $Q$ with means $\lambda_1$ and $\lambda_2$ respectively---
\begin{align}
    D_{KL}(P\parallel Q) = \lambda_1 \log \left( \frac{\lambda_1}{\lambda_2} \right) + \lambda_2-\lambda_1.
\end{align}
Using this expression, we can write the KL divergence between the distributions of $v_i$ and $\hat{v}_i$ as follows,
\begin{align*}
    D_{KL}({v_i}\parallel {\hat{v}_i}) = p_i (n+r) \log \left( \frac{p_i (n+r)}{p_i n + ru_i/n} \right) + (ru_i/n-rp_i).
\end{align*}
Let $\delta_i = u_i-np_i$. We can rewrite the above expression as follows,
\begin{align*}
    D_{KL}({v_i}\parallel {\hat{v}_i}) &= p_i (n+r) \log \left( \frac{p_i (n+r)}{p_i (n+r) + r\delta_i/n} \right) + r\delta_i/n,\\
     &= p_i (n+r) \log \left( \frac{1}{1 + {r\delta_i}/({np_i(n+r)})} \right) + r\delta_i/n.
\end{align*}
Note that $\log(1+x)\ge x-2x^2$ for $x\ge 0.8$. As $\delta_i\ge -np_i$, therefore ${r\delta_i}/({np_i(n+r)})\ge -0.8$ for $r\le n$. Therefore,
\begin{align}
    p_i (n+r) \log \left( \frac{1}{1 + {r\delta_i}/({np_i(n+r)})} \right) &\le -r\delta_i/n+\frac{2r^2\delta_i^2}{n^2p_i(n+r)},\nonumber\\
\implies    D_{KL}(v_i\parallel \hat{v}_i) &\le  \frac{2r^2\delta_i^2}{n^2p_i(n+r)},\nonumber\\
\implies \sum_{i=1}^{k'} D_{KL}(v_i\parallel \hat{v}_i) &\le  \frac{2r^2}{n^2}\sum_{i=1}^{k'}\frac{\delta_i^2}{np_i}.\label{eq:discrete1}
\end{align}
We will now bound $\sum_{i=1}^{k'}\frac{\delta_i^2}{np_i}$. As a $\poisson(\lambda)$ random variable has variance $\lambda$ and $\delta_i=u_i-np_i$ where $u_i\leftarrow \poisson(np_i)$, therefore,
\begin{align*}
    \E\left[ \sum_{i=1}^{k'}\frac{\delta_i^2}{np_i} \right] = k'. 
\end{align*}
Also, the fourth central moment of a $\poisson(\lambda)$ random variable is $\lambda(1+3\lambda)$, hence 
\begin{align*}
    \Var[\delta_i^2] &= \E \left [\delta_i^4 \right ]-\E \left [\delta_i^2 \right ]^2,\\
    &= np_i(1+3np_i)-(np_i)^2=np_i(1+2np_i),\\
    \implies  \Var\left[ \sum_{i=1}^{k'}\frac{\delta_i^2}{np_i} \right] &= \sum_{i=1}^{k'} \frac{1+2np_i}{np_i}.
\end{align*}
As $p_i \ge {\epsilon}/{(2nk)}$ for $i\in S$ and $k'\le k$, therefore, 
\begin{align*}
    \Var\left[ \sum_{i=1}^{k'}\frac{\delta_i^2}{np_i} \right]  \le 2k^2/\epsilon+2k\le 4k^2/\epsilon.
\end{align*}
Hence by Chebyshev's inequality,
\begin{align}
    \Pr\left[ \sum_{i=1}^{k'}\frac{\delta_i^2}{np_i}-k' \ge 4k/{\epsilon} \right] &\le \epsilon/4,\nonumber\\
\implies \Pr\left[ \sum_{i=1}^{k'}\frac{\delta_i^2}{np_i} \ge 4k/{\epsilon} \right] &\le \epsilon/4.\label{eq:discrete2}
\end{align}
Let $E_1$ be the event that $\sum_{i=1}^{k'}\frac{\delta_i^2}{np_i} \le 4k/{\epsilon}$. By \eqref{eq:discrete2}, $\Pr(E_1)\ge 1-\eps/4$. Conditioned on the event $E_1$ and using \eqref{eq:discrete1}, we can bound the KL divergence as follows,
\begin{align*}
\DKL[\Big]{\prod_{i\in S}{v_i}}{ \prod_{i\in S}{\hat{v}_i} }= \sum_{i=1}^{k'} D_{KL}({v_i}\parallel {\hat{v}_i}) &\le  \frac{8r^2k}{n^2\epsilon}.
\end{align*}
Hence for $r\le n\epsilon^{1.5}/(4\sqrt{k})$ and conditioned on the event $E_1$,
\begin{align*}
 \DKL[\Big]{\prod_{i\in S}{v_i}}{\prod_{i\in S}{\hat{v}_i}} \le \eps^2/2.
 \end{align*}
Hence using Pinsker's inequality, conditioned on the event $E_1$,
\begin{align*}
d_{TV} \left (\prod_{i\in S}{v_i}, \prod_{i\in S}{\hat{v}_i} \right ) \le \eps/2.
\end{align*}
We will now bound the total variation distance for the variables in the set $S^c$. Let $E_2$ be the event that $u_i=0,\; \forall \; i \in S^c$. Note that as $u_i\sim \poisson(np_i)$ where $p_i < \eps/(2nk)$, $u_i=0$ with probability at least $e^{-{\eps}/(2k)}$, hence $\Pr(E_2)\ge  e^{-{\eps}/2}\ge 1-\eps/2$. We now condition on the event $E_2$. Recall that $v_i=y_i+z_i$, where $z_i\sim \poisson(rp_i)$ and $\hat{v}_i=y_i+\hat{z}_i$, where $\hat{z}_i=0$ conditioned on $E_2$. By a coupling argument on $y_i$, the total variation distance between the distributions of $v_i$ and $\hat{v}_i$ equals the total variation distance between the distributions of $z_i$ and $\hat{z}_i$. As $\hat{z}_i=0$, conditioned on the event $E_2$,
\begin{align*}
d_{TV}({v_i}, {\hat{v}_i}) &= \Pr[z_i\ne 0]=1-e^{-rp_i}\le 1-e^{-r\eps/(2nk)}\\
&\le \frac{r\eps}{2nk}\le \frac{\eps}{2k}, \quad \text{as } r\le n.
\end{align*}
Hence conditioned on $E_2$,
\begin{align*}
d_{TV}\left (\prod_{i\in S^c}{v_i}, \prod_{i\in S^c}{\hat{v}_i} \right )\le \sum_{i=k'+1}^{k} d_{TV} \left ({v_i}, {\hat{v}_i} \right ) \le \eps/2.
\end{align*}
Hence conditioned on the events $E_1$ and $E_2$,
\begin{align*}
d_{TV}\left (\prod_{i=1}^{k}{v_i}, \prod_{i=1}^{k}{\hat{v}_i} \right ) \le  d_{TV} \left (\prod_{i\in S}{v_i}, \prod_{i\in S}{\hat{v}_i} \right ) + d_{TV} \left (\prod_{i\in S^c}{v_i}, \prod_{i\in S^c}{\hat{v}_i} \right ) \le \eps.
\end{align*}
As $\Pr(E_1) \ge 1-\eps/4 $ and $\Pr(E_2) \ge 1-\eps/2$, by a union bound $\Pr(E_1 \cup E_2)\ge 1-\eps $. Hence with probability $1-\eps$ over the randomness in $\{u_i,i\in [k]\}$,
\begin{align*}
d_{TV} \left (\prod_{i=1}^{k}{v_i}, \prod_{i=1}^{k}{\hat{v}_i} \right)  \le \eps.
\end{align*}
\end{proof}
Lemma \ref{lem:discrete1} says that with high probability over the first $N_1$ samples, the $N_2 + R$ samples are close in total variation distance to $\Poisson(n+r)$ samples drawn from $D$. Using lemma \ref{lem:discrete1} and lemma \ref{lem:coupling}, we can conclude that for $r\le n\epsilon^{1.5}/(4\sqrt{k})$, $D_{TV}(X_{M}, Z_{\tilde{M}}) \leq \epsilon + \epsilon/2 = 3\epsilon/2$.

Next, we show how to use the above amplification procedure to amplify samples in the non-Poissonized setting. Given $N = N_1 + N_2$ samples from $D$, we have shown how to amplify them to get ${\tilde{M}} = N+R$ samples. Given such an amplifier as a black box, and $4n$ samples from $D$, one can use the first $N$ samples to generate $M$ samples. Then append these $M$ samples with the remaining $4n - N$ samples to get an amplifier in our original non-Poissonized setting.

\begin{lemma}\label{lem:discrete-ub2}
Let $N = N_1 + N_2$ where $N_1, N_2 \leftarrow \poisson(n)$, and let $M \leftarrow \poisson(2n+r)$.
 Suppose we are given an $(N,M)$ amplifier $f$ (as described above) satisfying $D_{TV}(f(X_N), X_{M}) \leq \frac{3\eps}{2}$, for all $D \in \mathcal C$. Then there exists an amplifier $f': \mathcal [k]^{4n} \rightarrow \mathcal [k]^{4n+\frac{r}{8}}$, such that  $D_{TV}(f'(X_{4n}), X_{4n+\frac{r}{8}}) \leq \frac{5 \epsilon}{2}$, for $\epsilon \geq 2e^{-\frac{n}{20}} + e^{-\frac{25r}{88}}$, and for $r\le n\epsilon^{1.5}/(4\sqrt{k})$.
\end{lemma}

\begin{proof}
We divide the proof into three steps:
\begin{itemize}
    \item \textbf{Step 1: } $f$ takes as input $X_{N_1}$ and $X_{N_2}$, samples of size $N_1$ and $N_2$ drawn from $D$. To simulate these samples, we use the 4n samples available to us from $D$. We draw $N_1', N_2' \leftarrow \poisson(n)$, and let $N' = N_1'+N_2'$. If $N' \leq 4n$, we set $X_{N_1'} = (x_1, x_2, \dots, x_{N_1'})$ and $X_{N_2'} = (x_{N_1'+1},x_{N_1'+2}, \dots, x_{N_2'})$. Otherwise, we set $X_{N_1'}=\underbrace{(x_1, x_1, \dots, x_1)}_{N_1' \text{ times}}$, and $X_{N_2'}=\underbrace{(x_1, x_1, \dots, x_1)}_{N_2' \text{ times}}$, but this happens with very small probability leading to small total variation distance between $f(X_{N_1}, X_{N_2})$ and $f(X_{N_1'}, X_{N_2'})$, and by triangle inequality, small TV distance between $f(X_{N_1'}, X_{N_2'})$ and $X_M$. We denote $(X_{N_1}, X_{N_2})$ by $X_N$ and $(X_{N_1'}, X_{N_2'})$ by $X_{N'}$.
    
    \item \textbf{Step 2: } We would like to finally output $\frac{r}{8}$ more samples. Let us denote the number of samples in $f(X_{N'})$ by $M'$. If $M' < N'+\frac{r}{8}$, we append $N'+\frac{r}{8} - M'$  arbitrary samples to it (say $x_1$) so that the total sample size is equal to $N' + \frac{r}{8}$.  If $M' \geq N' + \frac{r}{8}$, we don't do anything in this step. Let $t_1(f(X_{N'}))$ denote the samples outputted in this step. Since the number of new samples added by $f$ is roughly distributed as $\poisson(r)$, the probability that the number of new samples  is less than $r/8$ is small, leading to small $TV$ distance between $t_1(f(X_{N'}))$ and $f(X_{N'})$, and by triangle inequality, small TV distance between  $t_1(f(X_{N'}))$ and $X_M$.
    \item \textbf{Step 3: } Let $M_1'$ denote the number of samples in $t_1(f(X_{N'}))$, and let $Q_1' = 4n+\frac{r}{8} - M_1'$ denote the number of extra samples needed to output $4n+\frac{r}{8}$ samples in total. If $Q_1' \geq 0$, we append $Q_1'$ i.i.d. samples from $D$ to $t_1(f(X_{N'}))$, and if $Q_1' < 0$, we remove last $\lvert Q_1' \rvert$ samples from $t_1(f(X_{N'}))$. We use  $t_2(t_1(f(X_{N'})))$ to denote the output of this step. Step 2 ensures $M_1' \geq N' +\frac{r}{8}$, which implies $Q_1' \leq 4n - N'$. Let $X_{4n-N'} = (x_{N'+1}, x_{N'+2}, \dots, x_{4n})$ denote the leftover samples in $X_{4n}$ after removing the first $N'$ samples. When $Q_1' \geq 0$, we use the first $Q_1'$ samples from  $X_{4n-N'}$ to simulate i.i.d. samples from $D$, that is, 
    $t_2(t_1(f(X_{N'}))) = \text{append}(t_1(f(X_{N'})), (x_{N'+1}, x_{N'+2}, \dots, x_{N'+Q_1'}))$. $t_2(t_1(f(X_{N'})))$ is the final output of our amplifier $f'$.

    Similarly, let $Q_1 = 4n+\frac{r}{8} - M$ denote the number of extra samples needed to be appended to $X_M$ to output $4n+\frac{r}{8}$ samples in total. If $Q_1 \geq 0$, $t_2(X_M)$ correspond to appending $Q_1$ samples from $D$ to $X_M$, and otherwise, it corresponds to removing last $\lvert Q_1 \rvert$ samples from $X_M$. Since applying the same transformation to two random variables can't increase their total variation distance, and from step 2, we know that $D_{TV}(t_1(f(X_{N'})), X_M)$ is small, we get $D_{TV}(t_2(t_1(f(X_{N'}))), t_2(X_M))$ is small. 
    
    As $t_2(X_M)$ corresponds to $4n + \frac{r}{8}$ i.i.d. samples from $D$,  $D_{TV} (X_{4n+\frac{r}{8}},t_2(X_M) ) = 0$. Using triangle inequality, we get $D_{TV}(t_2(t_1(f(X_{N'}))) ,X_{4n+\frac{r}{8}})$ is small which is the desired result.
\end{itemize}
 Next, we prove that the total variation distances involved in each of these steps are small.

\begin{itemize}
    \item \textbf{Step 1: } We first bound $D_{TV}(f(X_N), f(X_{N'}))$.
    \begin{equation*}
    \begin{split}
        D_{TV}(f(X_N), f(X_{N'})) &\leq D_{TV}(X_{N}, X_{N'})\\
        &=\frac{1}{2}\sum_{x} \ \lvert \Pr(X_{N} = x) - \Pr(X_{N'} = x) \rvert\\
        &=\frac{1}{2}\sum_{x} \ \lvert \Pr(X_{N} = x \mid N \leq 4n)\Pr(N \leq 4n) - \Pr(X_{N'} = x \mid N' \leq 4n)\Pr(N' \leq 4n) \\
        &+ \Pr(X_{N} = x \mid N > 4n)\Pr(N > 4n) -
        \Pr(X_{N'} = x \mid N' > 4n)\Pr(N' > 4n) \rvert
    \end{split}
\end{equation*}
where the first inequality holds as applying the same transformation to two random variables can't increase their total variation distance. Now, note that $X_N$ and $X_{N'}$   have the same distribution conditioned on $N \leq 4n$ and $ N' \leq 4n$. Also, $\Pr(N \leq 4n) = \Pr(N' \leq 4n)$ and $\Pr(N > 4n) = \Pr(N' > 4n)$, as both $N$ and $N'$ are drawn from $\poisson(2n)$ distribution. This gives us
\begin{equation*}
    \begin{split}
        D_{TV}(f(X_N), f(X_{N'}))  
        &=\frac{1}{2}  \sum_{x} \  \Pr(N > 4n) \lvert 
         \Pr(X_{N} = x \mid N > 4n) -
        \Pr(X_{N'} = x \mid N' > 4n) \rvert\\
        &\leq \Pr(N > 4n)
    \end{split}
\end{equation*}
Using the triangle inequality, we get $D_{TV}(X_M, f(X_N')) \leq \Pr(N > 4n) + 3\epsilon/2$. To bound $\Pr(N > 4n)$, we use the following Poisson tail bound \citep{canonne2017short}: for $X \leftarrow \poisson(\lambda)$, 

\begin{equation}\label{eq:poisson-tail}
\Pr[X \geq \lambda + x], \Pr[X \leq \lambda - x] \leq e^{\frac{-x^2}{\lambda + x}}.    
\end{equation}

As $N$ is distributed as $\poisson(2n)$, we get $\Pr(N > 4n) \leq e^{-n}$, which implies $D_{TV}(X_M, f(X_N')) \leq e^{-n} + \frac{3\epsilon}{2}$.
    
    \item \textbf{Step 2: } In this step, we need to show $D_{TV}(t_1(f(X_{N'})), X_M)$ is small. Note that $t_1(f(X_{N'}))$ is equal to $f(X_{N'})$ except when $M' < N'+\frac{r}{8}$. From step 1, we know that $D_{TV}(f(X_{N'}), X_M)$ is small. If we show $D_{TV}(f(X_{N'}), t_1(f(X_{N'})))$ is small, then by triangle inequality, we get $D_{TV}(X_M, t_1(f(X_{N'})))$ is small. Let $M' = N'+R'$ where $R'$ denote the number of new samples added by the amplification procedure $f$ to $X_{N'}$.
\begin{equation*}
    \begin{split}
        &D_{TV}\left(t_1\left(f\left(X_{N'}\right)\right), f\left(X_{N'}\right)\right)\\ 
        &=\frac{1}{2}\sum_{x} \ \lvert \Pr\left(t_1\left(f\left(X_{N'}\right)\right) = x\right) - \Pr\left(f\left(X_{N'}\right) = x\right) \rvert\\
        &= \frac{1}{2}\sum_{x} \ \lvert \Pr\left(R' < \frac{r}{8}\right) \left(\Pr\left(t_1\left(f\left(X_{N'}\right)\right) = x \mid R' < \frac{r}{8}\right)  - \Pr\left(f\left(X_{N'}\right) = x \mid R' < \frac{r}{8}\right)\right)\\
        & + \Pr\left(R' \geq \frac{r}{8}\right)\left(\Pr\left(t_1\left(f\left(X_{N'}\right)\right) = x \mid R' \geq \frac{r}{8}\right)  - \Pr\left(f\left(X_{N'}\right) = x \mid R' \geq \frac{r}{8}\right)\right)\rvert\\
    \end{split}
\end{equation*}
   We know $\Pr\left(t_1\left(f\left(X_{N'}\right)\right) = x \mid R' \geq \frac{r}{8}\right)  = \Pr\left(f\left(X_{N'}\right) = x \mid R' \geq \frac{r}{8}\right)$. This gives 
    \begin{equation*}
    \begin{split}
        &D_{TV}\left(t_1\left(f\left(X_{N'}\right)\right), f\left(X_{N'}\right)\right)\\ 
        &= \frac{1}{2}\sum_{x} \ \lvert \Pr\left(R' < \frac{r}{8}\right) \left(\Pr\left(t_1\left(f\left(X_{N'}\right)\right) = x \mid R' < \frac{r}{8}\right)  - \Pr\left(f\left(X_{N'}\right) = x \mid R' < \frac{r}{8}\right)\right) \rvert\\
        &\leq \Pr\left(R' < \frac{r}{8}\right)
    \end{split}
\end{equation*}
    Now, we need to bound $\Pr\left(R' < \frac{r}{8}\right)$. From the description of $f$, we know that the number of new copies of element $i$ added by $f$ is distributed as $\poisson\left(r \hat{p}_i\right)$. Here, $\hat{p}_i =\frac{u_i}{n}$ where $u_i$ denotes the number of occurrences of element $i$ in $X_{N_1'}$. Since the total number of samples in $X_{N_1'}$ is $N_1'$, we get $\sum_{i=1}^k \hat{p}_i = \frac{\sum_{i=1}^k u_i}{n} = \frac{N_1'}{n}$. Note that $R'$ is equal to the sum of number of new copies of each element, and as the sum of Poisson random variables is Poisson, we get $R'$ is distributed as $\poisson\left(r\frac{N_1'}{n}\right)$. 
    \begin{equation*}
    \begin{split}
        \Pr\left(R' < \frac{r}{8}\right) &= \Pr\left(R' < \frac{r}{8} \mid {N_1'} \geq \frac{3n}{4}\right) \Pr\left({N_1'} \geq \frac{3n}{4}\right) + \Pr\left(R' < \frac{r}{8} \mid {N_1'} < \frac{3n}{4}\right) \Pr\left({N_1'} < \frac{3n}{4}\right) \\
        &\leq \Pr\left(R' < \frac{r}{8} \mid {N_1'} \geq \frac{3n}{4}\right) + \Pr\left({N_1'} < \frac{3n}{4}\right)
    \end{split}
    \end{equation*}
    Using Poisson tail bound (\ref{eq:poisson-tail}), we get 
    \begin{equation*}
        \begin{split}
            \Pr\left(R' < \frac{r}{8} \mid {N_1'} \geq \frac{3n}{4}\right) &\leq exp\left(-\frac{\left(5r/8\right)^2}{3r/4+5r/8}\right) = e^{-25r/88}\\
            \Pr\left(N_1' < \frac{3n}{4}\right) &\leq exp\left(-\frac{\left(n/4\right)^2}{n+n/4}  \right) = e^{-n/20}
        \end{split}
    \end{equation*}
    This gives us $D_{TV}(f(X_{N'}), t_1(f(X_{N'}))) \leq e^{-25r/88} + e^{-n/20}$. By triangle inequality, we get $D_{TV}(X_M, t_1(f(X_{N'}))) \leq \frac{3\epsilon}{2}+  e^{-n}  + e^{-25r/88} + e^{-n/20}$.
    \item \textbf{Step 3: } For this step, we need to show $D_{TV}(t_2(t_1(f(X_{N'}))), t_2(X_M))$ is small. Since applying the same transformation to two random variables doesn't increase their TV distance, we get 
    \begin{equation*}
        \begin{split}
            D_{TV}(t_2(t_1(f(X_{N'}))), t_2(X_M)) &\leq D_{TV}(t_1(f(X_{N'})), X_M)\\
            &\leq \frac{3\epsilon}{2}+  e^{-n} +  + e^{-25r/88} + e^{-n/20}
        \end{split}
    \end{equation*}
    As $D_{TV} (X_{4n+\frac{r}{8}},t_2(X_M) ) = 0$, using triangle inequality, we get 
    \begin{equation*}
        \begin{split}
            D_{TV}(t_2(t_1(f(X_{N'}))),X_{4n+\frac{r}{2}} )
            &\leq \frac{3\epsilon}{2}+  e^{-n} +   e^{-25r/88} + e^{-n/20}
        \end{split}
    \end{equation*}
\end{itemize}
For $\epsilon \geq 2e^{-n/20}+e^{-25r/88}$, this gives us $D_{TV}(f'(X_{4n}),X_{4n+\frac{r}{8}} ) = D_{TV}(t_2(t_1(f(X_{N'}))), X_{4n+\frac{r}{8}} ) \leq \frac{5 \epsilon}{2}$.
\end{proof}
From lemma \ref{lem:discrete-ub2}, we get that for $\epsilon \geq 2e^{-n/20}+e^{-25r/88}$, and for $r \leq n\epsilon^{1.5}/(4\sqrt{k})$, $D_{TV}(f'(X_{4n}), X_{4n+\frac{r}{8}} ) \leq \frac{5 \epsilon}{2}$. We can assume $n$ is at least $\sqrt{k}$, and $r$ is at least $8$, as otherwise the theorem is trivially true. So for $k$ large enough (implying large $n$), we can put $\epsilon = \frac{2}{15}$, to get $D_{TV}(t_2(t_1(f(X_{N'}))), X_{4n+\frac{r}{8}} ) \leq \frac{1}{3}$, which finishes the proof!
\end{proof}

\subsection{Lower Bound}

In this section we show that the above procedure is optimal, up to constant factors for amplifying samples from discrete distributions. The proof is constructive and shows that a simple verifier can distinguish any amplifier when 
{$m \geq c \frac{n}{\sqrt{k}}$ for some fixed $c$.}
The proof relies on the fact that the amplifier cannot add samples beyond the support of the samples it has already seen. When $m$ is sufficiently larger than $n$, we can show there are distributions for which large regions of the support are below the threshold required for the birthday paradox meaning that with high probability every new sample will reveal additional information about the support. The amplifier will not be able to add samples in that region. 

\begin{proposition}\label{prop:disc_lower}
There is a constant $c$, such that for every sufficiently large $k$,  $\mathcal{C}$ does not admit an $\left(n, n+\frac{cn}{\sqrt{k}}\right)$ amplification procedure.
\end{proposition}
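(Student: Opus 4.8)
Fix an arbitrary candidate amplifier $f$; in each regime of $n$ I will exhibit a distribution $D\in\mathcal C$ and a valid verifier $v$ for $D$ against which $f$ is accepted with probability below $2/3$, which by Definition~\ref{def2} rules out an $(n,m)$ amplification procedure for $m=n+cn/\sqrt k$ once the absolute constant $c$ is chosen large enough. \emph{Regime $n\le k/4$.} Take $D=\mathrm{Unif}(S)$ where $S$ is a uniformly random set of $k$ elements inside a huge domain; since $f$ is fixed before $D$ is chosen, with probability $1-o(1)$ over $S$ the amplifier never outputs an element of $S$ that it did not observe in its input (union bound over the $\le m$ elements it could fabricate, each lying in $S$ with probability $\to 0$). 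So $v$ may first check that every output element lies in $S$ — true samples always pass — and, conditioned on this, the number of distinct elements produced by $f$ is at most $U(X_n)$, the number of distinct elements among the inputs. Writing $U(X_t)$ for the number of distinct elements in $t$ i.i.d.\ draws from $\mathrm{Unif}(S)$, a direct computation gives $\E[U(X_m)]-\E[U(X_n)]=k(1-\tfrac1k)^{n}\bigl(1-(1-\tfrac1k)^{m-n}\bigr)\ge c_1(m-n)$ for an absolute constant $c_1>0$, using that $(1-\tfrac1k)^{n}\ge(1-\tfrac1k)^{k/4}$ is bounded below and $1-(1-\tfrac1k)^{m-n}\ge(m-n)/(2k)$ when $m-n\le k$. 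The verifier's second test is: accept iff the output has at least $\tau$ distinct elements, for a threshold $\tau$ strictly between $\E[U(X_n)]$ and $\E[U(X_m)]$.

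For this to separate true from amplified samples I need $U(X_n)$ and $U(X_m)$ to concentrate around their means within a window that is a small constant times $n/\sqrt k$, hence far below the mean gap $c_1(m-n)=c_1\,cn/\sqrt k$. The key point is that $\Var[U(X_n)]=O(n^2/k)$ for $n\le k/4$: summing per-bin variances only gives $O(n)$, but the negative correlations between bin occupancies cancel the leading term and leave $O(n^2/k)$; this can be read off the exact second-moment formula for the number of occupied bins, or equivalently from the conditional-variance bound $\sum_{j\le n}\E[(M_j-M_{j-1})^2\mid X_{<j}]\le n^2/k$ for the Doob martingale $M_j=\E[U(X_n)\mid X_1,\dots,X_j]$, whose increments obey the exact identity $M_j-M_{j-1}=(1-\tfrac1k)^{n-j}\bigl(B_j-(k-V_{j-1})/k\bigr)$ with $V_{j-1}=U(X_1,\dots,X_{j-1})$ and $B_j=\mathbbm{1}[X_j\text{ fresh}]$. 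Chebyshev (or Freedman's inequality applied to $M_j$, which additionally yields exponential tails) then places $U(X_n)$ and $U(X_m)$ — note $m\le 2n$ in the relevant range — within $O(n/\sqrt k)$ of their means except with probability below $1/20$; choosing $c$ large enough that $c_1(m-n)$ exceeds the combined windows leaves room for $\tau$. A set of $m$ i.i.d.\ samples then passes both tests with probability $>3/4$ (so $v$ is valid), while $f(X_n)$ is accepted with probability at most $1/20+o(1)<2/3$, the probabilities being over $S$, $X_n$, and the internal randomness of $f$.

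\emph{Regime $n>k/4$ (reduction to the previous regime).} Suppose $\mathcal C$ had an $(n,m)$ amplifier $f$ with $m-n=cn/\sqrt k$. Let $D'$ put mass $1-\beta$ on a distinguished element and mass $\beta$ spread uniformly over a hidden set $T$ of $k-1$ further elements, with $\beta$ chosen so that $n\beta$ is a fixed small fraction of $k$, e.g.\ $\beta=(k-1)/(5n)$ so that $n\beta=(k-1)/5$. Run $f$ on a sample from $D'^{\,n}$ and keep only the coordinates distinct from the distinguished element; this is a deterministic restriction (so total variation distance does not increase) whose output is within $1/3$ of $\mathrm{Unif}(T)^{\mathrm{Bin}(m,\beta)}$. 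Standard binomial concentration, together with the padding/coupling device of Lemmas~\ref{lem:coupling} and~\ref{lem:discrete-ub2} to make the input and output sizes deterministic, turns this into a valid amplifier taking $\approx n\beta=(k-1)/5$ i.i.d.\ samples of $\mathrm{Unif}[k-1]$ to $\approx n\beta+(m-n)\beta$ of them. Crucially the factors of $n$ cancel: $(m-n)\beta=\tfrac{cn}{\sqrt k}\cdot\tfrac{k-1}{5n}=\Theta(c\sqrt k)$, so we obtain a $\bigl(\Theta(k),\,\Theta(k)+\Theta(c)\sqrt{k-1}\bigr)$ amplifier for $\mathrm{Unif}[k-1]$ with input size below $(k-1)/4$, contradicting the first regime (applied with support size $k-1$) once $c$ is a sufficiently large absolute constant.

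\textbf{Main obstacle.} The crux is the concentration estimate: in this parameter range the expected distinct-count gap and the typical fluctuations of the distinct count are \emph{both} of order $n/\sqrt k$, so a crude bounded-differences (Azuma) bound — which only controls fluctuations at the hopeless scale $\sqrt n\gg n/\sqrt k$ when $n\le k/4$ — is not enough; one must exploit the cancellation making $\Var[U(X_n)]=O(n^2/k)$ rather than $O(n)$, and then push the amplification constant $c$ high enough that the $\Theta(c)\,n/\sqrt k$ mean gap beats the (controllably small) $\Theta(n/\sqrt k)$ concentration window. The secondary subtlety is the ``$f$ cannot enlarge the support'' step, which is exactly why the support must be hidden inside a large domain rather than being fixed and known to $f$.
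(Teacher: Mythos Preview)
Your proposal is correct and follows essentially the same approach as the paper: the hidden-support uniform distribution with the distinct-count verifier in the regime $n\le k/4$, Doob-martingale concentration with predictable variance $O(n^2/k)$ (the paper invokes a Freedman-type bound where you mention Chebyshev/Freedman), and for $n>k/4$ the reduction via a spike-plus-uniform distribution with spike mass $1-\Theta(k/n)$. Your explicit identity for the martingale increments and your articulation of why a bounded-differences (Azuma) bound at scale $\sqrt n$ is insufficient make the argument cleaner than the paper's somewhat terse write-up, but the ideas are the same.
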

The proposition follows by constructing a verifier and class of discrete distributions over $k$ elements, $\mathcal C$ with the following property: for a universal constant $c$ and $p \leftarrow Uniform[\mathcal C]$, the verifier can detect \emph{any} $(n, n+ \frac{cn}{\sqrt d})$ amplifier with sufficiently high probability.


Before we prove Proposition \ref{prop:disc_lower}, we introduce some additional notation and a basic martingale inequality. Let $C^k$ be the  set of discrete uniform distributions over $k$ integers in $0,\dots,8k$. Let $C^k_l$ be the set of discrete distributions with mass $1-l$ on one element and uniform mass over $k-1$ remaining integers in 
{$0, \dots, 8(k-1) $ }
. We also rely on some martingale inequalities which can be found in \cite{chung2006concentration}.
\begin{fact} \label{fact:martingale}
Let $X$ be the martingale associated with a filter $\mathcal F$ 
{and a sequence of random variables $X_0, X_1, \ldots ,X_n$ such that $X_i = E[X | \mathcal F_i]$, and in particular, $X_0 = E[X]$ and $X_n = X$. Let the martingale satisfy}
\begin{enumerate}
    \item $\mathrm{Var}[X_i \mid \mathcal F_{i-1}] \le \sigma^2_i$ for $1 \le i \le n$
    \item 
    {$0 \le | X_i - X_{i-1} | \le 1$ }
    almost surely, for $1 \le i \le n$.
\end{enumerate}
Then, we have
$$ Pr(X - \mathbb E[X] \ge \lambda ) \le e^{-\frac{\lambda^2}{2 \left ( \sum \sigma_i^2 + \lambda/3  \right )}}.$$
Similarly the following holds (though not simultaneously):
$$ Pr(X - \mathbb E[X]  \le -\lambda ) \le e^{-\frac{\lambda^2}{2 \left ( \sum \sigma_i^2 + \lambda/3  \right )}}.$$
\end{fact}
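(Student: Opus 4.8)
The plan is to establish Fact~\ref{fact:martingale} by the standard exponential-moment (Chernoff--Bernstein) argument applied to the martingale differences; this is exactly the Freedman-type inequality recorded in \cite{chung2006concentration}, and I would include the short derivation for completeness. Write $X = X_n$, let $\mathcal F_0$ be trivial so that $\E[X] = X_0$, and set $Y_i = X_i - X_{i-1}$, so that $X - \E[X] = \sum_{i=1}^n Y_i$. The martingale property gives $\E[Y_i \mid \mathcal F_{i-1}] = 0$ and $\mathrm{Var}[X_i \mid \mathcal F_{i-1}] = \E[Y_i^2 \mid \mathcal F_{i-1}] \le \sigma_i^2$; and since $X_{i-1} = \E[X_i \mid \mathcal F_{i-1}] \in [0,1]$ while $X_i \in [0,1]$, the increments satisfy $|Y_i| \le 1$ almost surely.

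The core step is a conditional moment-generating-function bound: for any $Z$ with $\E[Z]=0$, $|Z|\le 1$, $\E[Z^2]\le \sigma^2$, and any $t\in(0,3)$,
$$ \E\!\left[e^{tZ}\right] \;\le\; \exp\!\left(\frac{t^2\sigma^2/2}{1-t/3}\right). $$
To prove this, expand $e^{tZ} = 1 + tZ + \sum_{k\ge 2} t^k Z^k/k!$ and take expectations; using $|Z|\le 1$ one has $\E[Z^k]\le \E[|Z|^k]\le \E[Z^2]\le\sigma^2$ for every $k\ge 2$, hence $\E[e^{tZ}] \le 1 + \sigma^2(e^t - 1 - t) \le \exp\!\big(\sigma^2(e^t-1-t)\big)$, and then the scalar inequality $e^t - 1 - t \le \tfrac{t^2/2}{1-t/3}$ on $[0,3)$ --- which follows term-by-term from $2\cdot 3^{k-2}\le k!$ for $k\ge 2$, proved by a one-line induction --- gives the claim. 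Applying this conditionally with $Z = Y_i$ and peeling off the conditional expectations one index at a time via the tower property yields
$$ \E\!\left[e^{t\sum_{i=1}^n Y_i}\right] \;\le\; \exp\!\left(\frac{t^2 V/2}{1-t/3}\right), \qquad V := \sum_{i=1}^n \sigma_i^2. $$

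The last step is the Chernoff optimization. By Markov's inequality, $\Pr(X - \E[X] \ge \lambda) \le e^{-t\lambda}\,\E[e^{t\sum Y_i}] \le \exp\!\big(-t\lambda + \tfrac{t^2 V/2}{1-t/3}\big)$ for every $t \in (0,3)$. Choosing $t = \lambda/(V + \lambda/3)$ --- which lies in $(0,3)$ since $V>0$ --- one computes $1 - t/3 = V/(V+\lambda/3)$, and substituting collapses the exponent to exactly $-\lambda^2/\big(2(V+\lambda/3)\big) = -\lambda^2/\big(2(\sum_i \sigma_i^2 + \lambda/3)\big)$, which is the first bound. The lower-tail bound follows by repeating the argument verbatim for the martingale $(-X_i)_i$: its increments $-Y_i$ again satisfy $|-Y_i|\le 1$ and $\mathrm{Var}[-X_i\mid\mathcal F_{i-1}]\le\sigma_i^2$, and the mgf lemma above is symmetric under $Z\mapsto -Z$, so the identical computation applies. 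There is no genuine obstacle in this proof; the only items requiring a moment's care are deducing $|Y_i|\le 1$ from $X_i\in[0,1]$, verifying the elementary inequality $e^t-1-t\le\tfrac{t^2/2}{1-t/3}$, and checking that the optimal $t$ stays below $3$. The parenthetical ``though not simultaneously'' simply records that a two-sided statement would cost an extra factor of $2$ from a union bound.
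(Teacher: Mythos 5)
Your proof is correct, and there is nothing to check it against line by line: the paper does not prove Fact~\ref{fact:martingale} at all, it simply cites it as a known martingale inequality from Chung and Lu's survey \cite{chung2006concentration}. What you have written is, in effect, the standard derivation of that cited Bernstein/Freedman-type bound: the reduction to increments $Y_i$ with $|Y_i|\le 1$ (which does follow from $0\le X_i\le 1$ together with $X_{i-1}=\E[X_i\mid\mathcal F_{i-1}]\in[0,1]$), the conditional mgf estimate $\E[e^{tZ}]\le\exp\bigl(\sigma^2(e^t-1-t)\bigr)\le\exp\bigl(\tfrac{t^2\sigma^2/2}{1-t/3}\bigr)$ via the term-by-term inequality $k!\ge 2\cdot 3^{k-2}$, the tower-property peeling, and the Chernoff optimization at $t=\lambda/(V+\lambda/3)$, which indeed collapses the exponent to $-\lambda^2/\bigl(2(\sum_i\sigma_i^2+\lambda/3)\bigr)$ and stays in $(0,3)$ (the degenerate case $V=0$ forces $X=\E[X]$ a.s., so the bound is trivial there). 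Your treatment of the lower tail by applying the same mgf lemma to $-Y_i$ is also fine, since the lemma only uses $|Z|\le 1$; your reading of ``though not simultaneously'' as the absence of a free two-sided statement is consistent with how the paper uses the fact. The only remark worth making is that your hypothesis $|Y_i|\le 1$ is slightly stronger than needed (the upper tail only needs $Y_i\le 1$, the lower tail only $-Y_i\le 1$), which is precisely why the cited source states the two tails under separate one-sided conditions; under the paper's assumption $0\le X_i\le 1$ both conditions hold, so nothing is lost.
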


Finally we rely on slight generalization of the birthday paradox which can be found in \cite{bellare1994security}.
\begin{fact} \label{fact:bday}
Let $n$ samples be drawn from a uniform distribution over $k$ elements. Then the probability of the samples containing a duplicate is less than $\frac{n^2}{2k}$.
\end{fact}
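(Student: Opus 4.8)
The plan is to prove Fact~\ref{fact:bday} by a straightforward union bound over all unordered pairs of the $n$ samples. Write $x_1,\dots,x_n$ for the $n$ independent draws from the uniform distribution over the $k$ elements, and for $1 \le i < j \le n$ let $A_{ij}$ be the event $\{x_i = x_j\}$. The event that the sample multiset contains a duplicate is exactly $\bigcup_{1 \le i < j \le n} A_{ij}$, so it suffices to bound the probability of this union.

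The key steps are: first, compute the pairwise collision probability. For any fixed pair $i<j$, independence and uniformity give $\Pr[A_{ij}] = \sum_{\ell=1}^{k} \Pr[x_i = \ell]\,\Pr[x_j = \ell] = k\cdot(1/k)^2 = 1/k$. Second, apply the union bound over the $\binom{n}{2}$ pairs:
\[
\Pr\Bigl[\text{duplicate among } x_1,\dots,x_n\Bigr] \;=\; \Pr\Bigl[\bigcup_{i<j} A_{ij}\Bigr] \;\le\; \sum_{i<j}\Pr[A_{ij}] \;=\; \binom{n}{2}\cdot\frac{1}{k} \;=\; \frac{n(n-1)}{2k}.
\]
Third, conclude the stated strict inequality: since $n(n-1) < n^2$ for every integer $n \ge 1$ (and when $n=1$ the left side is $0$ while $n^2/(2k) > 0$), we obtain $\Pr[\text{duplicate}] \le \frac{n(n-1)}{2k} < \frac{n^2}{2k}$.

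There is no real obstacle here; this is the elementary birthday-paradox estimate. The only point to keep in mind is that the statement asks for a \emph{strict} inequality, which is exactly why one retains the sharper $\binom{n}{2}/k = n(n-1)/(2k)$ bound coming out of the union bound and only then loosens $n(n-1)$ to $n^2$. (An alternative route would be to bound the no-collision probability $\prod_{i=0}^{n-1}(1 - i/k)$ from below by $1 - \sum_{i=0}^{n-1} i/k = 1 - n(n-1)/(2k)$, which yields the same estimate; the union-bound presentation above is the cleanest.)
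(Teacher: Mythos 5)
Your proof is correct: the pairwise collision probability is $1/k$, the union bound over $\binom{n}{2}$ pairs gives $n(n-1)/(2k)$, and loosening $n(n-1)<n^2$ (with the trivial $n=1$ case handled separately) yields the stated strict bound. Note that the paper does not prove this fact at all --- it simply cites it to an external reference (\cite{bellare1994security}) --- so your union-bound argument is a perfectly adequate self-contained justification of exactly the estimate the paper uses.
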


The proof proceeds in two parts. First we prove a lemma that shows the desired result for 
{$n \le \frac{k}{4}$ }
. We then show show a class of distributions that allows us to reduce the general case to the result shown in the lemma. 

\begin{lemma} \label{lemma:discrete_base_case}
For sufficiently large $k$, 
{$n \leq \frac{k}{4}$} 
and 
{$m=n  + 30 \frac{n}{\sqrt k} < \frac{k}{2}$ }
the following holds:  

There exists a verifier that for $p \sim Uniform[C^k]$ the following holds true:
\begin{enumerate}
    \item For all $p \in C^k$, it accepts $X_m$ with probability at least $\frac 3 4$ over the randomness in $X_m$.
    \item It rejects $f(X_n)$ with probability at least $\frac 3 4$ for any amplifier $f$ over the randomness in $X_n,p$ and the amplifier.
\end{enumerate}
\end{lemma}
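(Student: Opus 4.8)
The plan is to use a verifier built from a single statistic: the number of distinct symbols appearing in the sample, written $\mathrm{dist}(\cdot)$. For a uniform distribution $p$ over a size-$k$ subset $S\subseteq\{0,\dots,8k\}$, set $\mu_\ell:=\E[\mathrm{dist}(X_\ell)]=k\bigl(1-(1-1/k)^\ell\bigr)$. The verifier, which knows $p$ and hence $S$, accepts $Z_m$ iff (T1) every symbol of $Z_m$ lies in $S$, and (T2) $\mathrm{dist}(Z_m)\ge \mu_m-\kappa$ for a slack $\kappa$ fixed below. For completeness, (T1) holds with probability $1$ on genuine samples, and a direct second-moment computation for the balls-in-bins occupancy process shows $\Var[\mathrm{dist}(X_\ell)]=O(\ell^2/k)$ for $\ell\le k$: the occupancy indicators are pairwise negatively correlated, and $\ell-\mathrm{dist}(X_\ell)$ is dominated by the number of colliding pairs, a sum of pairwise-uncorrelated indicators with mean $\binom\ell2/k$ (cf.\ Fact~\ref{fact:bday}). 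Chebyshev (or Fact~\ref{fact:martingale}) with $\kappa=\Theta(m/\sqrt k)$ then gives $\Pr[\mathrm{dist}(X_m)<\mu_m-\kappa]\le 1/4$, so genuine samples are accepted with probability $\ge 3/4$ for every $p$.

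For soundness, the key point is that an amplifier $f$ depends only on $X_n$ and its internal randomness $\rho$, never on $S$. Conditioned on $X_n$, the posterior on $S$ is uniform over the size-$k$ subsets of $\{0,\dots,8k\}$ containing $\mathrm{seen}(X_n)$, so $S\setminus\mathrm{seen}(X_n)$ is a uniformly random size-$(k-\mathrm{dist}(X_n))$ subset of the $8k+1-\mathrm{dist}(X_n)$ unseen labels, independent of $Z_m=f(X_n,\rho)$. Let $W$ count the distinct symbols of $Z_m$ absent from $X_n$; then $\mathrm{dist}(Z_m)\le\mathrm{dist}(X_n)+W$, and passing (T1) requires all those $W$ symbols to lie in $S$. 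For any fixed set $T$ of $w\ge 1$ unseen labels, $\Pr[T\subseteq S\mid X_n]=\prod_{j=0}^{w-1}\frac{k-\mathrm{dist}(X_n)-j}{8k+1-\mathrm{dist}(X_n)-j}\le (1/7)^{w}\le 1/7$, using $\mathrm{dist}(X_n)\le n< m\le k/4$ to bound every denominator below by $7k$. Averaging over $\rho$, $\Pr[\,Z_m\text{ passes (T1) and }W\ge 1\mid X_n\,]\le 1/7$ for every $X_n$.

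To finish, I would show that passing (T2) forces $W\ge 1$ on a high-probability event for $X_n$. By the same variance bound, $\mathrm{dist}(X_n)\le\mu_n+\kappa'$ with probability $\ge 1-1/12$ for a suitable $\kappa'=\Theta(n/\sqrt k)$, and on that event (T2) forces $W\ge(\mu_m-\mu_n)-\kappa-\kappa'$. The gap is $\mu_m-\mu_n=k(1-1/k)^n\bigl(1-(1-1/k)^{m-n}\bigr)\ge \tfrac78 e^{-1/3}(m-n)=\Theta\bigl(n/\sqrt k\bigr)$, where $n<m\le k/4$ keeps $(1-1/k)^n$ bounded away from $0$ and lets $1-(1-1/k)^{m-n}$ be linearized. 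Since $\kappa$ and $\kappa'$ carry explicitly small constants while $m-n=30\,n/\sqrt k$ carries the large constant $30$, we get $(\mu_m-\mu_n)-\kappa-\kappa'>0$, so $W\ge 1$ (as $W$ is a nonnegative integer). Hence, for any amplifier $f$, $\Pr[f(X_n)\text{ accepted}]\le\Pr[\mathrm{dist}(X_n)>\mu_n+\kappa']+\sup_{X_n}\Pr[\,Z_m\text{ passes (T1) and }W\ge1\mid X_n\,]\le 1/12+1/7<1/4$, i.e.\ $f(X_n)$ is rejected with probability $\ge 3/4$ over $X_n$, $p$, and $\rho$.

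The step I expect to be the main obstacle is the concentration of $\mathrm{dist}(X_\ell)$: generic bounded-difference inequalities only control deviations at scale $\sqrt\ell$, which is far too weak once $\ell$ is polynomially below $k$, so one must actually carry out the occupancy second-moment calculation to get the $O(\ell/\sqrt k)$ standard deviation. A secondary but real nuisance is arranging all the constants so that the completeness slack, the soundness slack, and the $1/7$ term simultaneously fit inside the $1/3$ and $1/4$ budgets, especially in the borderline regime $m=n+1$ with $n=\Theta(\sqrt k)$, where it is only the integrality of $W$ (forcing $W\ge 1$ rather than $W\ge$ a large number) that closes the argument.
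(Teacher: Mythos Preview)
Your proposal is correct and follows essentially the same approach as the paper: both build the verifier around the number of distinct elements, establish concentration of this statistic to within $O(n/\sqrt{k})$ of its mean, use the $\Theta(n/\sqrt{k})$ gap between $\mu_m$ and $\mu_n$, and exploit the size-$8k$ ambient universe so that any unseen symbol the amplifier emits lands in the hidden support with probability at most a small constant. The only technical differences are that the paper gets concentration via a Doob martingale and Fact~\ref{fact:martingale} (and treats $n\le\sqrt{k}/2$ separately by a birthday-paradox argument on a random subsample), whereas you argue uniformly across all $n$ via the occupancy second-moment bound and Chebyshev; your posterior computation for $S\mid X_n$ is also more explicit than the paper's corresponding step.
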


\begin{proof}
First we consider the case when $n \le \frac{\sqrt k}{2}$. Consider the verifier that takes first 
{$n + 1 < \sqrt{\frac k 2} $ }
samples from the given samples and accepts if there are no repeats  and the support is correct. The probability of a duplicate with the real distribution is less than $\frac 1 4$ by fact \ref{fact:bday} so the verifier will accept samples from the true distribution with at least probability $\frac 3 4$.

An amplified set can add repeats of the elements it already saw in which case it will be immediately rejected, or it can add elements outside of the set of elements it saw in which case also, the verifier would catch it with probability at least $\frac 7 8$.
To show this, consider a sample added by the amplifier outside of the seen support. Conditioned on the at most $\frac k 4$ unique samples seen so far (which implies that $\frac 3 4$ of the support is still unseen), the probability, over the choice of $p$, of said sample being in the set is at most 
\begin{equation}
\label{eq:disc_lower_bound_calc1}
    \frac{(3/4)k}{8k -n} \le \frac{(3/4)k}{7.5k} \le \frac 1 8.
\end{equation} 
Hence if the amplified set has any element outside the seen support then it is rejected with probability at least $\frac{7}{8}$. 

We now examine the case when $n > \frac{\sqrt{k}}{2}$. 
For now, let us assume that the amplified set does not contain any other elements other than those present in the original set $X_n$.
The proof proceeds by showing that the amplified set must have significantly more unique elements than the original set. Before we proceed with the details of the proof we define the martingale that is central to the argument.
Consider the scenario where the $n$ samples are drawn in sequence, and let $\mathcal F_i$ denote the filtration corresponding to the $i$-th draw (i.e., information in the first $i$ draws). Let $U_i$ be the indicator that the $i$th sample was previously unseen. Let $U^n = \sum\limits_{i = 1}^n U_i$. Note that $B_i = \mathbb E \left [\sum\limits^n_{j=1} U_j \mid \mathcal F_{i} \right]$ is a Doob martingale with respect to the filtration $\mathcal F_i$ and $B_n=U^n$. Also, $B_i$ has differences bounded by 1 as $U_i$ is an indicator random variable. If $j$ is the count of previously seen elements then 
{$\mathrm{Var} \left [B_i \mid \mathcal F_{i-1} \right ] \le \mathrm{Var}[U_i \mid \mathcal F_{i-1}] \le \frac{(k-j)j}{k^2} $ }
. The variance is upper bounded by $\frac i k \le \frac n k$.

The verifier will accept only if all elements are within the support of the distribution and the number of unique elements is greater than $\mathbb E[U^n] + 7 \frac{n}{\sqrt k}$.

The remainder of the proof will show the following:
\begin{enumerate}
    \item $U^n$ concentrates around its expectation within a $O\left ( \frac{n}{\sqrt k} \right)$ margin for $X_n$ (this shows the amplifier gets too few unique samples to be accepted by the verifier).
    \item The expectation $\E[U^m-U^n]$ increases by at least $\Omega \left (  \frac{n}{\sqrt k} \right )$ from $X_n$ to $X_m$ (which shows the number of unique items is sufficiently different in expectation between $X_n$ and $X_m$).
    \item $U^m$ concentrates around its expectation within a $O\left ( \frac{n}{\sqrt k} \right)$ margin for $X_m$ (this combined with the previous statement shows the verifier accepts real samples with sufficiently high probability).
\end{enumerate}

The upper tail bound follows via Fact \ref{fact:martingale}. Recall that $\frac{n}{\sqrt k} < 4 \frac{n^2}{k}$ since $n > \frac{\sqrt k}{2} $. 
\begin{align}
\begin{split}
\label{eq:disc_lower_bound_calc2}
    \Pr \left(U^n - \mathbb E[U^n] \ge 7\frac{n}{\sqrt k} \right ) &\le \exp \left ( {-\frac{7^2\frac{n^2}{k}}{2 \left ( \sum \sigma_i^2 + \frac{7 n}{3 \sqrt k}  \right )}} \right )\\
    & \le \exp \left ( {-\frac{7^2\frac{n^2}{k}}{2 \left (  \frac{n^2}{k} + \frac{7 n}{3 \sqrt k}  \right )}} \right )\\
    & \le \exp \left ( {-\frac{7^2\frac{n^2}{k}}{2 \left (  \frac{n^2}{k} + 7\frac{4  n^2}{3 k}  \right )}} \right ) \\
   & = \exp \left ( {-\frac{7^2\frac{n^2}{k}}{2 \left (1+ \frac{4}{3} 7 \right) \frac{n^2}{k}}} \right )\\
     & \le \frac 1 8 .
     \end{split}
\end{align} 

This shows that for all $p \in \mathcal{C}^k$, with probability at least $\frac{7}{8}$ over the randomness of $X_n$, the amplifier cannot produce a set with enough distinct elements without including unseen elements to it. For such $X_n$, the amplifier will need to include unseen elements in the output in which case the verifier will catch it with probability at least $\frac{7}{8}$ over the randomness in $p$ (by Equation \ref{eq:disc_lower_bound_calc1}). Together this shows that the verifier will reject the amplified samples with probability at least $\frac{3}{4}$ over the randomness in $X_n$, $p$ and the amplifier.

Next, we need to show that the verifier accepts real samples $X_m$ with probability at least $\frac{3}{4}$.


Let $k$ be sufficiently large that the following conditions hold for both $k$ and $k-1$:
\begin{enumerate}
    \item $n + 30 \frac{n}{\sqrt k} < \frac{k}{2}$,
    \item The samples increased by at most a factor of $2$, that is, $m \leq 2n$.
\end{enumerate}
Now we note that 
$\mathbb E[U^n] $ and $\mathbb E[U^m] $ must differ by at least $\frac{15 n}{\sqrt k}$, since $m < \frac k 2$ implying that every new sample has at least a $\frac 1 2$ probability of being unique:
\begin{equation}
\label{eq:disc_lower_bound_calc3}
    \mathbb E[U^m] - \mathbb E[U^n] \geq \frac{15 n}{\sqrt k}
\end{equation}

Now all the remains to show that the verifier will accept $X_m$ is to show concentration of $U$ within $\frac{8 n}{\sqrt k}$ of its mean. 

Since the number of samples increased by at most a factor of two, the bound on the $\sigma_i^2$ increased by at most a factor of two. This suffices for the lower tail bound on $U$ for $X_m$:

\begin{align}
\label{eq:disc_lower_bound_calc4}
\begin{split}
    \Pr \left(U^m - \mathbb E[U^m] \le -8\frac{n}{\sqrt k} \right ) &\le \exp \left ( {-\frac{8^2\frac{n^2}{k}}{2 \left ( \sum \sigma_i^2 + \frac{8 n}{3 \sqrt k}  \right )}} \right )\\
    & \le \exp \left ( {-\frac{8^2\frac{n^2}{k}}{2 \left (  4\frac{n^2}{k} + \frac{8 n}{3 \sqrt k}  \right )}} \right )\\
    & \le \exp \left ( {-\frac{8^2\frac{n^2}{k}}{2 \left (  4\frac{n^2}{k} + 8\frac{4  n^2}{3 k}  \right )}} \right ) \\
    & = \exp \left ( {-\frac{8^2\frac{n^2}{k}}{2 \left (4+ \frac{4}{3} 8 \right) \frac{n^2}{k}}} \right )\\
     & < \frac 1 8 .
     \end{split}
\end{align}
Thus $X_m$ will have sufficiently many unique elements to be accepted by the verifier with  probability at least $\frac 7 8$. 

\end{proof}

We are now ready to prove Proposition \ref{prop:disc_lower}. We will prove the proposition with $c = 800$.

\begin{proof}
If $n \le \frac k 4$, then Lemma \ref{lemma:discrete_base_case} applies directly. In particular, for $p \in \mathcal C^k$ our verifier in this case will correspond to taking the first $n + 30\frac{n}{\sqrt{k}}$ from the given $n + 800\frac{n}{\sqrt{k}}$, and applying the verifier from Lemma \ref{lemma:discrete_base_case} to them. For $p \notin C^k$, we use the trivial verifier that always accepts.

If $n > \frac{k}{4}$, we use the set of distributions 
{$\mathcal C^k_{\frac {k} {8n}} $ }
with the intention of applying the verifier from Lemma \ref{lemma:discrete_base_case} on samples that land in the uniform region. For $p \notin C^k_{\frac {k} {8n}}$, we use the trivial verifier that always accepts.

{In particular, we show that for $n > \frac{k}{4}$, $m = n + 800 \frac{n}{\sqrt{k}}$, $p \sim Uniform[C^k_{\frac {k} {8n}}]$, there exists a verifier such that the following holds: (i) for all $p \in C^k_{\frac {k} {8n}}$, the verifier accepts $X_m$ with probability at least $\frac{3}{4}$ over the randomness in $X_m$, (ii) the verifier rejects $f(X_n)$ with probability at least $0.7$ for any amplifier $f$ over the randomness in $X_n$, $p$ and the amplifier.}

{
The verifier checks that the samples are within the support of the distribution. 
It also applies the verifier from Lemma \ref{lemma:discrete_base_case}  on the uniform region, checking that the number of unique samples in the uniform region is at least $\mathbb E[U^{n'}] + 7 \frac{n'}{\sqrt {k-1}}$, for $n' = \frac{k}{8} + \sqrt{2k}$ . Here, $U^{n'}$ is a random variable denoting the number of unique samples seen when we draw $n'$ samples i.i.d. from a uniform distribution over $k-1$ elements.}

First note that after $n$ samples, at 
{least $\frac{k}{8} - \sqrt{2k} $ } and at most 
{$\frac{k}{8} + \sqrt{2k} $ }
samples will be in the uniform region with probability at least $\frac{15}{16}$ by a Chebyshev bound. 
Conditioned on this event, the amplified samples will be rejected by the verifier with probability at least $\frac{3}{4}$,
for not having enough unique samples in the uniform region or having samples outside the original support. This can be shown by calculations similar to Equations \ref{eq:disc_lower_bound_calc1} and \ref{eq:disc_lower_bound_calc2}, assuming sufficiently large $k$. 
{Taking the conditioning into account, this shows that 
amplified samples
will be rejected by the verifier with probability at least $\frac{15}{16}*\frac{3}{4} \geq 0.7$.}

Next we show that the verifier will accept real samples with good probability. Note that the expected number of samples to receive in the uniform region for $X_m$ is 
{$\frac{k}{8} + 100\sqrt{k} $ }
. The variance on this quantity is at most 
{$\frac{k}{8} + 100\sqrt{k} $ }
. 
{
An application of Chebyshev's inequality shows that for sufficiently large $k$, with probability at least $\frac {15}{16}$, at least $\frac{k}{8} + 70 \sqrt{k} \geq \frac{k}{8} + \sqrt{2k} + \frac{30}{\sqrt{k-1}} \left( \frac{k}{8} + \sqrt{2k} \right)$ samples will land in the uniform region.  Conditioned on this,  the number of unique samples in the uniform region will be large enough (at least $\mathbb E[U^{n'}] + 7 \frac{n'}{\sqrt {k-1}}$) with probability at least $\frac{7}{8}$. This can be shown by calculations similar to Equations \ref{eq:disc_lower_bound_calc3} and \ref{eq:disc_lower_bound_calc4}, assuming sufficiently large $k$. Taking the conditioning into account, this shows that the verifier will accept real samples with probability at least $\frac{15}{16}*\frac{7}{8} \geq \frac{3}{4}$.}

\end{proof}

\bibliographystyle{plain}
\bibliography{refs}

\end{document}